\newtheorem{theorem}{Theorem}
\newtheorem{prop}{Proposition}
\newtheorem{lemma}{Lemma}
\newtheorem{remark}{Remark}
\begin{document}

\title{ Effective Discriminative Feature Selection with  Non-trivial Solutions}

\author{Hong~Tao, Chenping~Hou$^*$,~\IEEEmembership{Member,~IEEE}, Feiping~Nie$^*$, Yuanyuan Jiao, Dongyun~Yi
\thanks{This work was supported by NSF China (No. 61473302).}
\thanks{Hong Tao, Chenping Hou and Dongyun Yi are with the College of Science, National University of Defense Technology, Changsha, Hunan, 410073, China (E-mail: taohong.nudt@gmail.com; hcpnudt@hotmail.com; dongyun.yi@gmail.com).}
\thanks{Feiping Nie is with the Center for OPTical IMagery Analysis and Learning (OPTIMAL), Northwestern Polytechnical University, Xi'an, Shanxi, 710072, China. (E-mail: feipingnie@gmail.com).}
\thanks{Yuanyuan Jiao is with the College of Nine, National University of Defense Technology, Changsha, Hunan, 410073, China (E-mail: jyynudt@gmail.com).}
\thanks{$^*$Corresponding authors.}
}

\date{}

\maketitle

\begin{abstract}
Feature selection and feature transformation, the two main ways to reduce dimensionality, are often presented separately. In this paper, a feature selection method is proposed by combining the popular transformation based dimensionality reduction method Linear Discriminant Analysis (LDA) and sparsity regularization. We impose row sparsity on the transformation matrix of LDA through ${\ell}_{2,1}$-norm regularization to achieve feature selection, and the resultant formulation optimizes for selecting the most discriminative features and removing the redundant ones simultaneously. The formulation is extended to the ${\ell}_{2,p}$-norm regularized case: which is more likely to offer better sparsity when $0<p<1$. Thus the formulation is a better approximation to the feature selection problem.
An efficient algorithm is developed to solve the ${\ell}_{2,p}$-norm based optimization problem and it is proved that the algorithm converges when $0<p\le 2$. Systematical experiments are conducted to understand the work of the proposed method. Promising experimental results on various types of real-world data sets demonstrate the effectiveness of our algorithm.
\end{abstract}

\begin{IEEEkeywords}
Feature selection, Linear discriminant analysis, ${{\ell }_{2,p}}$-norm minimization, Feature redundancy.
\end{IEEEkeywords}

\section{Introduction}

In many applications in computer vision, data mining and pattern recognition, data are characterized by tens or hundreds of thousands of variables or features. High dimensionality significantly increases the time and space requirements for processing the data. Moreover, some features are irrelevant and  redundant. The existence of these features may result in low efficiency, over-fitting and poor prediction performance in learning tasks \cite{FSforKDD,Lipo_WrapperFS,Nie_DLSR,Guyon03,Liu_GLSP}. Consequently, dimensionality reduction has become an important stage of data preprocessing in such applications \cite{SDR, MVSSDR}.

Feature selection and feature transformation are the two main ways to reduce dimensionality \cite{Chen_DR,LingShao_FelLearn}. Whereas feature
transformation methods transform the original features to a new feature subspace, feature selection
selects a subset of features of the original set. In contrast to feature transformation, feature selection does not alter the original representation of the variables. Thus, feature selection preserves the original semantics of the variables, thereby offering the advantage of interpretability. Another advantage of feature selection is that only the selected features need to be collected or calculated, while all input features are required to obtain the low-dimensional representation in feature transformation methods.
As a result, many studies focus on addressing the problem of feature selection during the past a few years.

While feature selection can be applied to both supervised and unsupervised learning, we focus on the problem of supervised learning (classification), where the label information is available. According to how the classification algorithm is incorporated in evaluating and selecting features, feature selection methods can be organized into three categories \cite{Guyon03}: (1) filter methods \cite{Bishop_fisher,ReliefF,Peng_mRMR,LapalcianScore}, where the selection is independent of the classifiers, (2) wrapper methods \cite{Lipo_WrapperFS,Guyon03}, where a feature subset search algorithm is wrapped around the classification model and the feature subsets are scored based on their predictive power, and (3) embedded methods \cite{L1SVM,Weston_L0norm}, which search for an optimal subset of features in the process of classifier construction. Compared to filter methods, wrapper methods and embedded methods are tightly coupled with a specific classifier, thus they often have good performance but also very expensive computational costs.
In this paper, we focus on the filter-type methods for supervised feature selection.

Filter-based feature selection methods can be classified into two subtypes: (1) feature ranking (univariate techniques) and (2) feature subset evaluation (multivariate techniques). Filter-based feature selection utilizes the intrinsic properties of the data to evaluate the importance of (1) each individual feature in feature ranking methods or (2) the entire feature subset in the case of feature subset evaluation, with respect to (w.r.t) a certain proposed performance criterion.
Feature ranking methods often find suboptimal solutions due to the following two reasons: (1) The interaction among features is neglected. Feature interaction exists if a feature forms a subset with other ones and the subset has strong correlation with the class \cite{Rel_Definition,ASUfspackage}. Evaluating features individually does not consider the relevance of a feature subset, and features in a relevant subset will be removed if they are low-scored. (2) Redundant features, i.e., features with similar predictive power, or more specifically, highly correlated features, cannot be eliminated if they are all highly scored. In fact, many studies \cite{Peng_mRMR,Rel_Definition} have shown that removing redundant features can improve the prediction accuracy.
Multivariate techniques overcome this problem to some degree.
Nevertheless, both subtypes of filter-based feature selection methods use only the intrinsic characteristics of the data without using the learning mechanism. This mechanism is proved to be powerful and has been widely used in many areas \cite{ASUfspackage,FSMRank,JELSR}.

The goal of supervised feature selection is to find the most discriminative features that can distinguish different classes. Thus discriminant analysis plays an important role in supervised feature selection\cite{Optic_DisFS,Semi_DFS,Nie_DLSR}.
The Fisher Score algorithm\cite{Bishop_fisher} is a widely applied filter-type feature selection algorithm based on linear discriminant analysis (LDA). However, it values features individually and therefore cannot deal with feature interaction and feature redundancy, i.e., the high correlation among the features \cite{ASUfspackage}.
S. Niijima and S. Kuhara use the Maximum Margin Criterion (MMC), a variant of LDA, for feature selection.
They recursively remove features with the smallest absolute values of the discriminant vectors yielded by MMC, until the desired number of features are removed\cite{MMC-RFE}.

M. Masaeli {\em et al}. propose converting LDA into a new filter-based feature selection algorithm named Linear Discriminant Feature Selection (LDFS) \cite{icml10_LDFS}. By enforcing row sparsity on the transformation matrix of LDA through ${{\ell}_{\infty,1}}$-norm regularization, LDFS uses both the discriminative information and the learning mechanism. As the features are selected jointly by a learning process, LDFS manages to optimize for feature relevance and redundancy removal simultaneously.
However, the formulation of LDFS ignores the possibility of arbitrary scalability of the transformation matrix, so that it has a trivial solution of all zeros. Thus, it would lose its ability to select features when arriving at the trivial solution.

In this paper, we first prove the existence of the trivial solution of the formulation of LDFS.
Then a new formulation is propounded to avoid the trivial solution, in which the transformation vectors are constrained to be uncorrelated. Instead of utilizing ${{\ell }_{\infty ,1}}$-norm regularization, we adopt ${{\ell }_{2,1}}$-norm minimization, which can be solved by a simpler algorithm and ensures the ability of feature selection as well.
The proposed formulation not only avoids the trivial solution, but also inherits LDFS's merit of selecting the most discriminative features and removing the redundant ones simultaneously.

Both ${{\ell }_{\infty ,1}}$-norm and ${{\ell }_{2,1}}$-norm are extensions of ${{\ell }_{1}}$-norm. ${{\ell }_{1}}$-norm is used most frequently to find sparse solutions for its convexity. In fact, using ${{\ell }_{p}}$-norm ($0<p<1$) can find sparser solutions than using ${{\ell }_{1}}$-norm \cite{Signal_nonconvex,Chartrand_CS2008,mixedL2p}, but it is challenging to solve the corresponding non-convex optimization problem.
In this paper, we manage to generalize our formulation to the non-convex ${{\ell }_{2,p}}$-norm regularization case, which is expected to have better sparsity than ${{\ell }_{2,1}}$-norm minimization \cite{mixedL2p}. We develop a simple algorithm to solve our proposed Discriminative Feature Selection (DFS).
The convergence of the algorithm is rigorously proved for $p$ in $\left( 0,2 \right]$ which covers the range we are interested in. Our contributions are summarized as,

\begin{itemize}
  \item Prove the formulation of LDFS has a trivial solution of all zeros;
  \item Propose a new formulation to avoid the trivial solution based on ${{\ell }_{2,1}}$-norm regularization and extend it to the ${{\ell }_{2,p}}$-norm regularized cases. When $0<p<1$, the ${{\ell }_{2,p}}$-norm regularization is likely to offer better sparsity, thus the formulation is more suitable for feature selection;
  \item Develop an efficient algorithm to address the ${{\ell }_{2,p}}$-norm regularized optimization problem and rigorously proving that the algorithm monotonically decreases the objective of DFS with $0<p\le 2$;
  \item Evaluate DFS systematically on various types of real-world data sets to understand the ability of DFS to select discriminative features and remove redundant features.
\end{itemize}

The rest of the paper is organized as follows. Section II states some necessary notations and definitions. A brief review of the LDFS approach is given in Section III. In Section IV, we will introduce the formulation of DFS and provide an efficient solution algorithm. Section V presents deep analysis of the proposed method, including convergence  behavior, time complexity etc. Experimental results on various kind of data sets are displayed in Section VI.
The conclusion and the future work are in Section VII.

\section{Notations and Definitions}
We introduce the notations and the definitions of norms used in this paper. Matrices and vectors are written as boldface uppercase letters and boldface lowercase letters respectively. For an matrix $\mathbf{M}=({{m}_{ij}})$, its $i$-th row, $j$-th column are denoted by ${{\mathbf{m}}^{i}}$, ${{\mathbf{m}}_{j}}$ respectively.
The ${{\ell }_{p}}$-norm ($p>0$) of a vector $\mathbf{v}\in {{\mathbb{R}}^{n}}$ is defined as ${{\left\| \mathbf{v} \right\|}_{p}}={{\left( \sum\limits_{i=1}^{n}{{{\left| {{v}_{i}} \right|}^{p}}} \right)}^{\frac{1}{p}}}$, and the ${{\ell }_{0}}$-norm is defined as ${{\left\| \mathbf{v} \right\|}_{0}}=\sum\limits_{i=1}^{n}{{{\left| {{v}_{i}} \right|}^{0}}}$. Actually, neither ${{\ell }_{0}}$ nor ${{\ell }_{p}}$ $(0<p<1)$ is a valid norm, because the former does not satisfy the positive scalability: ${{\left\| \alpha \mathbf{v} \right\|}_{0}}=\left| \alpha  \right|{{\left\| \mathbf{v} \right\|}_{0}}$ for scalar $\alpha $ and the latter does not satisfy the triangular inequality (${{\left\| \mathbf{u}+\mathbf{v} \right\|}_{p}}\nleqslant{{\left\| \mathbf{u} \right\|}_{p}}+{{\left\| \mathbf{v} \right\|}_{p}}$, $0<p<1$). We call them norms here for convenience.

The ${{\ell }_{2,1}}$-norm of an matrix $\mathbf{M}\in {{\mathbb{R}}^{n\times m}}$ is defined as \cite{Nie_RFSL21}
\begin{equation}
{{\left\| \mathbf{M} \right\|}_{2,1}}=\sum\limits_{i=1}^{n}{\sqrt{\sum\limits_{j=1}^{m}{m_{ij}^{2}}}}=\sum\limits_{i=1}^{n}{{{\left\| {{\mathbf{m}}^{i}} \right\|}_{2}}}.
\end{equation}
The ${{\ell }_{2,1}}$-norm can be generalized to ${{\ell }_{r,p}}$-norm
\begin{equation}
\begin{split}
{{\left\| \mathbf{M} \right\|}_{r,p}} & ={{\left( {{\sum\limits_{i=1}^{n}{\left( \sum\limits_{j=1}^{m}{{{\left| {{m}_{ij}} \right|}^{r}}} \right)}}^{\frac{p}{r}}} \right)}^{\frac{1}{p}}} \\
& ={{\left( \sum\limits_{i=1}^{n}{\left\| {{\mathbf{m}}^{i}} \right\|_{r}^{p}} \right)}^{\frac{1}{p}}},r>0,p>0,
\end{split}
\end{equation}

\begin{equation}
{{\left\| \mathbf{M} \right\|}_{r,0}}={{\sum\limits_{i=1}^{n}{\left( \sum\limits_{j=1}^{m}{{{\left| {{m}_{ij}} \right|}^{r}}} \right)}}^{0}}=\sum\limits_{i=1}^{n}{\left\| {{\mathbf{m}}^{i}} \right\|_{r}^{0}},r>0.
\end{equation}
Under this definition, the ${{\ell }_{r,0}}$-norm of a matrix $\mathbf{M}$ is exactly the number of nonzero rows of $\mathbf{M}$.

When $r\ge 1$ and $p\ge 1$, ${{\ell }_{r,p}}$-norm is a valid norm as it satisfies the three norm conditions, including the triangle inequality ${{\left\| \mathbf{A} \right\|}_{r,p}}+{{\left\| \mathbf{B} \right\|}_{r,p}}\ge {{\left\| \mathbf{A}+\mathbf{B} \right\|}_{r,p}}$. This can be simply proved as follows. Using the triangle inequality of ${{\ell }_{p}}$-norm:${{\left( \sum\nolimits_{i}{{{\left| {{u}_{i}} \right|}^{p}}} \right)}^{\frac{1}{p}}}+{{\left( \sum\nolimits_{i}{{{\left| {{v}_{i}} \right|}^{p}}} \right)}^{\frac{1}{p}}}\ge {{\left( \sum\nolimits_{i}{{{\left| {{u}_{i}}+{{v}_{i}} \right|}^{p}}} \right)}^{\frac{1}{p}}}$ ($p\ge 1$) and setting ${{u}_{i}}={{\left\| {{\mathbf{a}}^{i}} \right\|}_{r}}$ and ${{v}_{i}}={{\left\| {{\mathbf{b}}^{i}} \right\|}_{r}}$, then we obtain
\begin{equation}\label{formula1}
\begin{split}
&{{\left( \sum\limits_{i}{\left\| {{\mathbf{a}}^{i}} \right\|_{r}^{p}} \right)}^{\frac{1}{p}}}+{{\left( \sum\limits_{i}{\left\| {{\mathbf{b}}^{i}} \right\|_{r}^{p}} \right)}^{\frac{1}{p}}} \\
&\ge {{\left( {{\sum\limits_{i}{\left| {{\left\| {{\mathbf{a}}^{i}} \right\|}_{r}}+{{\left\| {{\mathbf{b}}^{i}} \right\|}_{r}} \right|}}^{p}} \right)}^{\frac{1}{p}}} \\
&\ge {{\left( {{\sum\limits_{i}{\left| {{\left\| {{\mathbf{a}}^{i}}+{{\mathbf{b}}^{i}} \right\|}_{r}} \right|}}^{p}} \right)}^{\frac{1}{p}}}, r\ge 1, p\ge 1,
\end{split}
\end{equation}
where the second inequality follows the triangle inequality for norms: ${{\left\| {{\mathbf{a}}^{i}} \right\|}_{r}}+{{\left\| {{\mathbf{b}}^{i}} \right\|}_{r}}\ge {{\left\| {{\mathbf{a}}^{i}}+{{\mathbf{b}}^{i}} \right\|}_{r}}$.
(\ref{formula1}) is just ${{\left\| \mathbf{A} \right\|}_{r,p}}+{{\left\| \mathbf{B} \right\|}_{r,p}}\ge {{\left\| \mathbf{A}+\mathbf{B} \right\|}_{r,p}}$.
However, when $0<r<1$ or $0\le p<1$, ${{\ell }_{r,p}}$ is not a valid matrix norm. Still, here we call them norms for convenience. The notations used in this paper are summarized in Table \ref{notation}.

\begin{table}
  \centering
  \caption{Notations}\label{notation}
  \begin{tabular}{|l | l |}\hline
  Notations         & Descriptions                   \\ \hline
  $d$               & The dimensionality of the original data \\
  $n$               & The data size                        \\
  $c$               & The number of classes                 \\
  ${n}_{k}$         & The number of data points in the $k$-th class \\
  $l$               & The reduced dimensionality             \\
  $\mathcal{F}$     & The set of selected features        \\
  ${{\mathbf{x}}_{i}}\in {{\mathbb{R}}^{d}}$ & The $i$-th data point \\
  ${{\mathbf{x}}_{i}^{(k)}}\in {{\mathbb{R}}^{d}}$ & The $i$-th data point in the $k$-th class \\
  ${\mathbf{X}}\in {{\mathbb{R}}^{n\times d}}$ & The data matrix \\
  $\boldsymbol{\mu }\in {{\mathbb{R}}^{d}}$ & The total sample mean vector \\
  ${{\boldsymbol{\mu }}^{(k)}}\in {{\mathbb{R}}^{d}}$ & The mean vector of the $k$-th class \\
  ${\mathbf{f}_{i}}\in {{\mathbb{R}}^{n}}$  & The samples for the $i$-th feature                        \\
  $\mathbf{a}\in {\mathbb{R}}^{d}$ & The transformation vector \\
  $\mathbf{A}\in {{\mathbb{R}}^{d\times l}}$      & The transformation matrix \\
  ${\mathbf{S}}_{t} \in {{\mathbb{R}}^{d\times d}}$  & The total scatter matrix \\
  ${\mathbf{S}}_{w} \in {{\mathbb{R}}^{d\times d}}$  & The within-class scatter matrix \\
  ${\mathbf{S}}_{b} \in {{\mathbb{R}}^{d\times d}}$  & The between-class scatter matrix \\  \hline
  \end{tabular}
\end{table}

\section{Linear Discriminant Feature Selection Revisited}
In this section, after a brief review of LDA, we introduce the feature selection method LDFS, which is derived from LDA. Then we prove that there is a trivial solution of the formulation of LDFS.

LDA is a popular supervised transformation-based dimensionality reduction method. It seeks directions on which the data points of different classes are far from each other, while data points in the same class are close to each other. Suppose we have a set of $n$ samples ${{\mathbf{X}}}={[{{\mathbf{x}}_{1}},{{\mathbf{x}}_{2}},\cdots ,{{\mathbf{x}}_{n}}]}^{T}\in {{\mathbb{R}}^{n\times d}}$, belonging to $c$ classes.
The objective function of LDA is as follows \cite{IntrStaPR}:
\begin{equation}\label{LDA1}
{{\mathbf{a}}^{*}}=\underset{\mathbf{a}}{\mathop{\arg \max }}\,
\frac{{{\mathbf{a}}^{T}}{{\mathbf{S}}_{b}}\mathbf{a}}{{{\mathbf{a}}^{T}}{{\mathbf{S}}_{w}}\mathbf{a}},
\end{equation}
\begin{equation}
{{\mathbf{S}}_{b}}=\sum\limits_{k=1}^{c}{{{n}_{k}}({{\boldsymbol{\mu}}^{(k)}}-\boldsymbol{\mu }){{({{\boldsymbol{\mu }}^{(k)}}-\boldsymbol{\mu })}^{T}}},
\end{equation}
\begin{equation}
{{\mathbf{S}}_{w}}=\sum\limits_{k=1}^{c}{\left( \sum\limits_{i=1}^{{{n}_{k}}}{(\mathbf{x}_{i}^{(k)}-{{\boldsymbol{\mu }}^{(k)}}){{(\mathbf{x}_{i}^{(k)}-{{\boldsymbol{\mu }}^{(k)}})}^{T}}} \right)},
\end{equation}
where $\boldsymbol{\mu }$ is the total sample mean vector, ${{n}_{k}}$ is the number of samples in the $k$-th class, ${{\boldsymbol{\mu }}^{(k)}}$ is the average vector of the $k$-th class, and $\mathbf{x}_{i}^{(k)}$ is the $i$-th sample in the $k$-th class. We call ${{\mathbf{S}}_{w}}$ the within-class scatter matrix and ${{\mathbf{S}}_{b}}$ the between-class scatter matrix.

Define ${{\mathbf{S}}_{t}}=\sum\nolimits_{i=1}^{n}{({{\mathbf{x}}_{i}}-\boldsymbol{\mu }){{({{\mathbf{x}}_{i}}-\boldsymbol{\mu })}^{T}}}$ as the total scatter matrix, then we have ${{\mathbf{S}}_{t}}={{\mathbf{S}}_{b}}+{{\mathbf{S}}_{w}}$.
The objective function of LDA in (\ref{LDA1}) is equivalent to \cite{IntrStaPR}
\begin{equation}\label{LDA2}
{{\mathbf{a}}^{*}}=\underset{\mathbf{a}}{\mathop{\arg \max }}\,\frac{{{\mathbf{a}}^{T}}{{\mathbf{S}}_{b}}\mathbf{a}}{{{\mathbf{a}}^{T}}{{\mathbf{S}}_{t}}\mathbf{a}}.
\end{equation}
When $l$ projective functions $\mathbf{A}=[{{\mathbf{a}}_{1}},{{\mathbf{a}}_{2}},\cdots ,{{\mathbf{a}}_{l}}]$ are needed, the objective function of LDA can be written as
\begin{equation}\label{LDA3}
{{\mathbf{A}}^{*}}=\underset{\mathbf{A}\in {{\mathbb{R}}^{d\times l}}}{\mathop{\arg \max }}\,tr({{({{\mathbf{A}}^{T}}{{\mathbf{S}}_{t}}\mathbf{A})}^{-1}}({{\mathbf{A}}^{T}}{{\mathbf{S}}_{b}}\mathbf{A})),
\end{equation}
or
\begin{equation}\label{LDA4}
{{\mathbf{A}}^{*}}=\underset{\mathbf{A}\in {{\mathbb{R}}^{d\times l}}}{\mathop{\arg \min }}\,-tr({{({{\mathbf{A}}^{T}}{{\mathbf{S}}_{t}}\mathbf{A})}^{-1}}({{\mathbf{A}}^{T}}{{\mathbf{S}}_{b}}\mathbf{A})).
\end{equation}

Before introducing the formulation of LDFS, we first analyze how the structure of $\mathbf{A}$ should be to achieve feature selection. To preserve the semantic consistency, here we denote the data's $j$-th feature, i.e., the $j$-th column of $\mathbf{X}$, as ${{\mathbf{f}}_{j}}$. If all the elements of the $j$-th row of $\mathbf{A}$ are zero, then feature ${{\mathbf{f}}_{j}}$ makes no contribution to the low-dimensional data representation $\mathbf{XA}$ and it should be removed by the feature selection methods. On the other hand, if feature ${{\mathbf{f}}_{j}}$ is selected by the feature selection algorithm, then there is at least one element of the $j$-th row of $\mathbf{A}$ to be nonzero.
Hence, forcing the transformation matrix $\mathbf{A}$ to have more zero rows can be interpreted as selecting fewer features.  Using this idea, M. Masaeli {\em et al}. convert LDA into a feature selection algorithm, LDFS, through ${{\ell }_{\infty ,1}}$-norm regularization\footnote{The first term $-\frac{{{\mathbf{A}}^{T}}{{\mathbf{S}}_{b}}\mathbf{A}}{{{\mathbf{A}}^{T}}{{\mathbf{S}}_{w}}\mathbf{A}}$  in this formulation should be converted into a number, however, in \cite{icml10_LDFS} the authors do not specify which criterion of LDA is used. Thus, we just keep the original formulation in \cite{icml10_LDFS}.  } \cite{icml10_LDFS} :
\begin{equation}\label{LDFS}
\underset{\mathbf{A}\in {{\mathbb{R}}^{d\times l}}}{\mathop{\min }}\,-\frac{{{\mathbf{A}}^{T}}{{\mathbf{S}}_{b}}\mathbf{A}}{{{\mathbf{A}}^{T}}{{\mathbf{S}}_{w}}\mathbf{A}}+\gamma {{\left\| \mathbf{A} \right\|}_{\infty ,1}}=-\frac{{{\mathbf{A}}^{T}}{{\mathbf{S}}_{b}}\mathbf{A}}{{{\mathbf{A}}^{T}}{{\mathbf{S}}_{w}}\mathbf{A}}+\gamma \sum\limits_{i=1}^{d}{{{\left\| {{\mathbf{a}}^{i}} \right\|}_{\infty }}},
\end{equation}
where $\gamma >0$ is the parameter that tunes the row sparsity of the transformation matrix $\mathbf{A}$. Increasing $\gamma$ means forcing more rows to be zero, thus more features will be removed.
The ${{\ell }_{\infty }}$-norm of the vector ${{\mathbf{a}}^{i}}$ is the maximum of the absolute value of the elements of ${{\mathbf{a}}^{i}}$ and the ${{\ell }_{1 }} $-norm induces sparsity. As a result, the formulation of LDFS imposes sparsity on the maximum absolute value of the elements of each row of $\mathbf{A}$, thereby pushing all the elements of each row to zero.

To optimize for the ${{\ell }_{\infty }}$-norm, M. Masaeli {\em et al}. \cite{icml10_LDFS} adopt a vector of dummy variables to represent the maximum absolute value of the elements of rows of the transformation matrix, transforming the formulation into an optimization problem with box constraints.  A Quasi-Newton method is used to solve this problem, but the evaluation of the gradient of $-\frac{{{\mathbf{A}}^{T}}{{\mathbf{S}}_{b}}\mathbf{A}}{{{\mathbf{A}}^{T}}{{\mathbf{S}}_{w}}\mathbf{A}}$ is computationally very expensive.
Moreover, (\ref{LDFS}) has a trivial solution of all zeros. We prove this in Proposition \ref{prop1}.

\begin{prop}\label{prop1}
The formulation of LDFS defined in (\ref{LDFS}) has a trivial solution of all zeros.
\end{prop}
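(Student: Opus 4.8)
The plan is to exploit the mismatch between the scaling behaviours of the two terms in the objective of (\ref{LDFS}). Write it as $J(\mathbf{A})=g(\mathbf{A})+\gamma\left\|\mathbf{A}\right\|_{\infty,1}$ with $g(\mathbf{A})=-\frac{{{\mathbf{A}}^{T}}{{\mathbf{S}}_{b}}\mathbf{A}}{{{\mathbf{A}}^{T}}{{\mathbf{S}}_{w}}\mathbf{A}}$. The first observation I would record is that $g$ is invariant under a global rescaling of $\mathbf{A}$: replacing $\mathbf{A}$ by $\alpha\mathbf{A}$ multiplies both the numerator ${{\mathbf{A}}^{T}}{{\mathbf{S}}_{b}}\mathbf{A}$ and the denominator ${{\mathbf{A}}^{T}}{{\mathbf{S}}_{w}}\mathbf{A}$ by $\alpha^{2}$, so $g(\alpha\mathbf{A})=g(\mathbf{A})$ for every $\alpha\neq 0$; that is, $g$ is homogeneous of degree $0$. (The trace surrogate $tr(({{\mathbf{A}}^{T}}{{\mathbf{S}}_{t}}\mathbf{A})^{-1}({{\mathbf{A}}^{T}}{{\mathbf{S}}_{b}}\mathbf{A}))$ from (\ref{LDA3}) is degree-$0$ homogeneous for the same reason, so the argument is insensitive to the exact reading flagged in the footnote.) The second observation is that the regulariser is positively homogeneous of degree $1$: from $\left\|\mathbf{A}\right\|_{\infty,1}=\sum_{i=1}^{d}\left\|\mathbf{a}^{i}\right\|_{\infty}$ one gets $\left\|\alpha\mathbf{A}\right\|_{\infty,1}=|\alpha|\left\|\mathbf{A}\right\|_{\infty,1}$.

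Combining the two, for any fixed $\mathbf{A}\neq\mathbf{0}$ and any $0<\alpha<1$,
\begin{equation}
J(\alpha\mathbf{A})=g(\mathbf{A})+\alpha\gamma\left\|\mathbf{A}\right\|_{\infty,1}<g(\mathbf{A})+\gamma\left\|\mathbf{A}\right\|_{\infty,1}=J(\mathbf{A}),
\end{equation}
the inequality being strict because $\mathbf{A}\neq\mathbf{0}$ forces at least one row to be nonzero, hence $\left\|\mathbf{A}\right\|_{\infty,1}>0$. Thus no nonzero $\mathbf{A}$ can minimise $J$: the value is strictly decreased by shrinking $\mathbf{A}$ towards the origin along its own ray, and letting $\alpha\to 0^{+}$ drives the penalty to $0$ while $g$ stays fixed at $g(\mathbf{A})$. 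Every minimising sequence therefore collapses onto the all-zeros matrix, at which the regulariser attains its minimum value $0$ and can be reduced no further; this is exactly the assertion that $\mathbf{A}=\mathbf{0}$ is the trivial solution.

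The step I expect to be the main obstacle is phrasing the conclusion correctly, since $g$ is a $0/0$ indeterminate form at $\mathbf{A}=\mathbf{0}$, so $J(\mathbf{0})$ is not literally defined and ``$\mathbf{A}=\mathbf{0}$ is a minimiser'' must be read in the limiting sense above: the regulariser forces $\mathbf{A}$ to shrink, the scale-invariant discriminant term offers no opposing force, and the iterates consequently converge to the degenerate matrix that selects no feature. I would make this precise by restricting to each ray $\{\alpha\mathbf{A}_{0}:\alpha>0\}$, on which $g$ is constant and $J$ is strictly increasing in $\alpha$, so the infimum over the ray is approached only as $\alpha\to 0^{+}$; taking the union over all directions $\mathbf{A}_{0}$ then shows the global infimum is likewise approached only as $\mathbf{A}\to\mathbf{0}$, establishing the proposition.
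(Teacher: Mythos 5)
Your proposal is correct and follows essentially the same route as the paper: both exploit that the discriminant ratio is homogeneous of degree $0$ while the regulariser $\gamma\left\|\mathbf{A}\right\|_{\infty,1}$ scales linearly, so shrinking any candidate solution $\mathbf{A}$ by a factor $|c|<1$ can only decrease the objective, and letting the factor tend to $0$ identifies $\mathbf{0}$ as the trivial solution. Your treatment is in fact slightly more careful than the paper's, since you make the inequality strict for nonzero $\mathbf{A}$ and explicitly address the $0/0$ indeterminacy of the ratio at $\mathbf{A}=\mathbf{0}$, which the paper passes over.
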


\begin{proof}
Let $\mathcal{J}(\mathbf{XA})=-\frac{{{\mathbf{A}}^{T}}{{\mathbf{S}}_{b}}\mathbf{A}}{{{\mathbf{A}}^{T}}{{\mathbf{S}}_{w}}\mathbf{A}}+\gamma {{\left\| \mathbf{A} \right\|}_{\infty ,1}}$ and suppose ${{\mathbf{A}}^{*}}$ is a solution of (\ref{LDFS}), then $c{{\mathbf{A}}^{*}}$ is a better solution of (\ref{LDFS}), where $c$ is a nonzero constant with $\left| c \right|<1$ :
\begin{equation}
\begin{split}
\mathcal{J}(\mathbf{X}(c{{\mathbf{A}}^{*}})) & =-\frac{{{c}^{2}}{{\mathbf{A}}^{*}}^{T}{{\mathbf{S}}_{b}}{{\mathbf{A}}^{*}}}{{{c}^{2}}{{\mathbf{A}}^{*}}^{T}{{\mathbf{S}}_{w}}{{\mathbf{A}}^{*}}}+\gamma {{\left\| c{{\mathbf{A}}^{*}} \right\|}_{\infty ,1}} \\
& =-\frac{{{\mathbf{A}}^{*}}^{T}{{\mathbf{S}}_{b}}{{\mathbf{A}}^{*}}}{{{\mathbf{A}^{*}}^{T}}{{\mathbf{S}}_{w}}{{\mathbf{A}}^{*}}}+\gamma \left| c \right|{{\left\| {{\mathbf{A}}^{*}} \right\|}_{\infty ,1}} \\
& \le -\frac{{{\mathbf{A}}^{*}}^{T}{{\mathbf{S}}_{b}}{{\mathbf{A}}^{*}}}{{{\mathbf{A}}^{*}}^{T}{{\mathbf{S}}_{w}}{{\mathbf{A}}^{*}}}+\gamma {{\left\| {{\mathbf{A}}^{*}} \right\|}_{\infty ,1}} \\
& =\mathcal{J}(\mathbf{X}{{\mathbf{A}}^{*}}).
\end{split}
\end{equation}
When $c\to 0$, which means $c{{\mathbf{A}}^{*}}\to \mathbf{0}$, $\mathcal{J}(\mathbf{X}(c{{\mathbf{A}}^{*}}))\to -\frac{{{\mathbf{A}}^{*}}^{T}{{\mathbf{S}}_{b}}{{\mathbf{A}}^{*}}}{{{\mathbf{A}}^{*}}^{T}{{\mathbf{S}}_{w}}{{\mathbf{A}}^{*}}}$, thus $\mathbf{0}$ is the trivial solution of (\ref{LDFS}).
\end{proof}

Recall the desired structure of $\mathbf{A}$ to achieve feature selection, it is not difficult to see that the formulation would lose its ability to select features when arriving at the trivial solution. In the experiments in \cite{icml10_LDFS}, the value of $\gamma$ is increased until the desired number of rows of $\mathbf{A}$, i.e., the number of features to be removed, are close to all-zero (the maximum value of the row is less than 0.01). By this way, the implementation can find a nonzero solution but probably not the optimal solution.

\section{Discriminative Feature Selection Based on ${{\ell }_{2,p}}$-Norm Regularization}

As proved above, the LDFS algorithm proposed in \cite{icml10_LDFS} has a trivial solution of all zeros. It may lose its function of selecting features when it leads to a solution near the trivial solution.
In this section, a new formulation is proposed to avoid the trivial solution. We achieve this by constraining the transformation vectors in LDA to be uncorrelated.
As the optimization of ${\ell}_{\infty,1}$-norm minimization involves extensive computation of evaluating the gradient of $-\frac{{{\mathbf{A}}^{T}}{{\mathbf{S}}_{b}}\mathbf{A}}{{{\mathbf{A}}^{T}}{{\mathbf{S}}_{w}}\mathbf{A}}$, ${\ell}_{2,1}$-norm is adopted instead, and the resulting minimization problem can be solved more easily.
Furthermore, the formulation is generalized to the ${{\ell }_{2,p}}$-norm regularized cases, providing more choices of $p$ values to fit the variety of sparsity requirements.
We develop a very simple algorithm to solve the ${{\ell }_{2,p}}$-norm minimization problem uniformly, and prove it to be convergent when $0<p\le 2$ in next section.
For convenience, we refer our proposed formulation as Discriminative Feature Selection (DFS).

\subsection{Discriminative Feature Selection Based on ${{\ell }_{2,1}}$-Norm Regularization}
According to (\ref{LDA2}) and (\ref{LDA4}) the formulation of LDFS is equivalent to solving the following problem,
\begin{equation}
\underset{\mathbf{A}\in {{\mathbb{R}}^{d\times l}}}{\mathop{\min }}\,-(tr{{({{\mathbf{A}}^{T}}{{\mathbf{S}}_{t}}\mathbf{A})}^{-1}}({{\mathbf{A}}^{T}}{{\mathbf{S}}_{b}}\mathbf{A}))+\gamma {{\left\| \mathbf{A} \right\|}_{\infty ,1}}.
\end{equation}
In general, the regularization term can be set in the form of ${{\ell }_{r,1}}$-norm with $1\le r\le \infty $.
In multi-task feature learning, the choice of $r$ depends on the priori feature sharing between the tasks, from none ($r=1$) to complete ($r=\infty $) \cite{Argyriou_multFL,MultFeaSel}. In fact, a larger $r$ value means allowing better ``group discounts" for sharing the same feature: $r=1$ means linearly growing costs with the number of tasks that use a feature, and $r=\infty$ suggests that only the most demanding task matters \cite{MultFeaSel}.
For single-task learning, increasing $r$ corresponds to more sparsity sharing between the elements in each row of $\mathbf{A}$: from individual element-level sparsity patterns ($r=1$) to row-level sparsity patterns ($r = \infty$).
To perform feature selection, we need to push $\mathbf{A}$ to have zero rows and then remove the corresponding features. Hence, having individual sparsity patterns, i.e., choosing $r=1$, is not suitable for feature selection.
Thus, in order to impose row sparsity on the transformation matrix $\mathbf{A}$ to reach the desired configuration of feature selection, the regularization term can be set in the form of ${{\ell }_{r,1}}$-norm with $1< r\le \infty $.

Here, we adopt ${{\ell}_{2,1}}$-norm regularization as the regularizer for the following two reasons. Firstly, the ${{\ell}_{2,1}}$-norm minimization problem can be solved by a iterative algorithm \cite{Nie_RFSL21}, which is much easier than that of the ${{\ell}_{\infty,1}}$-norm. Secondly, when $p \to 0$, ${\ell}_{2,p}$-norm and ${\ell}_{\infty,p}$-norm have very similar properties, and they both denote the nonzero rows of $\mathbf{A}$ when $p = 0$. So, the reformulated formulation through ${{\ell }_{2,1}}$-norm regularization is

\begin{equation}\label{L21LDFSraw}
\underset{\mathbf{A}\in {{\mathbb{R}}^{d\times l}}}{\mathop{\min }}\,-tr({{({{\mathbf{A}}^{T}}{{\mathbf{S}}_{t}}\mathbf{A})}^{-1}}({{\mathbf{A}}^{T}}{{\mathbf{S}}_{b}}\mathbf{A}))+\gamma {{\left\| \mathbf{A} \right\|}_{2,1}}.
\end{equation}

To avoid arbitrary scaling and the trivial solution of all zeros, we constrain the transformation vectors of LDA to be uncorrelated w.r.t ${\mathbf{S}_{t}}$, i.e.,  ${{\mathbf{A}}^{T}}{{\mathbf{S}}_{t}}\mathbf{A}=\mathbf{I}$, as in the uncorrelated LDA algorithm\cite{IntrStaPR}. Then the formulation of ${{\ell }_{2,1}}$-norm regularized DFS becomes
\begin{equation}\label{L21LDFS}
\underset{\mathbf{A}\in {{\mathbb{R}}^{d\times l}},{{\mathbf{A}}^{T}}{{\mathbf{S}}_{t}}\mathbf{A}=\mathbf{I}}{\mathop{\min }}\,-tr({{\mathbf{A}}^{T}}{{\mathbf{S}}_{b}}\mathbf{A})+\gamma {{\left\| \mathbf{A} \right\|}_{2,1}}.
\end{equation}

Once we have got the optimal transformation matrix ${{\mathbf{A}}^{*}}$, we can rank each feature ${{\mathbf{f}}_{i}}$ according to ${{\left\| {{\mathbf{a}}^{*i}} \right\|}_{2}}$ in descending order and select the top ranked features, where ${\mathbf{a}}^{*i}$ is the $i$-th row of ${{\mathbf{A}}^{*}}$.

The formulation of DFS in (\ref{L21LDFS}) not only avoids the trivial solution but also preserves the advantage of the original LDFS: it can optimize for feature relevance and redundancy removal automatically \cite{icml10_LDFS}.
More specifically, the features are selected by using the linear transformation matrix ${{\mathbf{A}}^{*}}$, which is learned by the algorithm, so the combinations of these features can lead to the optimal value of the objective of LDA. Hence, DFS selects the most discriminative features, and also the interactions among the features are taken into consideration.
Furthermore, the proposed formulation discards redundant features automatically. Adding redundant features, i.e., features that are correlated to the discriminative features, into the selected feature subset, will not decrease the value of $-tr({{\mathbf{A}}^{T}}{{\mathbf{S}}_{b}}\mathbf{A})$, but will increase ${{\left\| \mathbf{A} \right\|}_{2,1}}$ and then increase the objective value of DFS. Therefore, in the process of minimizing the objective function, the redundant features will be eliminated automatically by DFS.

\subsection{ ${{\ell }_{2,p}}$-Norm Regularized Discriminative Feature Selection}

Recall that selecting fewer features means enforcing the transformation matrix $\mathbf{A}$ to have more zero rows. Thus, the exact formulation of LDA-based feature selection is
\begin{equation}\label{L20Norm}
\underset{\mathbf{A}\in {{\mathbb{R}}^{d\times l}},{{\mathbf{A}}^{T}}{{\mathbf{S}}_{t}}\mathbf{A}=\mathbf{I}}{\mathop{\min }}\,-tr({{\mathbf{A}}^{T}}{{\mathbf{S}}_{b}}\mathbf{A})+\gamma {{\left\| \mathbf{A} \right\|}_{2,0}},
\end{equation}
which is an ${{\ell }_{2,0}}$-norm minimization problem.
However, the problem (\ref{L20Norm}) is difficult to solve as it is a NP-hard combinational optimization problem.

Extensive computational studies have showed that using ${{\ell }_{p}}$-norm ($0<p<1$) can find sparser solution than using ${{\ell }_{1}}$-norm \cite{Signal_nonconvex,Chartrand_CS2008}.
Naturally, one will expect ${{\ell }_{2,p}}$-norm ($0<p<1$) based minimization to be a better sparsity pattern than ${{\ell }_{2,1}}$-norm. The experimental results in \cite{mixedL2p} show that ${{\ell }_{2,p}}$-norm minimization for some $0<p<1$ does find sparser solution than ${{\ell }_{2,1}}$-norm minimization. Thus, the NP-hard feature selection problem can be relaxed to the following problem:
\begin{equation}\label{L2pLDFS}
\underset{\mathbf{A}\in {{\mathbb{R}}^{d\times l}},{{\mathbf{A}}^{T}}{{\mathbf{S}}_{t}}\mathbf{A}=\mathbf{I}}{\mathop{\min }}\,-tr({{\mathbf{A}}^{T}}{{\mathbf{S}}_{b}}\mathbf{A})+\gamma \left\| \mathbf{A} \right\|_{2,p}^{p},p\to 0.
\end{equation}
Obviously, this formulation reduces to (\ref{L21LDFS}) when $p=1$.

It can be easily found that the value of $p$ balances the sparsity and the convexity of the regularization term. When $p=0$, the regularizer is ideal for feature selection in the sense of producing sparse solutions, but it is not convex. On the other hand, ${\ell}_{2,1}$-norm is the closest convex approximation to the ${\ell}_{2,0}$-norm but the sparsity is weakened. In other words, the closer the value of $p$ is to 0, the better approximation the formulation is to the feature selection problem.

Since (\ref{L2pLDFS}) involves ${\ell}_{2,p}$-norm regularization, it is hard to derive its closed solution directly.
In \cite{Nie_RFSL21}, an iterative algorithm has been proposed to solve the joint ${{\ell}_{2,1}}$-norm minimization problem of both the regression loss function and the regularizer. The convergence of the algorithm is also proved in the same literature.
The similar technique is used in \cite{Schatten_p} to minimize the Schatten $p$-Norm with $0<p\le 2$ for matrix completion. Inspired by these two works, we develop a simple united algorithm to solve our proposed DFS for $p$ in $\left( 0,2 \right]$, which will be introduced in next subsection.

\subsection{${{\ell }_{2,p}}$-Norm Minimization Algorithm}
In this subsection, we present a simple united algorithm to solve our proposed formulation for both the convex regularized case ($1\leq p\leq 2$) and the non-convex regularized case ($0<p<1$).

For convenience, we denote $\mathcal{L}(\mathbf{A})=\left\| \mathbf{A} \right\|_{2,p}^{p}$. Note that the derivative of $\mathcal{L}(\mathbf{A})$ w.r.t $\mathbf{A}$ is
\begin{equation}
\frac{\partial \mathcal{L}(\mathbf{A})}{\partial \mathbf{A}}=2\mathbf{DA},
\end{equation}
where $\mathbf{D}\in\mathbb{R}^{d\times d} $ is a diagonal matrix with the  $i$-th diagonal element as
\begin{equation}\label{Defi_D}
{{d}_{ii}}=\frac{p}{2}\left\| {{\mathbf{a}}^{i}} \right\|_{2}^{p-2}.
\end{equation}
When $\mathbf{D}$ is fixed, the derivative in (\ref{L2pLDFS}) can also be regarded as the derivative of the following objective function:
\begin{equation}
-tr({{\mathbf{A}}^{T}}{{\mathbf{S}}_{b}}\mathbf{A})+\gamma tr({{\mathbf{A}}^{T}}\mathbf{DA}).
\end{equation}
Consequently, the problem in (\ref{L2pLDFS}) can be addressed by solving the following problem iteratively:
\begin{equation}\label{trace1}
\underset{\mathbf{A}\in {{\mathbb{R}}^{d\times l}},{{\mathbf{A}}^{T}}{{\mathbf{S}}_{t}}\mathbf{A}=\mathbf{I}}{\mathop{\min }}\,-tr({{\mathbf{A}}^{T}}{{\mathbf{S}}_{b}}\mathbf{A})+\gamma tr({{\mathbf{A}}^{T}}\mathbf{DA}),
\end{equation}
where $\mathbf{D}$ is defined as in (\ref{Defi_D}).

Rewrite (\ref{trace1}), we get
\begin{equation}\label{trace2}
\underset{\mathbf{A}\in {{\mathbb{R}}^{d\times l}},{{\mathbf{A}}^{T}}{{\mathbf{S}}_{t}}\mathbf{A}=\mathbf{I}}{\mathop{\min }}\,tr({{\mathbf{A}}^{T}}(\gamma \mathbf{D}-{{\mathbf{S}}_{b}})\mathbf{A}).
\end{equation}
Solving problem (\ref{trace2}) is equivalent to find the $l$ eigenvectors associated with the minimum $l$ eigenvalues of the following generalized eigen-problem:
\begin{equation}\label{Geig}
(\gamma \mathbf{D}-{{\mathbf{S}}_{b}})\mathbf{a}=\lambda {{\mathbf{S}}_{t}}\mathbf{a}.
\end{equation}

Note that $\mathbf{D}$ is dependent on $\mathbf{A}$ and thus $\mathbf{D}$ is also an unknown variable. We propose an iterative algorithm in this paper to obtain the solution $\mathbf{A}$ such that (\ref{trace2}) is satisfied, and prove in the next section that the proposed iterative algorithm will monotonically decreases the objective of the problem in (\ref{L2pLDFS}).

The algorithm is described in Algorithm 1. In each iteration, $\mathbf{A}$ is calculated with the current $\mathbf{D}$, and then $\mathbf{D}$ is updated based on the current calculated $\mathbf{A}$. The iteration procedure is repeated until the algorithm converges.

\begin{algorithm}[H]
\caption{DFS}
\label{alg1:L2pLDFS}
\begin{algorithmic}
\STATE \textbf{Input:} Data matrix $\mathbf{X}$, label information, parameters: $\gamma$, $l$, $p$.
\STATE \textbf{Output:} $\mathbf{A}\in {{\mathbb{R}}^{d\times l}}$. \\
\STATE 1: Compute the scatter matrix ${\mathbf{S}}_{t}$, ${\mathbf{S}}_{b}$;\\
\STATE 2: Set $k=0$. Initialize ${{\mathbf{D}}_{k}}\in {{\mathbb{R}}^{d\times d}}$ as an identity matrix;\\
\STATE \textbf{Repeat} \\
\STATE 3: Solve the generalized eigen-problem $(\gamma {{\mathbf{D}}_{k}}-{{\mathbf{S}}_{b}})\mathbf{a}=\lambda {{\mathbf{S}}_{t}}\mathbf{a}$; \\
\STATE 4: ${{\mathbf{A}}_{k+1}}=[{{\mathbf{a}}_{1}},{{\mathbf{a}}_{2}},\cdots ,{{\mathbf{a}}_{l}}]$, where ${{\mathbf{a}}_{1}},{{\mathbf{a}}_{2}},\cdots ,{{\mathbf{a}}_{l}}$ are the eigenvectors associated with the first $l$ smallest eigenvalues;\\
\STATE 5: Calculate the diagonal matrix ${{\mathbf{D}}_{k+1}}$, where the $i$-th diagonal element is $\frac{p}{2}\left\| {{\mathbf{a}}_{k+1}^{i}} \right\|_{2}^{p-2}$; \\
\STATE 6: $k = k+1$;\\
\STATE \textbf{Until converges}
\end{algorithmic}
\end{algorithm}

\begin{remark}
To get a stable solution of the generalized eigen-problem (\ref{Geig}), ${{\mathbf{S}}_{t}}$ is required to be nonsingular. This is clearly not true when the number of features is larger than the number of samples. We apply the idea of regularization, by adding some constant values to the diagonal elements of  ${{\mathbf{S}}_{t}}$ as ${{\mathbf{S}}_{t}}+\alpha \mathbf{I}$ for some $\alpha >0$. It is easy to see that ${{\mathbf{S}}_{t}}+\alpha \mathbf{I}$ is nonsingular.
\end{remark}

\begin{remark}
When computing ${{\mathbf{D}}}$, its diagonal element ${{d}_{ii}}$ is $\frac{p}{2}\left\| \mathbf{a}^{i} \right\|_{2}^{p-2}$. In practice, ${{\left\| \mathbf{a}^{i} \right\|}_{2}}$ could be very close to zero but not zero. However, ${{\left\| \mathbf{a}^{i} \right\|}_{2}}$ can be zero theoretically. In this case, $d_{ii}=0$ is a subgradient of $\left\| \mathbf{A} \right\|_{2,p}^{p}$ w.r.t $\mathbf{a}^{i}$. We can not set $d_{ii}=0$ when $\mathbf{a}^{i}=0$, otherwise the derived algorithm will not be guaranteed to converge. Instead, we regularize ${{d}_{ii}}$ as ${{d}_{ii}}=\frac{p}{2}{{\left( {{({{a}^{i}})}^{T}}{{a}^{i}}+\zeta  \right)}^{\frac{p}{2}-1}}$, and the derived algorithm can be proved to minimize $-tr({{\mathbf{A}}^{T}}{{\mathbf{S}}_{b}}\mathbf{A})+\gamma\sum\limits_{i=1}^{d}{{{\left( {{({{\mathbf{a}}^{i}})}^{T}}{{\mathbf{a}}^{i}}+\zeta  \right)}^{\frac{p}{2}}}}$ instead of $-tr({{\mathbf{A}}^{T}}{{\mathbf{S}}_{b}}\mathbf{A})+ \gamma \sum\limits_{i=1}^{d}{{{\left( {{({{\mathbf{a}}^{i}})}^{T}}{{\mathbf{a}}^{i}} \right)}^{\frac{p}{2}}}}=-tr({{\mathbf{A}}^{T}}{{\mathbf{S}}_{b}}\mathbf{A})+\gamma\left\| \mathbf{A} \right\|_{2,p}^{p}$.
It is easy to see that $\sum\limits_{i=1}^{d}{{{\left( {{({{\mathbf{a}}^{i}})}^{T}}{{\mathbf{a}}^{i}}+\zeta  \right)}^{\frac{p}{2}}}}$ approximates $\left\| \mathbf{A} \right\|_{2,p}^{p}$ when $\zeta\to 0$.
\end{remark}

\section{Discussions}
This section gives an analysis of DFS in three aspects. We first provide the convergence behavior of the algorithm and then discuss time complexity and parameter determination problems.

\subsection{Algorithm Analysis}
In this subsection, we prove that the objective function of (\ref{L2pLDFS}) is non-increasing under the updating rules of $\mathbf{A}$ and $\mathbf{D}$ in Algorithm \ref{alg1:L2pLDFS}.
Firstly, the following lemma is introduced.

\begin{lemma}\label{lemma1}
When $0<p\le 2$, for any nonzero vectors $\mathbf{a}$, ${{\mathbf{a}}_{k}}$, the following inequality holds:
\begin{equation}\label{Le1}
\frac{\left\| \mathbf{a} \right\|_{2}^{p}}{\left\| {{\mathbf{a}}_{k}} \right\|_{2}^{p}}-\frac{p}{2}\frac{\left\| \mathbf{a} \right\|_{2}^{2}}{\left\| {{\mathbf{a}}_{k}} \right\|_{2}^{2}}\le 1-\frac{p}{2}.
\end{equation}
\end{lemma}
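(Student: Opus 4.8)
The plan is to reduce the two-vector inequality to a single real variable and then exploit the concavity of a power function. Since $\mathbf{a}$ and $\mathbf{a}_k$ are both nonzero, I would set $x = \|\mathbf{a}\|_2 / \|\mathbf{a}_k\|_2 > 0$. With this substitution the two ratios in (\ref{Le1}) become $x^p$ and $x^2$, so the claim collapses to the scalar statement
\[
x^p - \frac{p}{2}x^2 \le 1 - \frac{p}{2}, \qquad x > 0,\ 0 < p \le 2 .
\]
It therefore suffices to show that $f(x) := x^p - \frac{p}{2}x^2$ is maximized over $(0,\infty)$ at $x = 1$, where $f(1) = 1 - p/2$.

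The cleanest route uses concavity. For $0 < p \le 2$ the exponent $p/2$ lies in $(0,1]$, so $g(t) = t^{p/2}$ is concave on $(0,\infty)$, and concavity gives the tangent-line bound $g(t) \le g(t_0) + g'(t_0)(t - t_0)$ for all $t, t_0 > 0$. Applying this with $t = \|\mathbf{a}\|_2^2$, $t_0 = \|\mathbf{a}_k\|_2^2$ and $g'(t_0) = \frac{p}{2} t_0^{p/2 - 1}$, and then dividing through by $\|\mathbf{a}_k\|_2^p = t_0^{p/2}$, yields exactly (\ref{Le1}) after rearrangement. In the scalar picture this is the same as noting that $f(x) = g(x^2) - \frac{p}{2}x^2$ and that the tangent line of $g$ at $t_0 = 1$ is precisely $1 + \frac{p}{2}(x^2 - 1)$.

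As an alternative I would verify the scalar inequality directly by calculus. Differentiating gives $f'(x) = p x^{p-1} - p x = p x (x^{p-2} - 1)$, whose only positive root is $x = 1$ when $0 < p < 2$. For $x < 1$ one has $x^{p-2} > 1$, so $f' > 0$, and for $x > 1$ one has $x^{p-2} < 1$, so $f' < 0$; hence $x = 1$ is the unique global maximum and $f(x) \le f(1) = 1 - p/2$. The boundary behavior is harmless, since $f(x) \to 0 \le 1 - p/2$ as $x \to 0^+$ and $f(x) \to -\infty$ as $x \to \infty$. The degenerate case $p = 2$ is trivial, because then $f \equiv 0 = 1 - p/2$ and (\ref{Le1}) holds with equality for every pair of vectors.

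I do not anticipate a serious obstacle here; the only points requiring care are to invoke concavity in the correct direction (the graph of a concave function lies below its tangent lines) and to notice that the hypothesis $0 < p \le 2$ is exactly what forces $p/2 \le 1$, i.e. what makes $g$ concave in the first place. Since this inequality is the engine that will drive the monotonic-decrease argument for Algorithm \ref{alg1:L2pLDFS}, it is worth tracking the equality case $x = 1$, that is $\|\mathbf{a}\|_2 = \|\mathbf{a}_k\|_2$, for use in the subsequent convergence proof.
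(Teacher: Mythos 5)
Your proposal is correct, and it actually contains two complete arguments. Your ``alternative'' calculus verification is essentially the paper's own proof: the paper defines $\varphi(t)=t^{p}-\frac{p}{2}t^{2}+\frac{p}{2}-1$, computes $\varphi'(t)=pt\left(t^{p-2}-1\right)$, shows by the sign change of $\varphi'$ that $t=1$ is the global maximizer with $\varphi(1)=0$, and then substitutes $t=\left\|\mathbf{a}\right\|_{2}/\left\|\mathbf{a}_{k}\right\|_{2}$ --- exactly your second route. Your primary route, however, is genuinely different and arguably cleaner: you recognize the inequality as the tangent-line (first-order) bound for the concave function $g(t)=t^{p/2}$ evaluated at $t=\left\|\mathbf{a}\right\|_{2}^{2}$ with base point $t_{0}=\left\|\mathbf{a}_{k}\right\|_{2}^{2}$, i.e. $g(t)\le g(t_{0})+g'(t_{0})(t-t_{0})$, followed by division by $t_{0}^{p/2}$. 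This buys three things: it eliminates the case analysis on the sign of the derivative, it makes transparent why the hypothesis $0<p\le 2$ is exactly what is needed (it is precisely the condition $p/2\le 1$ for concavity), and it exhibits the lemma as the standard majorization step underlying reweighted/MM-type algorithms, which is the very role it plays in the convergence proof of Algorithm \ref{alg1:L2pLDFS}. One further point in your favor: you treat $p=2$ separately, whereas the paper's claim that $t=1$ is ``the only point'' with $\varphi'(t)=0$ silently fails at $p=2$ (there $\varphi'\equiv 0$, though $\varphi\equiv 0$ so the conclusion still holds); your handling closes that small gap.
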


\begin{proof}
Denote $\varphi (t)={{t}^{p}}-\frac{p}{2}{{t}^{2}}+\frac{p}{2}-1$, then we have
\[{\varphi }'(t)=p{{t}^{p-1}}-pt=pt({{t}^{p-2}}-1).\]
Obviously, when $t>0$ and $0<p\le 2$, $t=1$ is the only point so that ${\varphi }'(t)=0$. Note that ${\varphi }'(t)>0$ ($0<t<1$) and ${\varphi }'(t)<0$ ($t>1$), so $t=1$ is the maximum point. As $\varphi (1)=0$, thus when $t>0$ and $0<p\le 1$, $\varphi (t)\le 0$.
Therefore, let ${{t}^{*}}=\frac{{{\left\| \mathbf{a} \right\|}_{2}}}{{{\left\| {{\mathbf{a}}_{k}} \right\|}_{2}}}$ in $\varphi (t)$, then $\varphi (\frac{{{\left\| \mathbf{a} \right\|}_{2}}}{{{\left\| {{\mathbf{a}}_{k}} \right\|}_{2}}})\le 0$, that is to say
	\[\frac{\left\| \mathbf{a} \right\|_{2}^{p}}{\left\| {{\mathbf{a}}_{k}} \right\|_{2}^{p}}-\frac{p}{2}\frac{\left\| \mathbf{a} \right\|_{2}^{2}}{\left\| {{\mathbf{a}}_{k}} \right\|_{2}^{2}}+\frac{p}{2}-1\le 0.\]
After a transposition, we arrive at (\ref{Le1}).
\end{proof}

\begin{theorem}
When $0<p\le 2$, the Algorithm \ref{alg1:L2pLDFS} will monotonically decrease the objective of the problem in (\ref{L2pLDFS}) in each iteration, and converge to the local optimum of the problem.
\end{theorem}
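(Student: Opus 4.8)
The plan is to read Algorithm \ref{alg1:L2pLDFS} as a majorization--minimization (reweighting) scheme and to use Lemma \ref{lemma1} as the bridge between the surrogate objective (\ref{trace1}) and the true objective of (\ref{L2pLDFS}). Write $J(\mathbf{A})=-tr(\mathbf{A}^{T}\mathbf{S}_{b}\mathbf{A})+\gamma\|\mathbf{A}\|_{2,p}^{p}$ for the objective of (\ref{L2pLDFS}). Steps 3--4 produce $\mathbf{A}_{k+1}$ as a minimizer of the surrogate (\ref{trace1}) with $\mathbf{D}=\mathbf{D}_{k}$ over the feasible set $\mathbf{A}^{T}\mathbf{S}_{t}\mathbf{A}=\mathbf{I}$; since $\mathbf{A}_{k}$ is itself feasible, I would first record the minimization inequality
\begin{equation}
-tr(\mathbf{A}_{k+1}^{T}\mathbf{S}_{b}\mathbf{A}_{k+1})+\gamma\, tr(\mathbf{A}_{k+1}^{T}\mathbf{D}_{k}\mathbf{A}_{k+1})\le -tr(\mathbf{A}_{k}^{T}\mathbf{S}_{b}\mathbf{A}_{k})+\gamma\, tr(\mathbf{A}_{k}^{T}\mathbf{D}_{k}\mathbf{A}_{k}).
\end{equation}
Expanding the weighted traces with the diagonal form (\ref{Defi_D}) gives $tr(\mathbf{A}^{T}\mathbf{D}_{k}\mathbf{A})=\tfrac{p}{2}\sum_{i}\|\mathbf{a}^{i}\|_{2}^{2}/\|\mathbf{a}_{k}^{i}\|_{2}^{2-p}$, so the right-hand weighted term collapses to $\tfrac{p}{2}\sum_{i}\|\mathbf{a}_{k}^{i}\|_{2}^{p}$.

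Next I would invoke Lemma \ref{lemma1}. Multiplying (\ref{Le1}) through by $\|\mathbf{a}_{k}^{i}\|_{2}^{p}$ and summing over the rows $i=1,\dots,d$ yields
\begin{equation}
\sum_{i}\|\mathbf{a}_{k+1}^{i}\|_{2}^{p}-\frac{p}{2}\sum_{i}\frac{\|\mathbf{a}_{k+1}^{i}\|_{2}^{2}}{\|\mathbf{a}_{k}^{i}\|_{2}^{2-p}}\le\Big(1-\frac{p}{2}\Big)\sum_{i}\|\mathbf{a}_{k}^{i}\|_{2}^{p}.
\end{equation}
Adding $\gamma$ times this bound to the minimization inequality cancels the two reweighted quadratic sums and leaves precisely $J(\mathbf{A}_{k+1})\le J(\mathbf{A}_{k})$, the desired monotonic decrease. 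For convergence of the objective values, I would note that $\mathbf{S}_{t}=\mathbf{S}_{b}+\mathbf{S}_{w}$ with $\mathbf{S}_{w}\succeq 0$, so on the feasible set $tr(\mathbf{A}^{T}\mathbf{S}_{b}\mathbf{A})\le tr(\mathbf{A}^{T}\mathbf{S}_{t}\mathbf{A})=l$; as the regularizer is nonnegative, $J$ is bounded below by $-l$, and a monotone sequence bounded below converges.

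Finally, for the local-optimum claim I would use a fixed-point argument: at a limit point the pair $(\mathbf{A},\mathbf{D})$ is invariant under the update, so $\mathbf{A}$ satisfies the Lagrangian stationarity $(\gamma\mathbf{D}-\mathbf{S}_{b})\mathbf{A}=\mathbf{S}_{t}\mathbf{A}\boldsymbol{\Lambda}$ of (\ref{Geig}) with $\mathbf{D}$ recomputed from $\mathbf{A}$ itself; recognizing $2\mathbf{D}\mathbf{A}=\partial\|\mathbf{A}\|_{2,p}^{p}/\partial\mathbf{A}$ identifies this with the KKT condition of (\ref{L2pLDFS}). The main obstacle is twofold. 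First, Lemma \ref{lemma1} requires nonzero rows, whereas $\mathbf{a}_{k}^{i}=\mathbf{0}$ makes $d_{ii}$ blow up; this is exactly why the proof should be carried out for the $\zeta$-smoothed objective $\sum_{i}((\mathbf{a}^{i})^{T}\mathbf{a}^{i}+\zeta)^{p/2}$ of Remark 2 (whose weights stay finite), with the stated result recovered as $\zeta\to 0$. Second, monotone decrease of the objective values does not by itself force the iterates $\mathbf{A}_{k}$ to converge or the limit to be a genuine local minimizer rather than a saddle; making the phrase ``converge to the local optimum'' precise requires the extra stationarity/KKT analysis above together with a compactness argument for $\{\mathbf{A}_{k}\}$ on the constraint manifold $\mathbf{A}^{T}\mathbf{S}_{t}\mathbf{A}=\mathbf{I}$.
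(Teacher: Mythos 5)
Your core monotonicity argument is exactly the paper's own proof: record the minimization inequality for the surrogate $-tr(\mathbf{A}^{T}\mathbf{S}_{b}\mathbf{A})+\gamma\, tr(\mathbf{A}^{T}\mathbf{D}_{k}\mathbf{A})$ over the feasible set, rewrite the weighted traces via (\ref{Defi_D}), multiply Lemma \ref{lemma1} through by $\left\| \mathbf{a}_{k}^{i} \right\|_{2}^{p}$, sum over rows, and combine so that the reweighted quadratic sums cancel, leaving the decrease of the true objective of (\ref{L2pLDFS}); the paper's equations (\ref{Proofth1}) and (\ref{Proofth2}) are precisely your two inequalities. Where you differ is that you patch three holes the paper leaves open. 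First, the paper only asserts that ``the objective function has lower bounds,'' while your bound $tr(\mathbf{A}^{T}\mathbf{S}_{b}\mathbf{A})\le tr(\mathbf{A}^{T}\mathbf{S}_{t}\mathbf{A})=l$ on the constraint set (from $\mathbf{S}_{w}\succeq 0$) gives the explicit floor $-l$. Second, the paper never argues the ``converge to the local optimum'' clause at all: monotone decrease plus a lower bound yields convergence of objective \emph{values} only, and your fixed-point/KKT identification of $2\mathbf{D}\mathbf{A}$ with the (sub)gradient of $\left\| \mathbf{A} \right\|_{2,p}^{p}$, together with a compactness argument for the iterates on the constraint manifold, is what would actually be needed to justify that claim. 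Third, you correctly observe that Lemma \ref{lemma1} requires nonzero rows, so the theorem's proof silently breaks if some $\mathbf{a}_{k}^{i}=\mathbf{0}$; the paper relegates this to Remark 2, and your suggestion to run the entire argument on the $\zeta$-smoothed objective $\sum_{i}{{\left( {{({{\mathbf{a}}^{i}})}^{T}}{{\mathbf{a}}^{i}}+\zeta \right)}^{p/2}}$ and recover the statement as $\zeta\to 0$ is the honest fix. In short, your proposal is correct, follows the same route as the paper on the part the paper actually proves, and is more rigorous on the parts the paper glosses over.
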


\begin{proof}
In the $k$-th iteration
\begin{equation}
	{{\mathbf{A}}_{k+1}}=\underset{\mathbf{A}\in {{\mathbb{R}}^{d\times l}},{{\mathbf{A}}^{T}}{{\mathbf{S}}_{t}}\mathbf{A}=\mathbf{I}}{\mathop{\arg \min }}\,-tr(\mathbf{A}^{T}{{\mathbf{S}}_{b}}{{\mathbf{A}}})+\gamma tr(\mathbf{A}^{T}{{\mathbf{D}}_{k}}{{\mathbf{A}}}),
\end{equation}
which indicates that
\begin{equation}
\begin{split}
& -tr(\mathbf{A}_{k+1}^{T}{{\mathbf{S}}_{b}}\mathbf{A}_{k+1}^{T})+\gamma tr(\mathbf{A}_{k+1}^{T}{{\mathbf{D}}_{k}}\mathbf{A}_{k+1}^{T}) \\
& \le -tr(\mathbf{A}_{k}^{T}{{\mathbf{S}}_{b}}\mathbf{A}_{k}^{T})+\gamma tr(\mathbf{A}_{k}^{T}{{\mathbf{D}}_{k}}{{\mathbf{A}}_{k}}).
\end{split}
\end{equation}
That is to say,
\begin{equation}\label{Proofth1}
\begin{split}
& -tr(\mathbf{A}_{k+1}^{T}{{\mathbf{S}}_{b}}\mathbf{A}_{k+1}^{T})+\gamma \sum\limits_{i=1}^{d}{\frac{p}{2}\frac{\left\| \mathbf{a}_{k+1}^{i} \right\|_{2}^{2}}{\left\| \mathbf{a}_{k}^{i} \right\|_{2}^{2-p}}} \\
& \le -tr(\mathbf{A}_{k}^{T}{{\mathbf{S}}_{b}}\mathbf{A}_{k}^{T})+\gamma \sum\limits_{i=1}^{d}{\frac{p}{2}\frac{\left\| \mathbf{a}_{k}^{i} \right\|_{2}^{2}}{\left\| \mathbf{a}_{k}^{i} \right\|_{2}^{2-p}}},
\end{split}
\end{equation}
where vectors $\mathbf{a}_{k}^{i}$ and $\mathbf{a}_{k+1}^{i}$ denote the $i$-th row of matrix ${{\mathbf{A}}_{k}}$ and ${{\mathbf{A}}_{k+1}}$ respectively.
On the other hand, according to Lemma \ref{lemma1}, for each $i$ we have
\begin{equation}
\frac{\left\| \mathbf{a}_{k+1}^{i} \right\|_{2}^{p}}{\left\| \mathbf{a}_{k}^{i} \right\|_{2}^{p}}-\frac{p}{2}\frac{\left\| \mathbf{a}_{k+1}^{i} \right\|_{2}^{2}}{\left\| \mathbf{a}_{k}^{i} \right\|_{2}^{2}}\le 1-\frac{p}{2},
\end{equation}
which is equivalent to the following inequality
\begin{equation}
\left\| \mathbf{a}_{k+1}^{i} \right\|_{2}^{p}-\frac{p}{2}\frac{\left\| \mathbf{a}_{k+1}^{i} \right\|_{2}^{2}}{\left\| \mathbf{a}_{k}^{i} \right\|_{2}^{2-p}}\le \left\| \mathbf{a}_{k}^{i} \right\|_{2}^{p}-\frac{p}{2}\frac{\left\| \mathbf{a}_{k}^{i} \right\|_{2}^{2}}{\left\| \mathbf{a}_{k}^{i} \right\|_{2}^{2-p}},
\end{equation}
so the following inequality holds:
\begin{equation}\label{Proofth2}
\sum\limits_{i=1}^{d}{\left( \left\| \mathbf{a}_{k+1}^{i} \right\|_{2}^{p}-\frac{p}{2}\frac{\left\| \mathbf{a}_{k+1}^{i} \right\|_{2}^{2}}{\left\| \mathbf{a}_{k}^{i} \right\|_{2}^{2-p}} \right)}\le \sum\limits_{i=1}^{d}{\left( \left\| \mathbf{a}_{k}^{i} \right\|_{2}^{p}-\frac{p}{2}\frac{\left\| \mathbf{a}_{k}^{i} \right\|_{2}^{2}}{\left\| \mathbf{a}_{k}^{i} \right\|_{2}^{2-p}} \right)}.
\end{equation}
Combining (\ref{Proofth1}) and (\ref{Proofth2}), we have
\begin{equation}
\begin{split}
& -tr(\mathbf{A}_{k+1}^{T}{{\mathbf{S}}_{b}}\mathbf{A}_{k+1}^{T})+\gamma \sum\limits_{i=1}^{d}{\left\| \mathbf{a}_{k+1}^{i} \right\|_{2}^{p}} \\
& \le -tr(\mathbf{A}_{k}^{T}{{\mathbf{S}}_{b}}\mathbf{A}_{k}^{T})+\gamma \sum\limits_{i=1}^{d}{\left\| \mathbf{a}_{k}^{i} \right\|_{2}^{p}}.
\end{split}
\end{equation}
That is to say,
\begin{equation}
\begin{split}
& -tr(\mathbf{A}_{k+1}^{T}{{\mathbf{S}}_{b}}\mathbf{A}_{k+1}^{T})+\gamma \left\| {{\mathbf{A}}_{k+1}} \right\|_{2,p}^{p} \\
& \le -tr(\mathbf{A}_{k}^{T}{{\mathbf{S}}_{b}}\mathbf{A}_{k}^{T})+\gamma \left\| {{\mathbf{A}}_{k}} \right\|_{2,p}^{p}.
\end{split}
\end{equation}

Thus the Algorithm \ref{alg1:L2pLDFS} will monotonically decrease the objective of the problem in (\ref{L2pLDFS}) in each iteration $k$. Note that the
objective function has lower bounds, so the above iteration will converge.
Therefore, the Algorithm \ref{alg1:L2pLDFS} monotonically decreases the objective value in each iteration till the convergence.
\end{proof}

As we use the transformation matrix $\mathbf{A}$ to select features, we also need to make clear the convergence behavior of it. Following \cite{JELSR}, we measure the divergence between two sequential $\mathbf{A}$s by the following metric:
\begin{equation}\label{DefiError}
Div(k)=\sum\limits_{i=1}^{d}{\left| {{\left\| \mathbf{a}_{k+1}^{i} \right\|}_{2}}-{{\left\| \mathbf{a}_{k}^{i} \right\|}_{2}} \right|}.
\end{equation}
The metric defined above acts as an indicator to show whether the final results would be changed drastically.

\subsection{Time Complexity}
To optimize the objective function of DFS, the most time consuming operation is to solve the generalized eigen-problem $(\gamma {{\mathbf{D}}_{t}}-{{\mathbf{S}}_{b}})\mathbf{a}=\lambda {{\mathbf{S}}_{t}}\mathbf{a}$. The time complexity of this operation is $O({{d}^{3}})$ approximately.
Empirical results show that the convergence is fast and only several iterations are needed to converge. Therefore, the proposed method scales well in practice.

\subsection{Parameter Selection}
Parameter selection is of great importance and it is still an open problem. At present, the most commonly used parameter selection method is grid search based on the cross validation accuracy (CV-Acc), i.e., to search the optimal parameter corresponding to the highest CV-Acc. Sometimes, we also determine parameters based on experience \cite{JELSR}.

If we take $p$ in the ${{\ell}_{2,p}}$-norm regularization term as a parameter, then DFS has three parameters: the reduced dimensionality $l$, regularization parameter $\gamma$ and $p$.
As for the reduced dimensionality $l$, we empirically set it to be $c-1$ as in traditional LDA \cite{IntrStaPR}, where $c$ is the number of classes.
The regularization parameter $\gamma $ controls the trade-off between the discrimination and the sparsity. It plays an important role in DFS for feature selection. We determine it by grid search according to CV-Acc and some numerical results are presented to illustrate its impact on DFS.
Then, the value of $p$, which balances the sparsity and convexity of the formulation, is also hard to decide.
As the transformation matrix need to be row-sparse, we only focus on cases that $0<p\le1$, though the algorithm is proved to be convergent when $0<p\le2$.
To simplify the experiment, $p$ is set as 1 when comparing with other feature selection approaches and the performance of DFS with different $p$ values is studied separately.
Finally, the number of selected features, a common parameter for all feature selection methods, is difficult to determine without prior. Hence, we vary this parameter within a certain range and report the corresponding results.

\section{Experiments}

In this section,  experiments are conducted to evaluate the performance of our proposed algorithm.
Firstly, a toy example is displayed to show the ability of DFS to find the discriminative features. Then we compare DFS with five widely used filter-type feature selection methods, and following this, comparisons between DFS with different $p$ values are made. We also evaluate how DFS performs with varying values of the regularization parameter $\gamma$. Finally, convergence analysis and computational time are reported.

\subsection{Data Set Description and Evaluation Metrics}
In our experiments, six diverse public data sets are collected to illustrate the performance of different feature selection approaches.
These data sets include three image data sets, COIL20\footnote{http://www.cs.columbia.edu/CAVE/software/softlib/coil-20.php}, ORL\footnote{http://www.zjucadcg.cn/dengcai/Data/FaceData.html}, and USPS\footnote{http://www.cc.gatech.edu/\~{}lsong/data/icml\_data.zip}, two biological gene expression microarray data sets, Colon Tumor\footnote{http://www.upo.es/eps/bigs/datasets.html} (COLON) and
Lung Cancer\footnote{https://sites.google.com/site/feipingnie/resoure} (LUNG), and one spoken letter recognition data, ISOLET5\footnote{https://archive.ics.uci.edu/ml/datasets/ISOLET}.
All data sets are standardized to be zero-mean and normalized by standard deviation.
We summarize the statistics of the data sets in Table \ref{tabData} and briefly introduce them as follows,

\begin{itemize}
\item COIL20 contains 1440 images of 20 objects. The images of each object were taken 5 degree apart as the object is rotated on a turntable, and each object has 72 images. The size of each image is 32$\times$32 pixels, with 256 gray levels per pixel. Thus, each image is represented by a 1024-dimensional vector.
\item ORL consists of 400 face images. There are 10 different images of each of 40 distinct subjects. For some subjects, the images were taken at different times with varying lighting, different facial expressions and facial details. The original size of each image is 92$\times$112, with 256 grey levels per pixel. In our experiments, the size is reduced to 32$\times$32.
\item The original USPS handwritten digit database contains 9298 images. The size of each image is 16$\times$16 pixels and each image is characterized by a 256-dimensional vector. The version we use in this paper is a balanced random sample of the original data set produced by L. Song et al. \cite{SongLe_HISC}.
\item COLON contains expression levels of 2000 genes taken from 62 different samples. For each sample it is indicated whether it comes from a tumor biopsy or not. 40 samples are normal and the rest of the samples are from tumor biopsy.
\item LUNG is composed of 203 samples in five classes with 139, 21, 20, 6, 17 samples respectively. Each sample has 12600 genes. The genes with standard deviations smaller than 50 expression units were removed and the remaining data set contains 203 samples with 3312 genes.
\item The ISOLET data set was generated by letting 150 subjects speak the name of each letter of the alphabet twice. Hence, from each speaker we have 52 training examples. The data was divided into 5 equal sets of 30 speakers each. We use the data from the fifth part, ISOLET5. As one example in this part is missing, ISOLET5 has 1559 examples in 26 classes with 617 attributes.

\end{itemize}

\begin{table}
    \centering
    \caption{Data Set Description} \label{tabData}
    \begin{tabular}{|l|c|c|c|l|}\hline
    Data set & Size & \# of Features & \# of Classes & Type \\ \hline
    COIL20   &  1440&       1024    &       20      &   Image, Object\\ \hline
    ORL     &  400 &       1024     &       40      &   Image, Face \\ \hline
    USPS     &  730 &       256     &       10      &   Image, Handwritten\\  \hline
    COLON    &  62  &       2000    &       2       &   Microarray, Biological\\ \hline
    LUNG     &  203 &       3312    &       5       &   Microarray, Biological\\ \hline
    ISOLET5   &  1559&       617     &       26     &   Voice, Alphabet \\ \hline
\end{tabular}
\end{table}

To test the quality of the selected features, two metrics are used: accuracy -- the classification accuracy achieved by the classifier using the selected features; redundancy rate -- the redundancy rate contained in the selected features.
An ideal feature selection algorithm should select features that results in high accuracy, while containing few redundant features \cite{ASUfspackage}.

We use the LIBSVM\footnote{https://github.com/cjlin1/libsvm} software to perform classification, which implements the ``one-against-one'' approach for multiclass cases, see more details in \cite{LIBSVM}. Following \cite{Nie_RFSL21, SongLe_HISC}, the SVM classifier is individually performed on each data set with the selected features, using the linear kernel with the parameter $C=1$ and 5-fold cross validation. The average classification accuracy of all these 5 folds is reported as the final result.

Assume $\mathcal{F}$ is the set of selected features, and ${{\mathbf{X}}_{\mathcal{F}}}$ is the data represented by the features in $\mathcal{F}$. We use the following measurement to measure the redundancy rate of $\mathcal{F}$ \cite{ASUfspackage}:
\begin{equation}
    RED(\mathcal{F})=\frac{1}{\left| \mathcal{F} \right|(\left| \mathcal{F} \right|-1)}\sum\limits_{{{\mathbf{f}}_{i}},{{\mathbf{f}}_{j}}\in \mathcal{F},i>j}{cor{{r}_{i,j}}},
\end{equation}
where $\left| \mathcal{F} \right|$ is the cardinality of $\mathcal{F}$, i.e., the number of selected features, and ${cor{{r}_{i,j}}}$ is the correlation between two features, ${{\mathbf{f}}_{i}}$ and ${{\mathbf{f}}_{j}}$.
This measurement assesses the averaged correlation of all feature pairs in $\mathcal{F}$. A large value indicates that many selected features are correlated, and thus redundancy is expected to exist in $\mathcal{F}$.

\subsection{A Toy Example}
In this subsection, we present a toy example to illustrate the ability of DFS to select discriminative features. Specifically, two samples from each class of the ORL data set are randomly selected as the training data, and the rest of the examples are used for testing.
We perform our method on the training data. The top ranked \{32, 64, 128, 256, 384, 512, 640, 768, 896, 1024\} features are selected respectively.
Then, the images of two randomly selected testing samples are recovered by using different numbers of selected features, from 32 to all. For illustration, the unselected features are set to be white and the selected features maintain their original values. The recovered images are displayed in Fig. \ref{toyfig}, from left to right, the number of selected features is \{32, 64, 128, 256, 384, 512, 640, 768, 896, 1024\} respectively.

\begin{figure}
  \centering
  \includegraphics[width=0.48\textwidth]{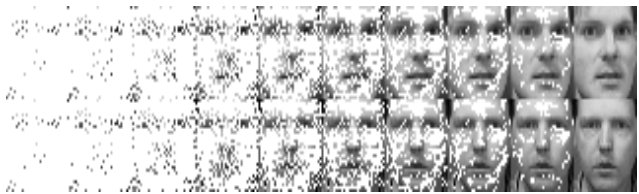}\\
  \caption{A toy example. Top: The test sample of ORL data from the 18th class with different numbers of selected features; Bottom: The test sample of ORL data from the 38th class with different numbers of selected features.}\label{toyfig}
\end{figure}

From Fig. \ref{toyfig}, we draw the following conclusions. The recovered images show that DFS preferentially selects features corresponding to the eyes, nose and mouth. They are the most discriminative features that could describe each individual's character.
We can see that with only 64 features selected, the eyes, nose and mouth of each testing sample are already clear.
While the pixels of the skin are the background, and they have been dropped by our method in most cases.
We also noted that the features selected by DFS are well distributed and do not gather at a certain part of the face. This indicates that DFS manages to remove the redundant features.

\subsection{Comparison between DFS and Other Filter-type Feature Selection Algorithms}

If we tune the values of $\gamma$ and $p$ simultaneously, the experiment will become complicated . For simplicity, we set $p=1$ for DFS when comparing it with other algorithms, since the efficiency of ${\ell}_{2,1}$-norm in feature selection has been demonstrated in many studies \cite{Nie_RFSL21,JELSR,DisUFS}. The effect of $p$ on the performance of DFS will be studied separately in next subsection.
As LDFS has a trivial solution and the proposed implementation in \cite{icml10_LDFS} is hard to reproduce, we do not make comparison with LDFS.
We compare the DFS algorithm with the following widely used filter-type feature selection algorithms,
\begin{itemize}
\item BAHSIC \cite{SongLe_HISC}, which is a backward elimination feature selection method that employs the Hilbert-Schmidt Independence Criterion (HSIC) as a measure of dependence between the features and the labels.
\item Laplacian Score (LS) \cite{LapalcianScore} which evaluates the features according to their ability of preserving the local manifold structure.
\item mRMR \cite{Peng_mRMR}, which selects features that are mutually far away from each other and have ``high" correlation to the classification variable according to the minimal-redundancy-maximal-relevance criterion based on mutual information.
\item ReliefF (RF) \cite{ReliefF}, which evaluates features based on how well the feature differentiates between neighboring instances from different classes versus from the same class.
\item Trace Ratio (TR) \cite{TraceRatio}, which selects a feature subset based on the corresponding subset-level score that is calculated in a trace ratio form.
\end{itemize}

For BAHSIC, a linear kernel is used on both data and labels. We need to tune the bandwidth parameter for the Gaussian kernel in Laplacian Score, and decide the number of the nearest neighbors used per class in ReliefF.
For DFS ($p=1$), the reduced dimensionality $l$ is set as $c-1$, as in traditional LDA\cite{IntrStaPR}. We also need to tune the regularization parameter $\gamma$. We decide the undetermined parameters in a
heuristic way by grid search. For Trace Ratio, the weight matrices are constructed in the same manner as Fisher Score (refer to \cite{TraceRatio} for details).
The implementations of BAHSIC, Laplacian Score and Trace Ratio are downloaded from the authors' websites. The code of mRMR and ReliefF are from the ASU Feature Selection Repository\footnote{http://featureselection.asu.edu/index.php}.
We set the number of selected features between 10 and 100 with an interval of 5 for all data sets. Each feature selection algorithm is first performed to select features.
Then LIBSVM software is employed to classify samples represented by the selected features,  using linear kernel and 5-fold cross validation. We report the average classification accuracy of these as the final result.

\begin{figure*}
\centering
\subfigure[COIL20]{
\includegraphics[width=0.3\textwidth]{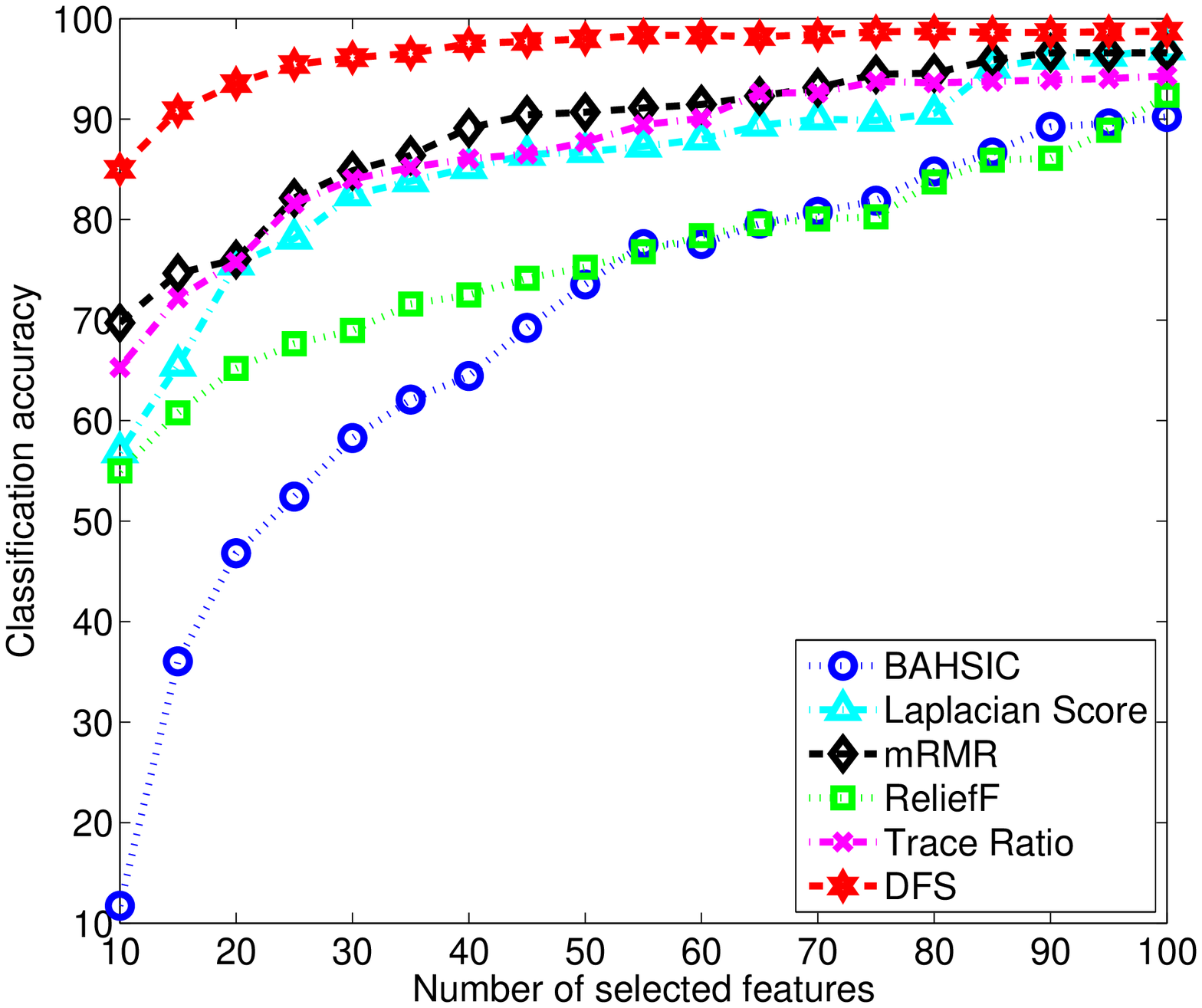}}
\subfigure[ORL]{
\includegraphics[width=0.3\textwidth]{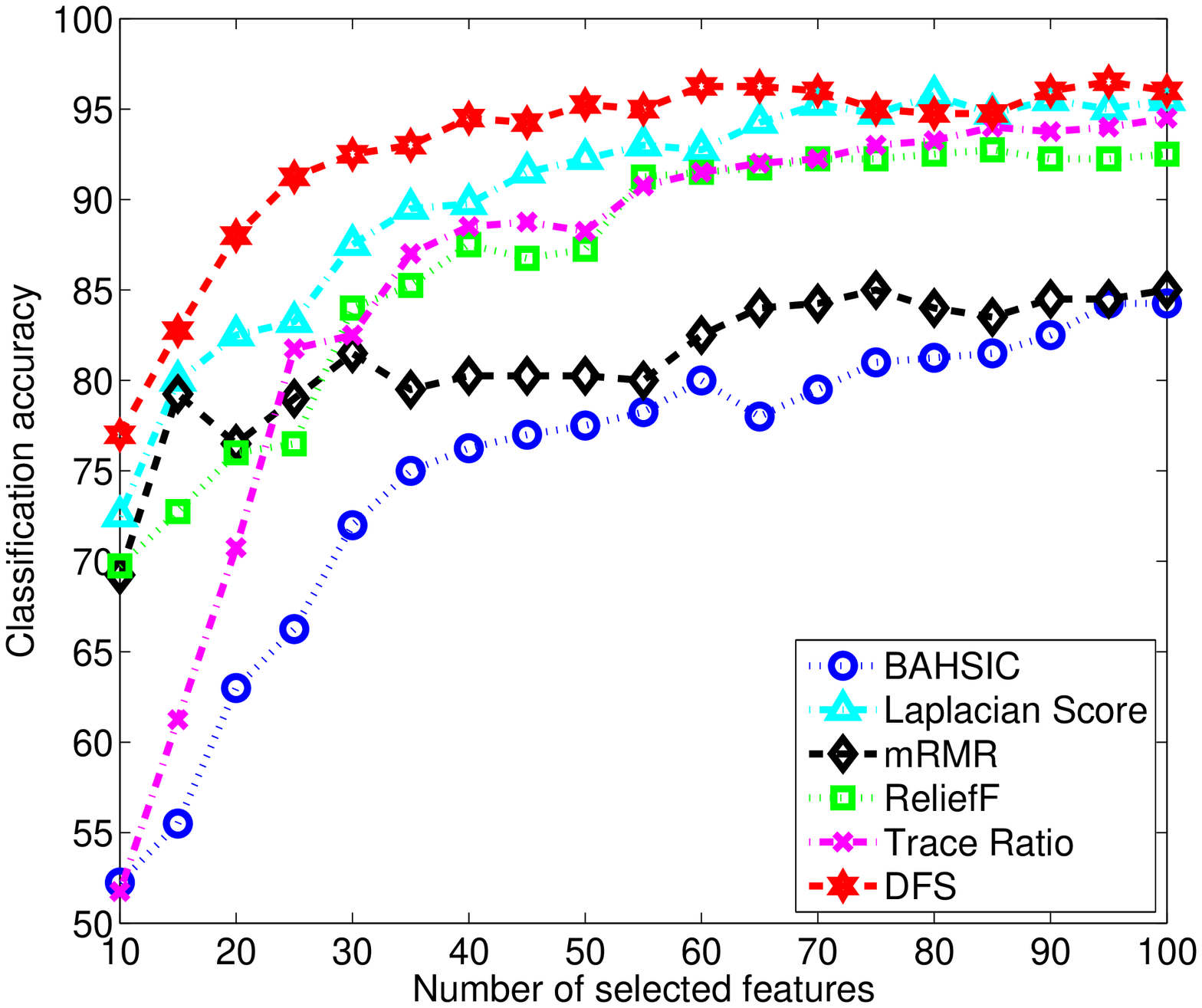}}
\subfigure[USPS]{
\includegraphics[width=0.3\textwidth]{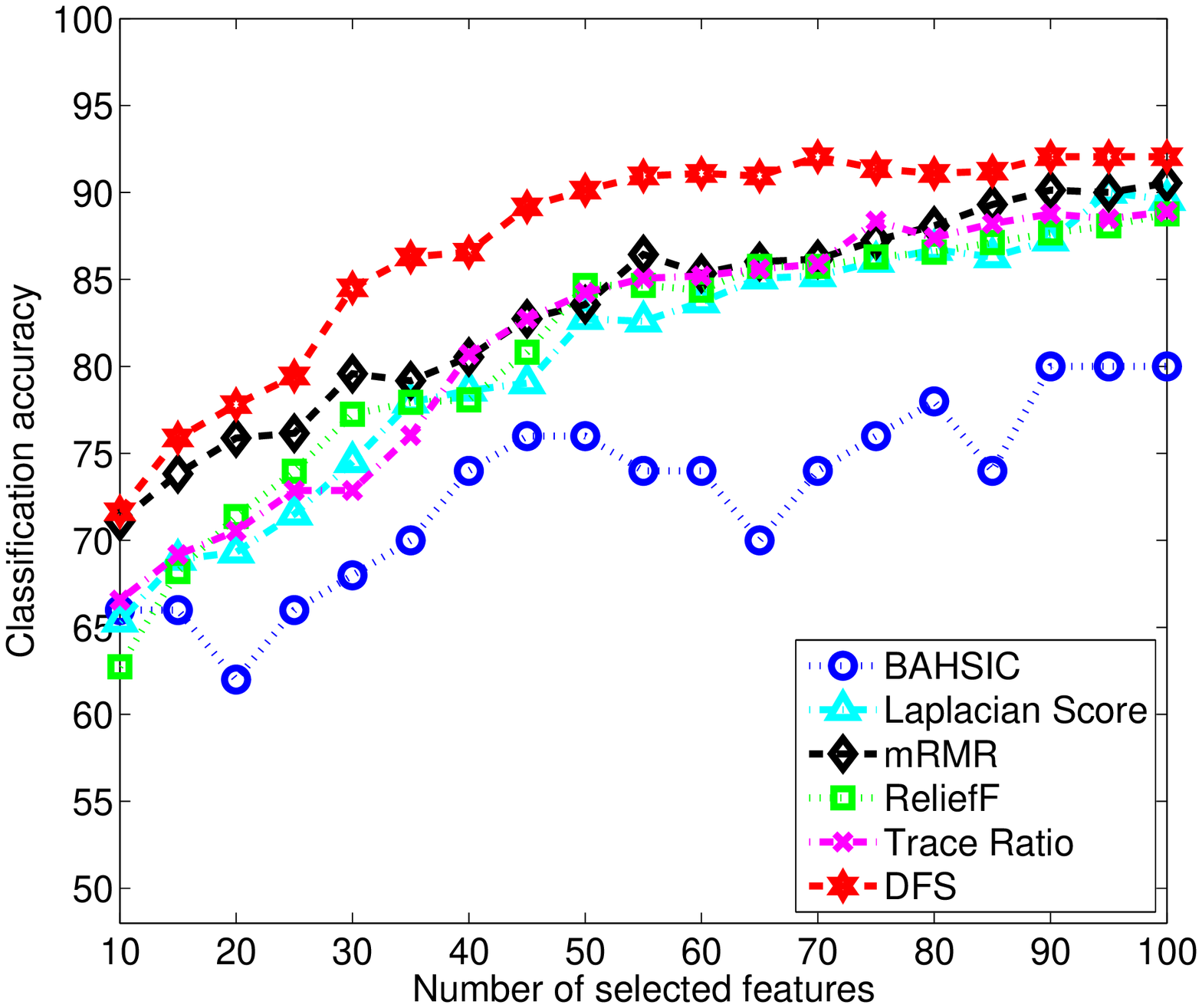}}
\subfigure[COLON]{
\includegraphics[width=0.3\textwidth]{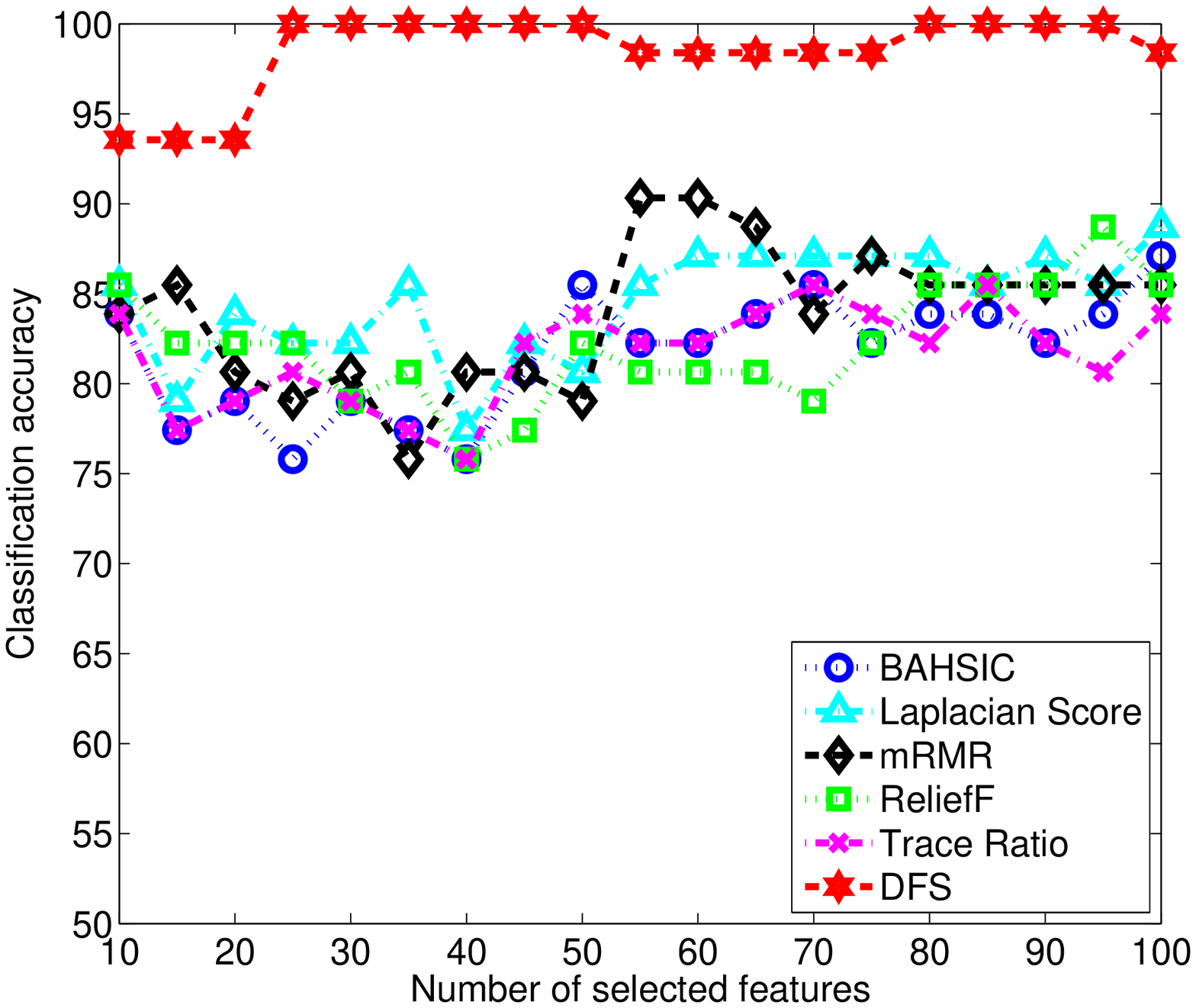}}
\subfigure[LUNG]{
\includegraphics[width=0.3\textwidth]{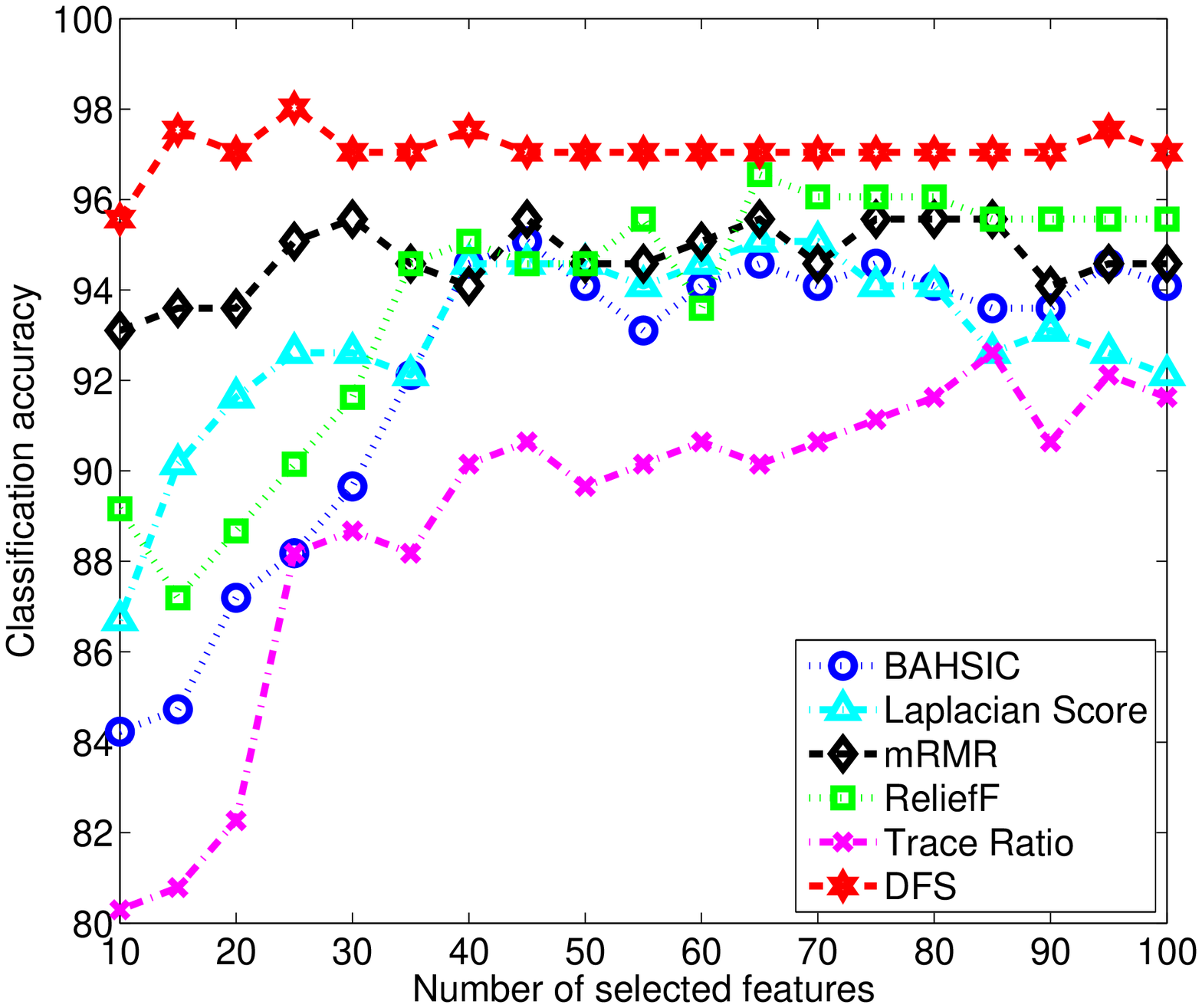}}
\subfigure[ISOLET5]{
\includegraphics[width=0.3\textwidth]{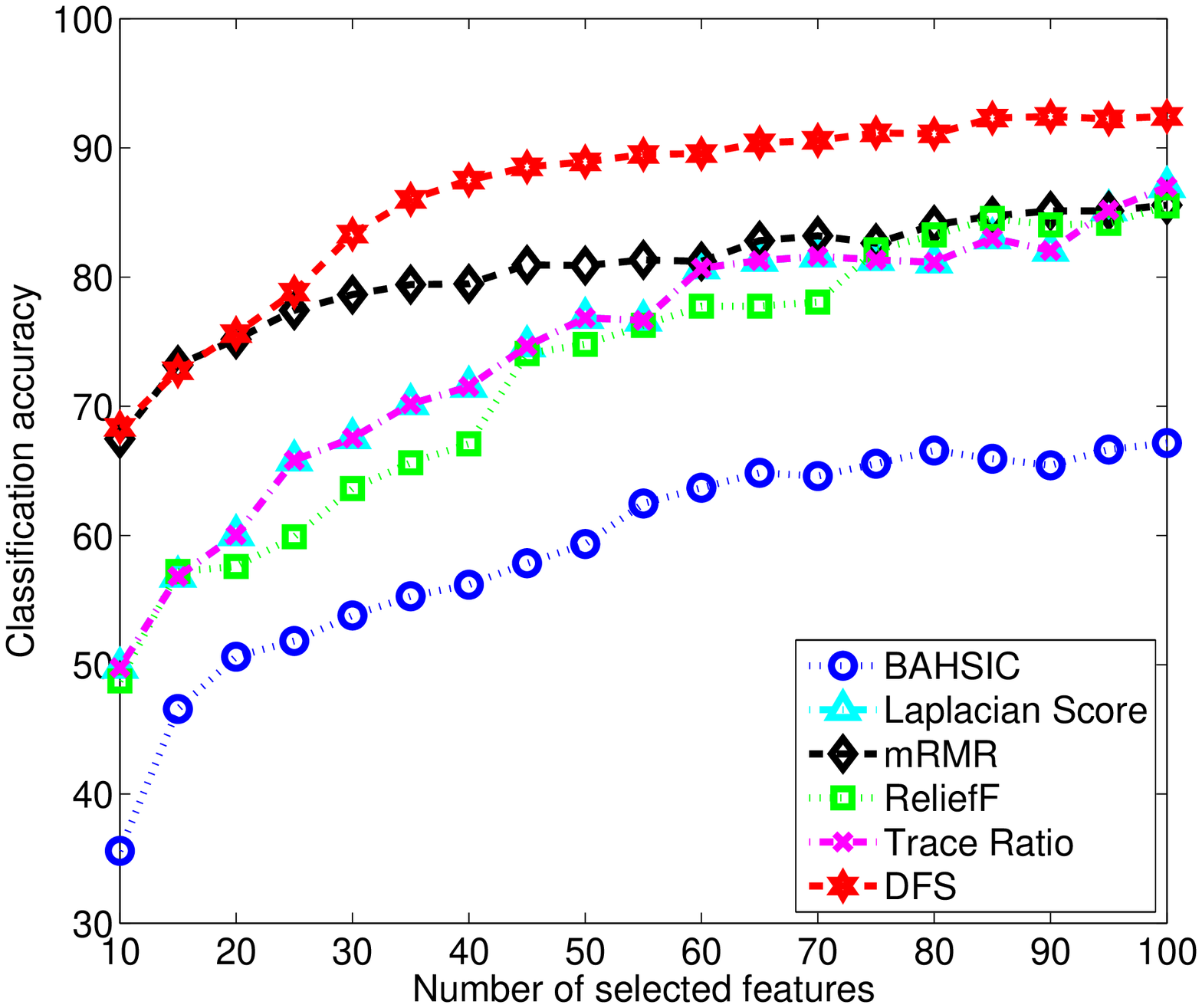}}
\centering
\caption{Comparison between DFS and other filter-type feature selection methods w.r.t classification accuracy. }
\label{Fig1.SVMAcc}
\end{figure*}

\begin{table*}
  \centering
  \caption{Classification Accuracy (\%) of SVM Using 5-Fold Cross Validation for Top 20 and 40 Features Respectively}
  \label{tabAcc1}
  \begin{tabular}{l c c c c c c c c c c c c }\hline
  & \multicolumn{6}{c}{Average accuracy of top 20 features (\%)} & \multicolumn{6}{c}{Average accuracy of top 40 features (\%)}  \\ \cline{2-7} \cline{8-13}
  & BAHSIC & LS & mRMR & RF & TR & DFS & BAHSIC & LS & mRMR & RF & TR & DFS \\ \hline

  COIL20& 46.81 & 75.56 & 76.04 & 65.21 & 75.76 &\textbf{93.54}  & 64.44 & 85.14 & 89.10 & 72.50 & 86.04 & \textbf{97.50} \\
  ORL   & 63    & 82.50 & 76.50 & 76    & 70.75 & \textbf{88}    & 76.25 & 89.75 & 80.25 & 87.50 & 88.50 & \textbf{94.50} \\
  USPS  & 62    & 69.32 & 75.89 & 71.37 & 70.55 & \textbf{77.81} & 74    & 78.63 & 80.55 & 78.08 & 80.68 & \textbf{86.58} \\
  COLON & 79.03 & 83.87 & 80.65 & 82.26 & 79.03 & \textbf{93.55} & 75.81 & 77.42 & 80.65 & 75.81 & 75.81 & \textbf{100}   \\
  LUNG  & 87.19 & 91.63 & 93.60 & 88.67 & 82.27 & \textbf{97.04} & 94.58 & 94.58 & 94.09 & 95.07 & 90.15 & \textbf{97.54} \\
  ISOLET5& 50.61 & 60.04 & 75.18 & 57.60 & 60.04 & \textbf{75.63} & 56.19 & 71.52 & 79.47 & 67.09 & 71.52 & \textbf{87.49}  \\
  Average&64.77 & 77.15 & 79.64 & 73.52 & 73.07 & \textbf{87.60} & 73.55 & 82.84 & 84.02 & 79.34 & 82.12 & \textbf{93.94} \\
  \hline
  \end{tabular}
\end{table*}

\begin{table*}
  \centering
  \caption{Classification Accuracy (\%) of SVM Using 5-Fold Cross Validation for Top 60 and 80 Features Respectively}
  \label{tabAcc2}
  \begin{tabular}{l c c c c c c c c c c c c }\hline
  & \multicolumn{6}{c}{Average accuracy of top 60 features (\%)} & \multicolumn{6}{c}{Average accuracy of top 80 features (\%)}  \\ \cline{2-7} \cline{8-13}
  & BAHSIC & LS & mRMR & RF & TR & DFS & BAHSIC & LS & mRMR & RF & TR & DFS \\  \hline

  COIL20& 77.57 & 87.99 & 91.46 & 78.40 & 90.07 & \textbf{98.33} & 84.79 & 90.56 & 94.58 & 83.75 & 93.61 & \textbf{98.75}  \\
  ORL   & 80    & 92.75 & 82.50 & 91.50 & 91.50 & \textbf{96.25} & 81.25 & \textbf{95.75} & 84    & 92.50 & 93.25 & 94.75 \\
  USPS  & 74    & 83.70 & 85.34 & 84.38 & 85.21 & \textbf{91.10} & 78    & 86.71 & 88.08 & 86.58 & 87.40 & \textbf{91.10}  \\
  COLON & 82.26 & 87.10 & 90.32 & 80.65 & 82.26 & \textbf{98.39} & 83.87 & 87.10 & 85.48 & 85.48 & 82.26 & \textbf{100}    \\
  LUNG  & 94.09 & 94.58 & 95.07 & 93.60 & 90.64 & \textbf{97.04} & 94.09 & 94.09 & 95.57 & 96.06 & 91.63 & \textbf{97.04}  \\
  ISOLET5& 63.69 & 80.69 & 81.21 & 77.74 & 80.69 & \textbf{89.54} & 66.58 & 81.14 & 84.03 & 83.26 & 81.14 & \textbf{91.08}  \\
  Average&78.60 & 87.80 & 87.65 & 84.38 & 86.73  & \textbf{95.11} & 81.43 & 89.23 & 88.62 & 87.94 & 88.22 & \textbf{95.45}  \\
  \hline
  \end{tabular}
\end{table*}

\begin{figure*}
\centering
\subfigure[COIL20]{
\includegraphics[width=0.3\textwidth]{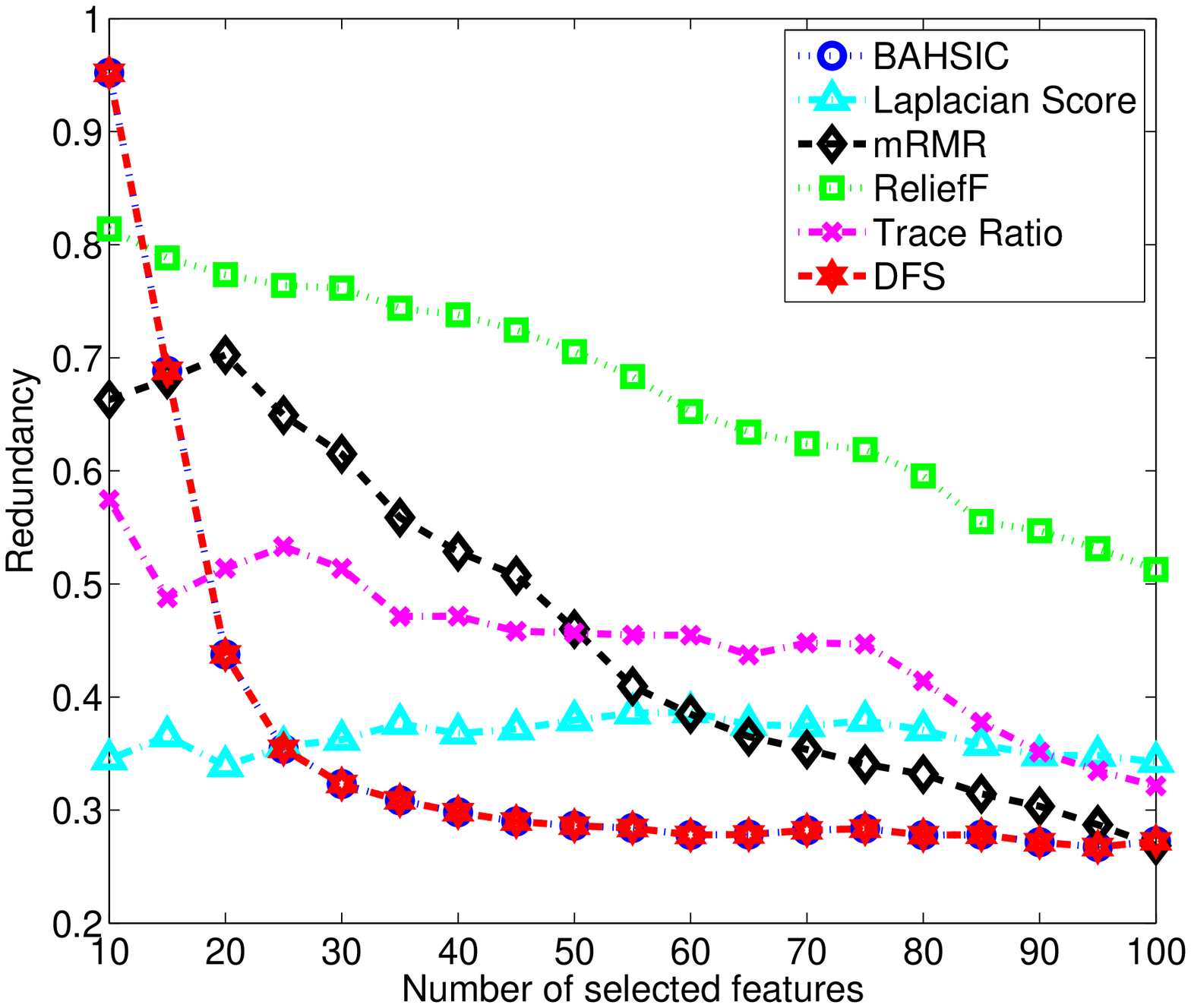}}
\subfigure[ORL]{
\includegraphics[width=0.3\textwidth]{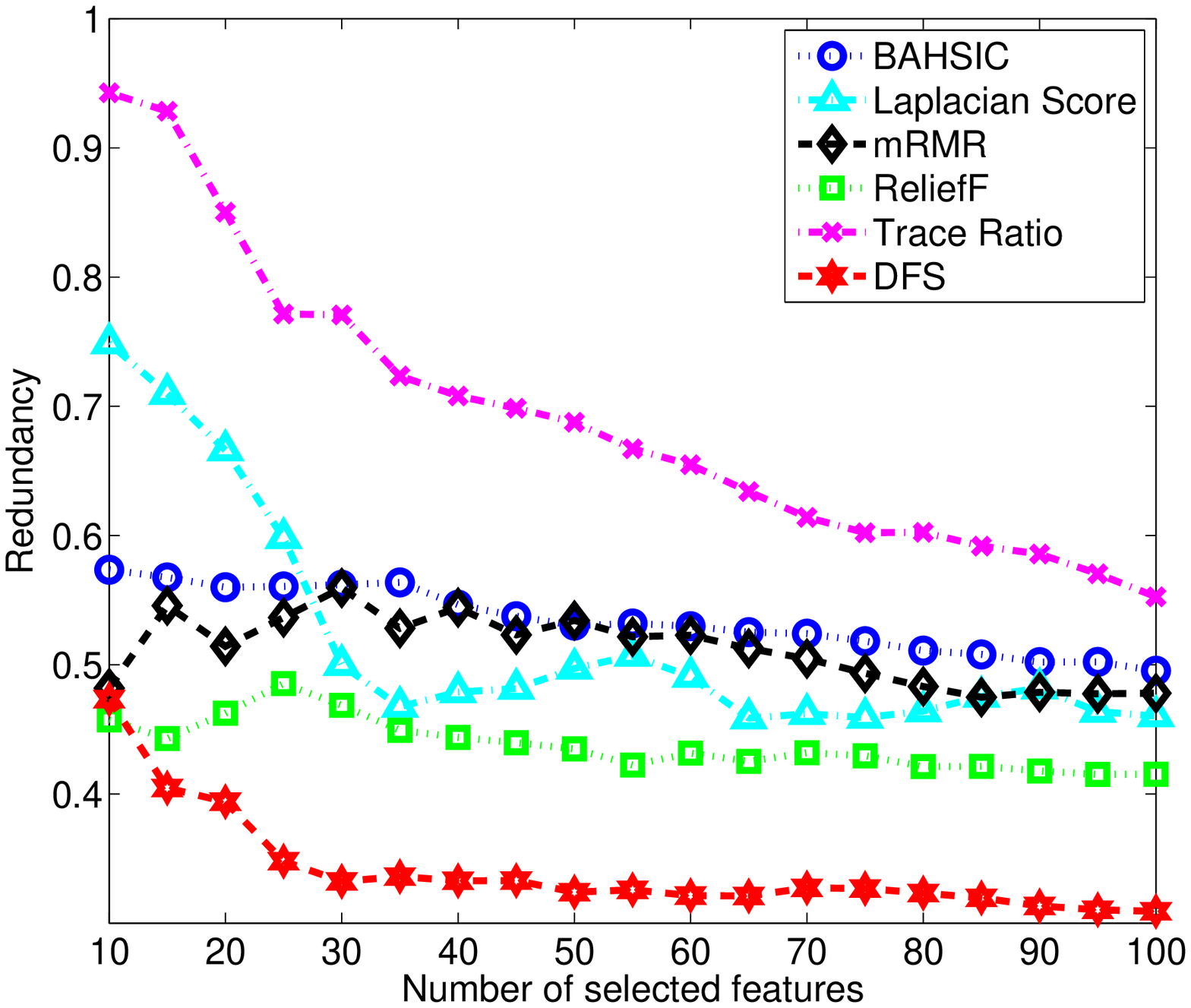}}
\subfigure[USPS]{
\includegraphics[width=0.3\textwidth]{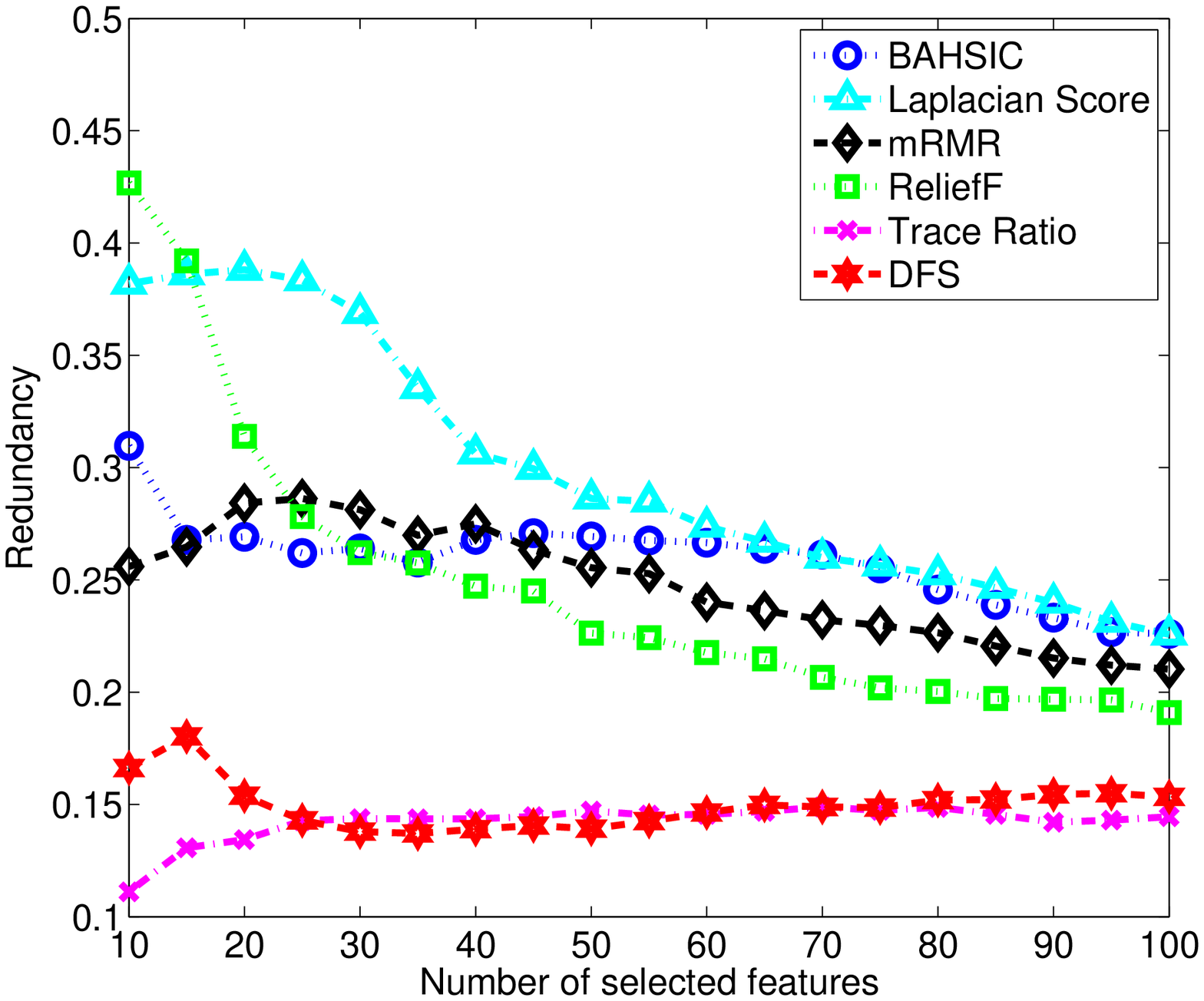}}
\subfigure[COLON]{
\includegraphics[width=0.3\textwidth]{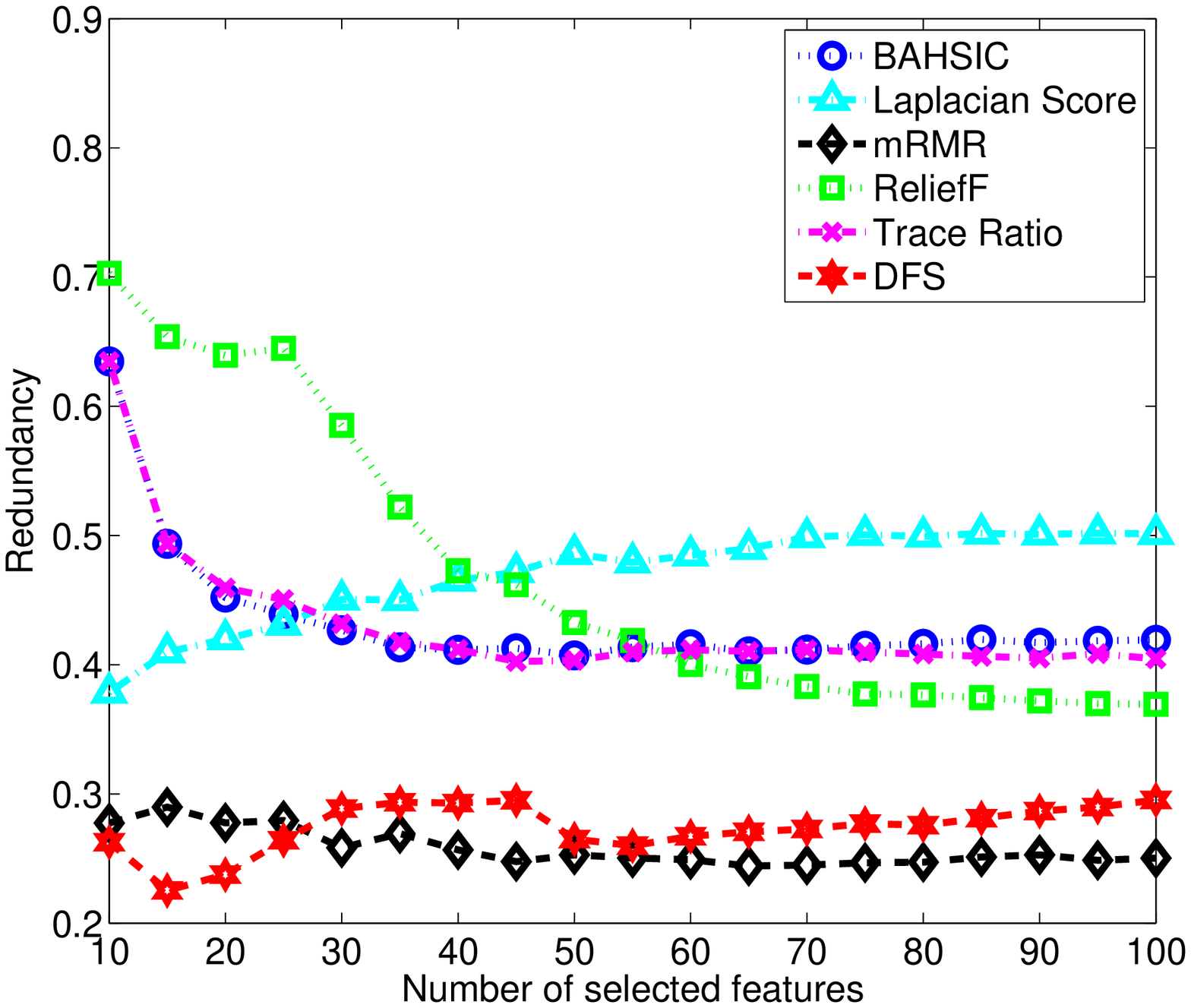}}
\subfigure[LUNG]{
\includegraphics[width=0.3\textwidth]{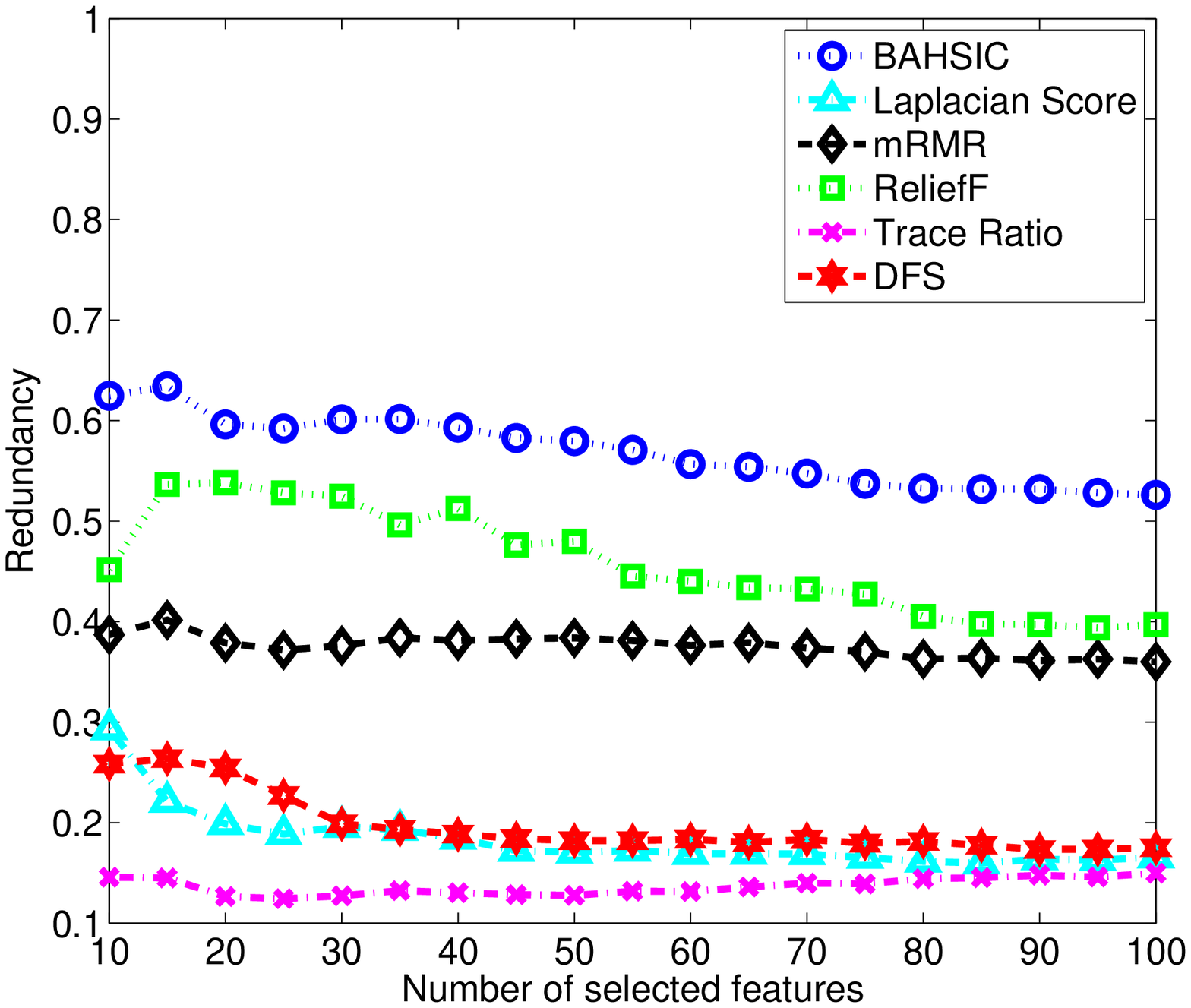}}
\subfigure[ISOLET5]{
\includegraphics[width=0.3\textwidth]{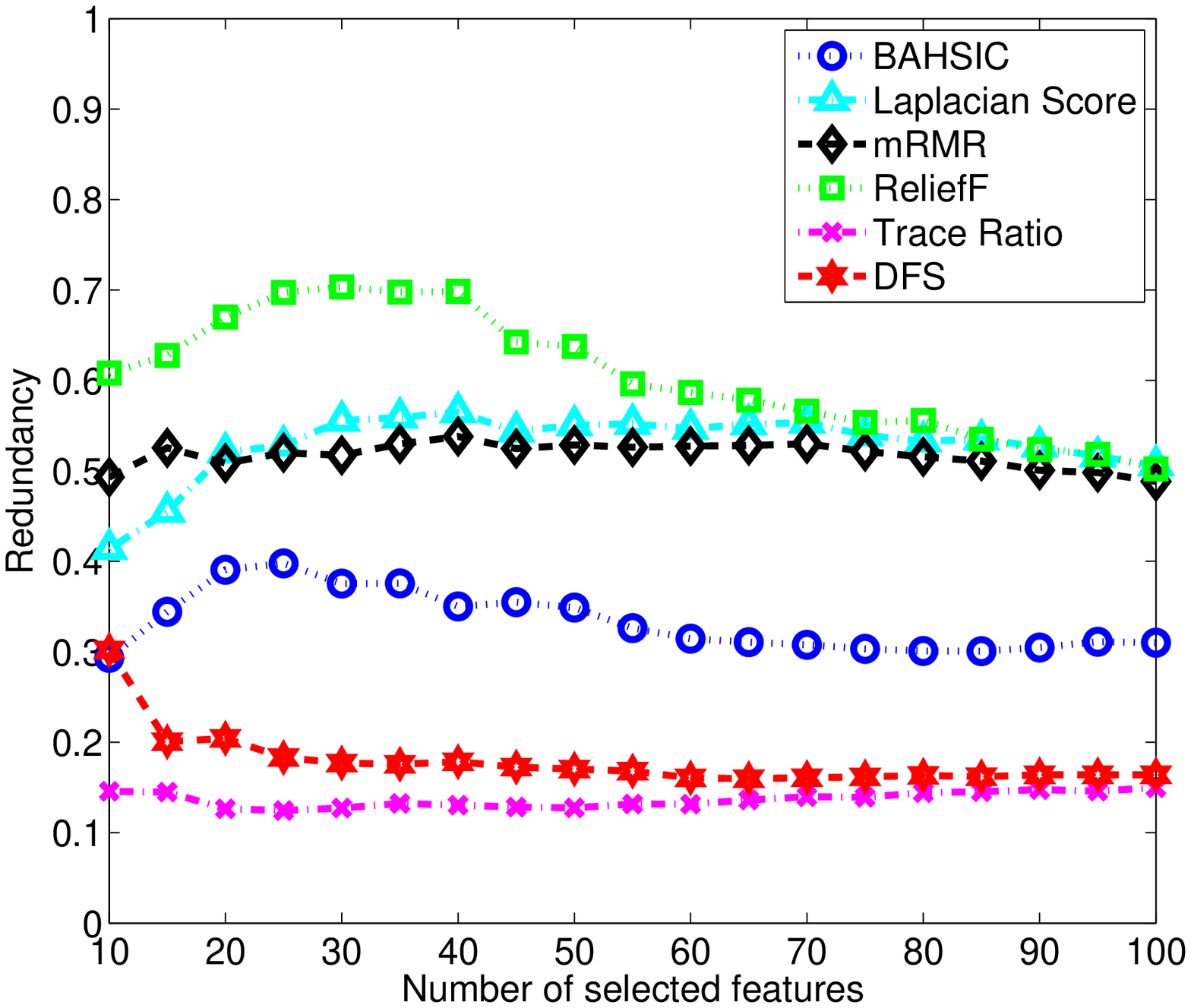}}
\centering
\caption{The redundancy rate of the sets of features of different size selected by different feature selection methods. }
\label{Fig2.Redundancy}
\end{figure*}

Fig. \ref{Fig1.SVMAcc} shows the classification accuracy computed by the SVM classifier on six data sets using different feature selection algorithms.
Table \ref{tabAcc1} and Table \ref{tabAcc2} show the detailed experimental results using the top 20, 40, 60, 80 features respectively. The last line of both tables is the average accuracy over all the data sets for each feature selection approach.
Fig. \ref{Fig2.Redundancy} represents the corresponding redundancy rate when different numbers of features are selected by different feature selection algorithms.

As shown in Fig. \ref{Fig1.SVMAcc}, with the increase in the number of selected features, the trend of classification accuracy of different feature selection methods on different data sets varies. For data sets COIL20, ORL and ISOLET5, all feature selection approaches achieve higher classification accuracy with more selected features. A similar tendency can be found on the USPS data set, with only BAHSIC's performance fluctuating widely. On the data set COLON, the classification accuracy achieved by each method fluctuates within a certain range. On the LUNG data set, DFS and mRMR level off at about 97.00\% and 94.50\% respectively, while the classification accuracy of other methods increase with fluctuation.

Interestingly, on data sets COIL20, USPS and ISOLET5, the classification accuracy achieved by Laplacian Score and Trace Ratio are approximately the same. The reason may be that with a special graph structure, Laplacian Score is equivalent to Fisher Score, and the weight matrices in Trace Ratio are also constructed in the Fisher LDA manner. While on the data sets ORL, COLON and LUNG, Laplacian Score surpasses Trace Ratio.

In terms of the classification accuracy, most of the time DFS outperforms all the baseline methods on all data sets. Especially, on the COLON data set, DFS achieves 8.07\% to 19.35\% improvement compared to the best result of all the other methods. We compute the average classification accuracy over all data sets for each method using the top 20, 40, 60, and 80 features respectively. On average, DFS consistently outperforms the other five methods on all data sets. The mRMR algorithm performs the second best when 20 and 40 features are selected. Laplacian Score replaces mRMR when 60 and 80 features are used. Compared with mRMR or Laplacian Sorce, DFS obtains 7.96\%, 9.92\%, 7.31\% and 6.22\% relative improvement respectively.

From Fig. \ref{Fig2.Redundancy}, we can see that feature subsets selected by DFS on all data sets consistently have a low redundancy rate. DFS selects features whose combination can lead to directions where data points from different classes are far from each other. Similarly, ReliefF's evaluation criterion is to select features that contribute to the separation of the samples from different classes. However, ReliefF shows no advantage in handling feature redundancy.

One point should be highlighted here. As seen from the results, in most cases, the redundancy rate of the selected feature subset decreases as the number of selected features increases. This seems to be counter-intuitive. Redundancy stems from the inter-correlation between the selected features, and thus the total amount of redundancy should increase when the selected feature subset is enlarged.
However, the redundancy rate is calculated by averaging the feature-feature inter correlation coefficients. It indicates the mean redundancy level of the selected feature subset, not the total amount of redundancy. Therefore, the redundancy rate of the enlarged feature subset can be higher or lower than that of the original one. In the case that the classification accuracy can be guaranteed, the main goal of feature selection is to reduce the redundancy as well as the number of features. In practice, the number of selected features is predetermined, so feature selection algorithms are generally designed to seek for high classification accuracy and low redundancy. Compared with the other methods, DFS manages to achieve this goal more successfully.

Combining Fig. \ref{Fig1.SVMAcc} and Fig. \ref{Fig2.Redundancy}, we know that there is no definite relationship between a feature subset's discriminative power and its redundancy rate. That is to say, a feature subset with higher discriminative power does not necessarily have lower redundancy and vice versa.

In summary, DFS, which combines discriminant analysis and ${{\ell }_{2,p}}$-norm regularization, can enhance the feature selection performance in terms of classification. There are two main reasons for this. First, DFS selects features jointly by using the learning mechanism,
hence, the interactions among the whole set of features are considered.
Second, the optimization of DFS impels it to select the most discriminative features and remove the redundant ones simultaneously.

\afterpage{\clearpage}

\subsection{Comparison of DFS with Different $p$ Values}

The value of $p$ balances the sparsity and convexity of the formulation of DFS. The closer to 0 the value of $p$ is, the sparser the representation is. In this subsection, we compare the performance of DFS with different $p$ values.
Note that our goal in extending ${\ell}_{2,1}$-norm regularization to ${\ell}_{2,p}$-norm regularization is to find sparser solutions. We only consider the cases when $0<p\le1$ despite the fact that our algorithm is convergent for all $p\in(0,2]$.
In the experiments, data sets ORL, USPS and ISOLET5 are employed. The value of $p$ is set as \{0.001, 0.01, 0.1, 1\}.
Since DFS with different $p$ values may have different optimal regularization parameters, we tune this parameter for each $p$ and report the best results. Fig. \ref{Fig.pEffect} shows the results.

\begin{figure*}
\centering
\subfigure[ORL]{
\includegraphics[width=0.3\textwidth]{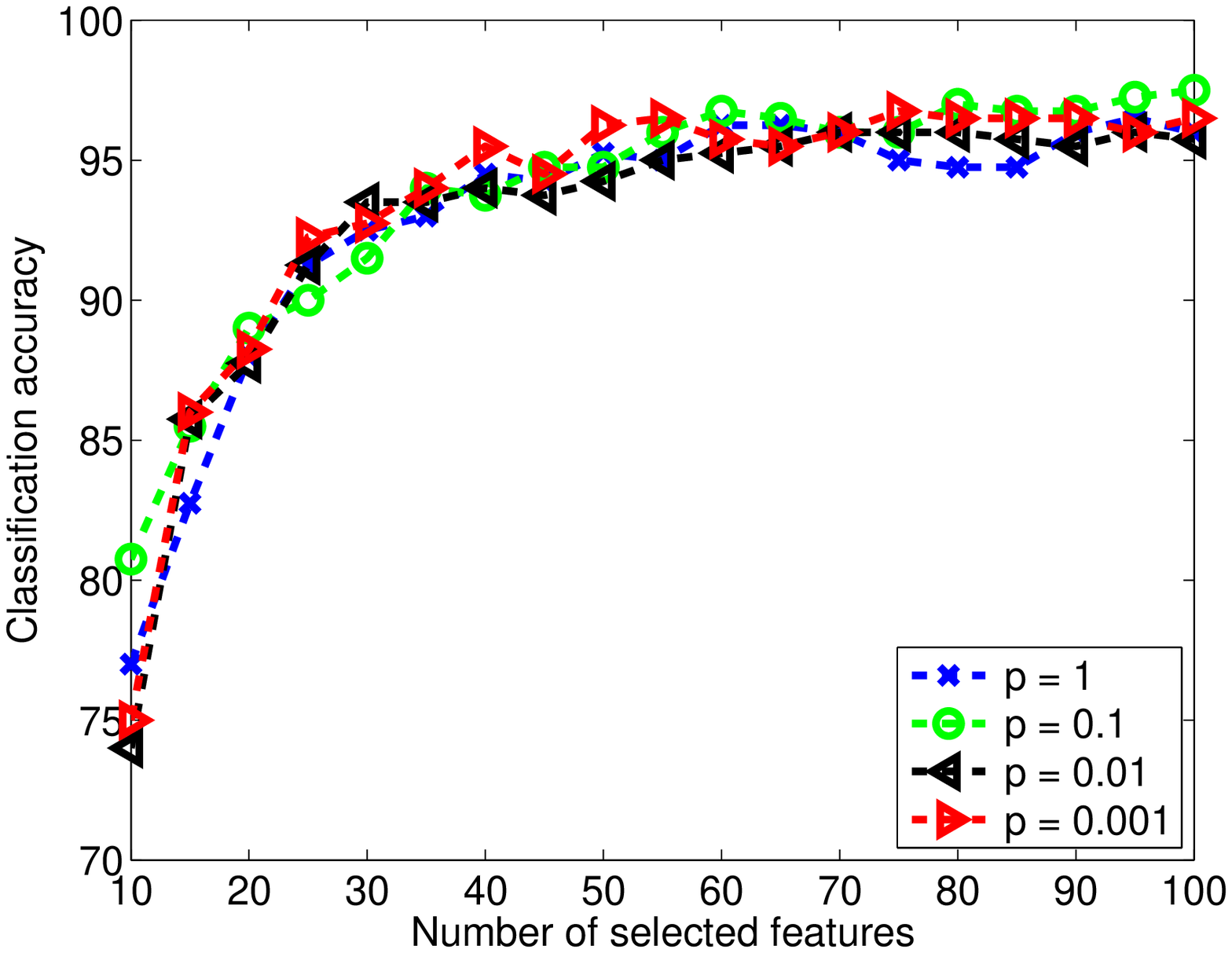}}
\subfigure[USPS]{
\includegraphics[width=0.3\textwidth]{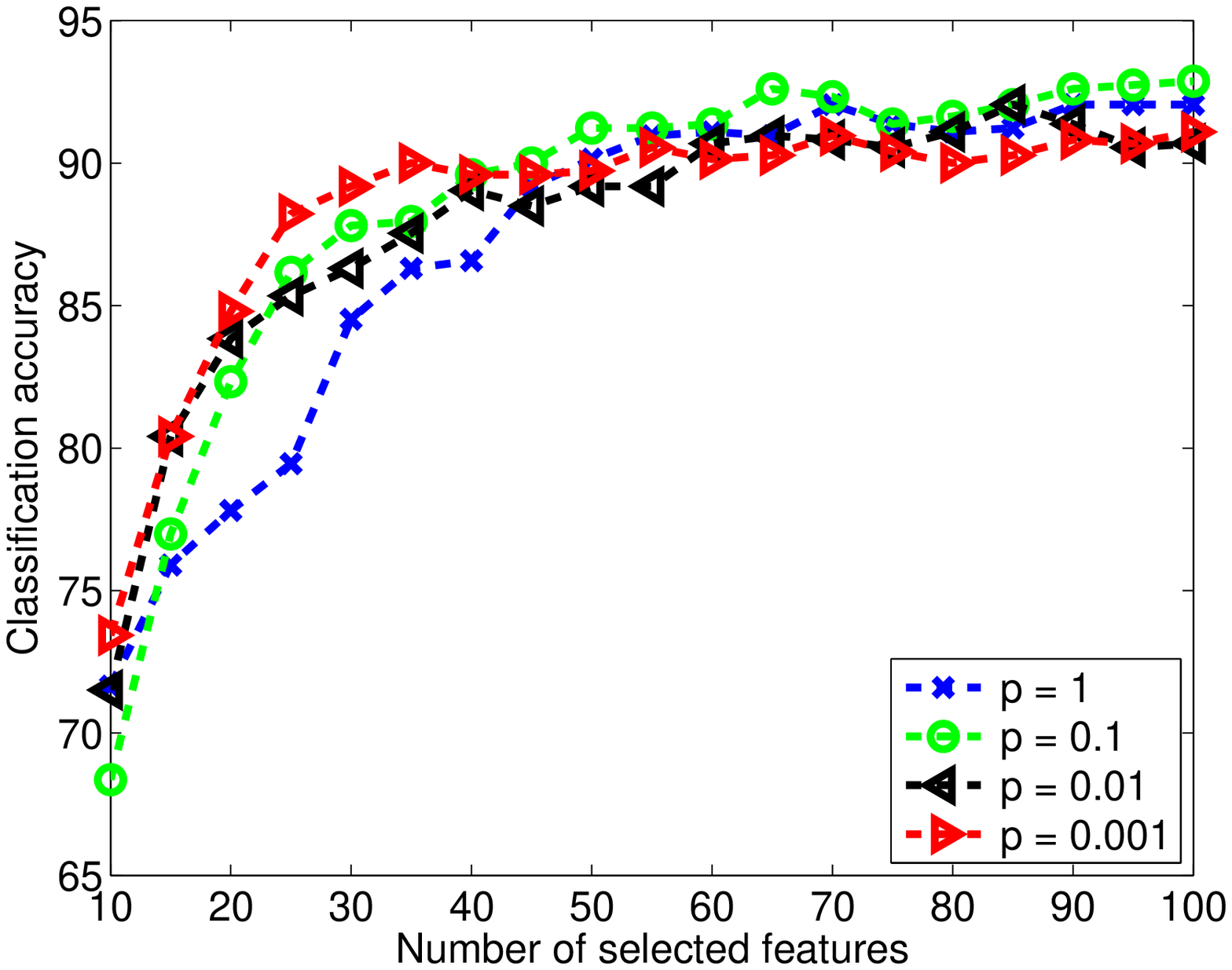}}
\subfigure[ISOLET5]{
\includegraphics[width=0.3\textwidth]{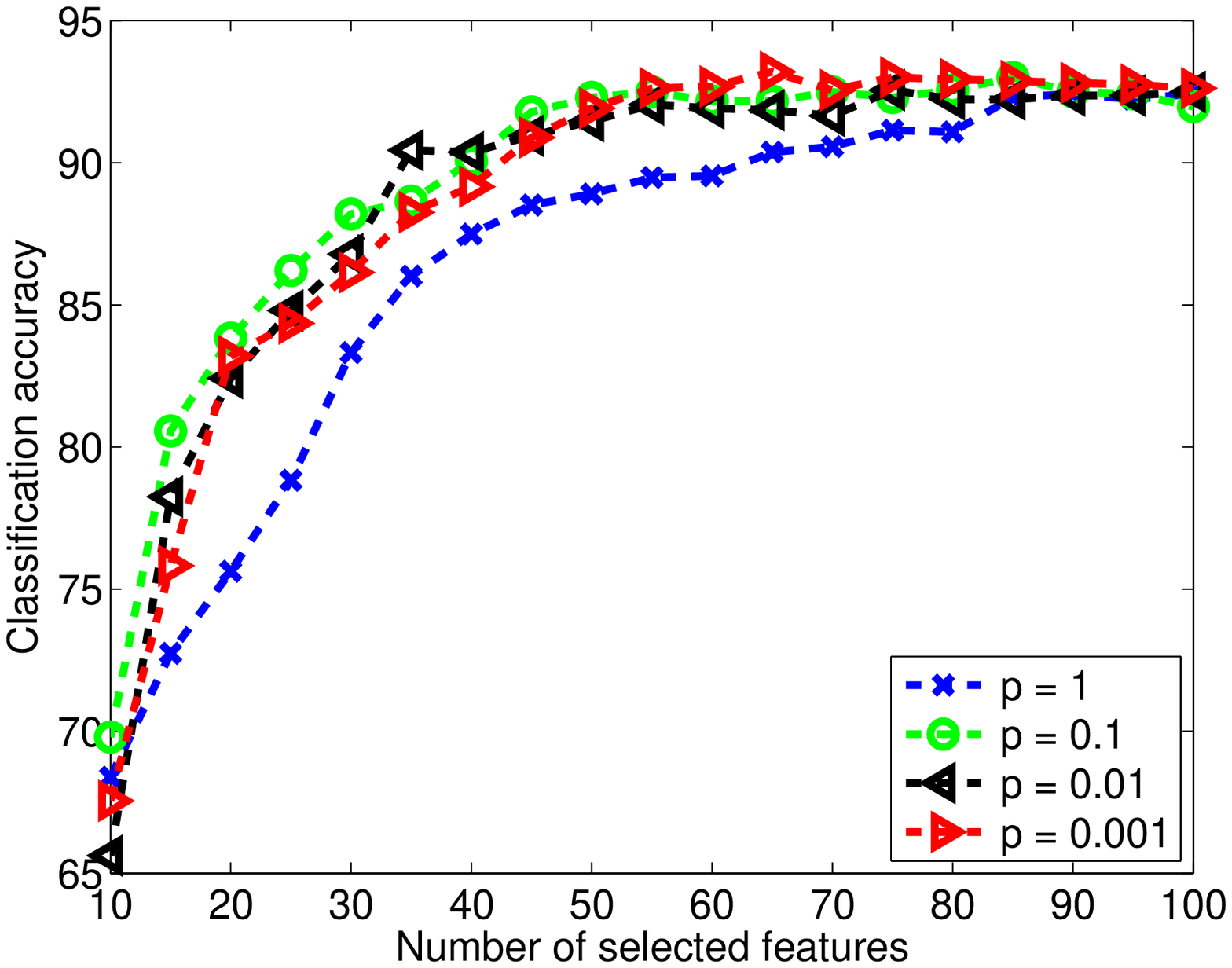}}
\caption{Comparison between DFS with different $p$ values w.r.t classification accuracy, $p = 0.001, 0.01, 0.1, 1$. }
\label{Fig.pEffect}
\end{figure*}

For brevity, we denote the performance of DFS with $p=c$ as DFS($p=c$).
From Fig. \ref{Fig.pEffect}, we can see that DFS with different $p$ values do lead to different results. On the data set ORL, the results of DFS for $p$ = 0.001, 0.01, 0.1 and 1, are very close. DFS($p=1$) is slightly behind the others when the selected features are more than 65. On the data set USPS, DFS($p=0.1$) outperforms DFS($p=1$) except when the number of selected features is 10. DFS($p=0.001$) and DFS($p=0.01$) show advantage over DFS($p=1$) when the selected features are fewer than 45. While on the ISOLET5 data set, when $p$ is less than 1, the performance of DFS consistently surpasses that of DFS($p=1$).

Though smaller $p$ value means sparser representation, the classification accuracy does not monotonically increase when $p$ decreases, as the results show. A possible reason is that our proposed algorithm only guarantees the local optimum for non-convex cases. Another reason may be that in the practical implementation of the algorithm, we may not manage to find the optimal value of the regularization parameter $\gamma$ for each $p$. On the other hand, in some cases, DFS with positive fractional $p$ values does find better solution than that when $p=1$. This is evidently demonstrated by the results on the data set ISOLET5.

\begin{figure*}
\centering
\subfigure[ORL, $p = 0.1$]{
\includegraphics[width=0.3\textwidth]{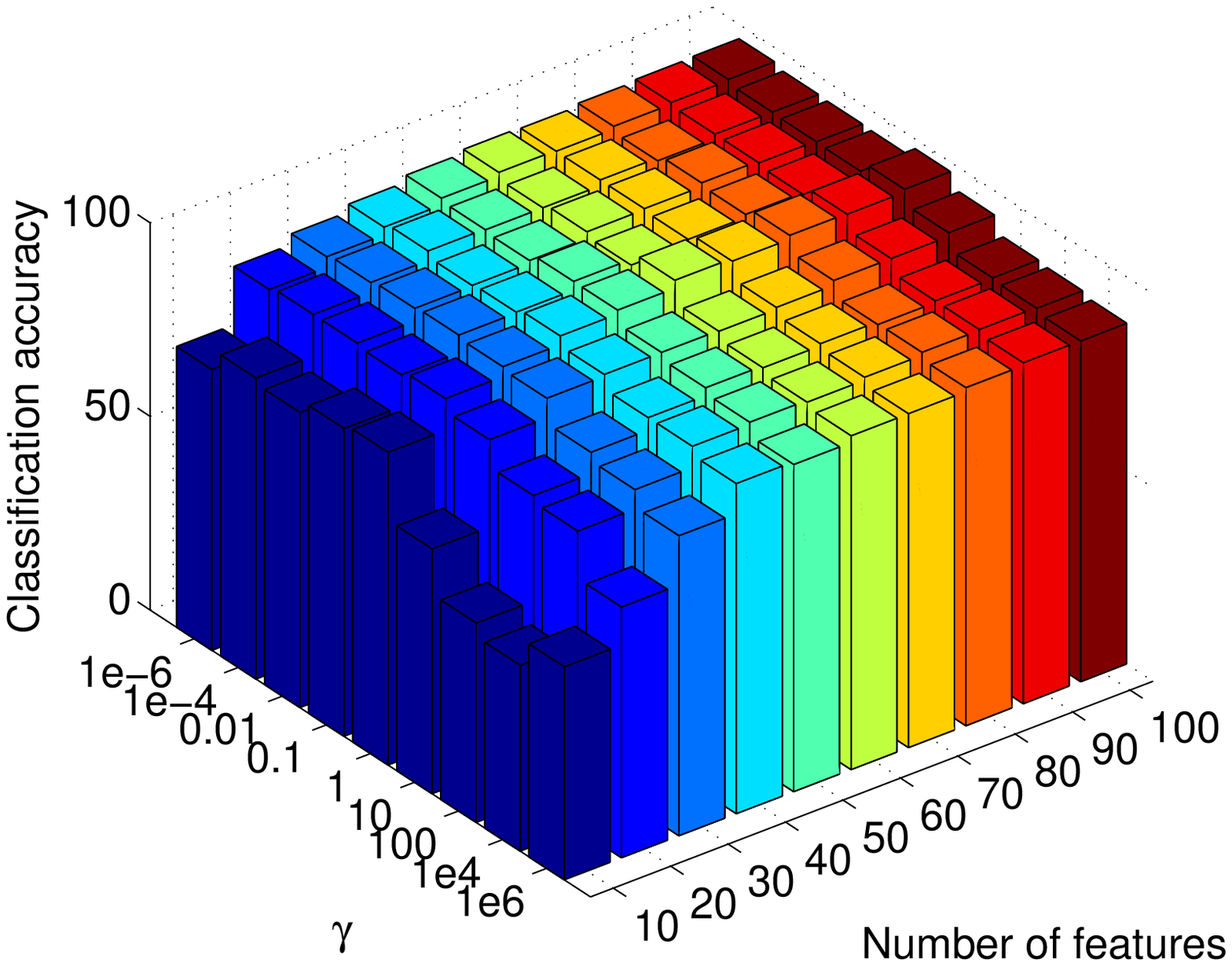}}
\subfigure[COLON, $p = 0.1$]{
\includegraphics[width=0.3\textwidth]{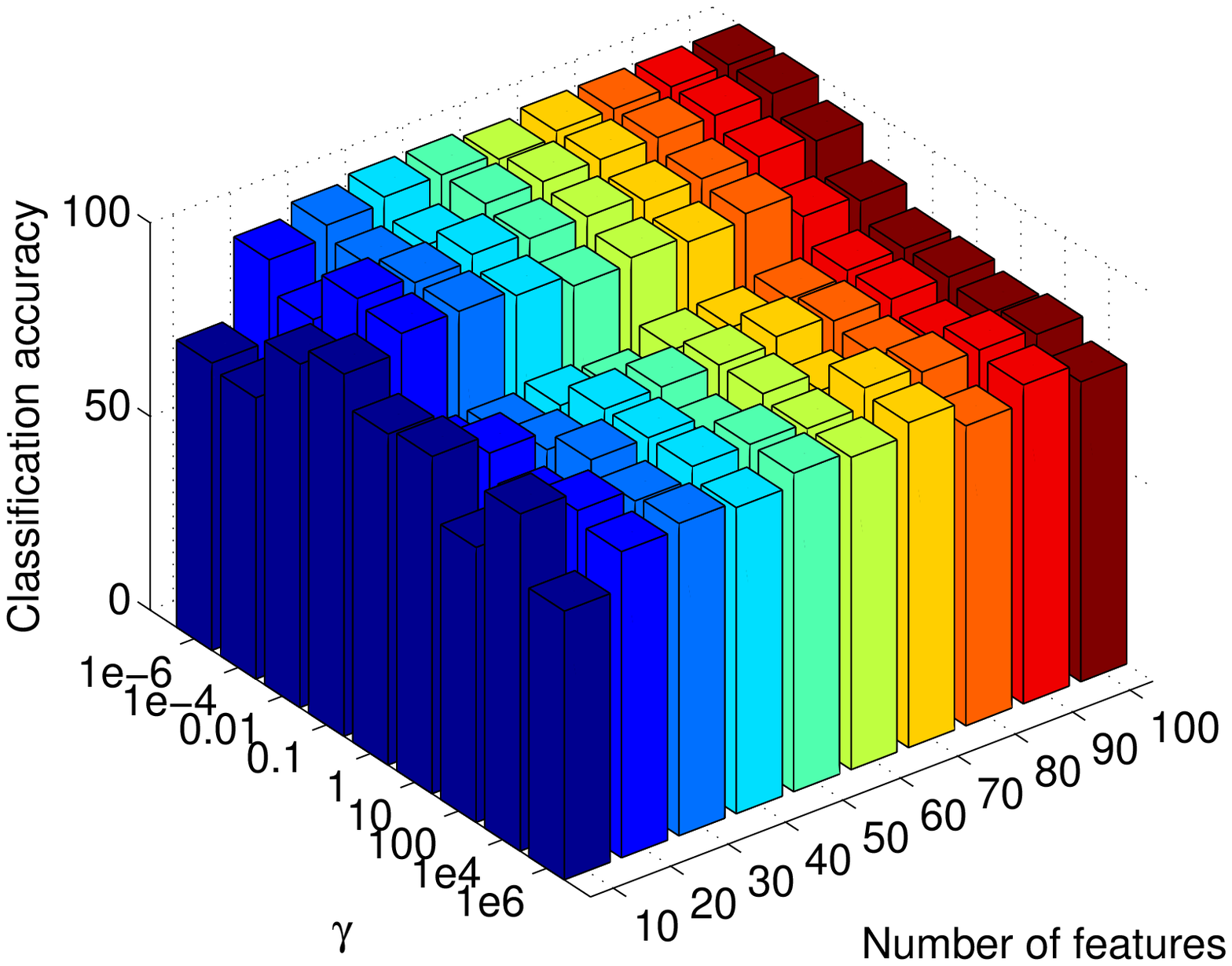}}
\subfigure[ISOLET5, $p = 0.1$]{
\includegraphics[width=0.3\textwidth]{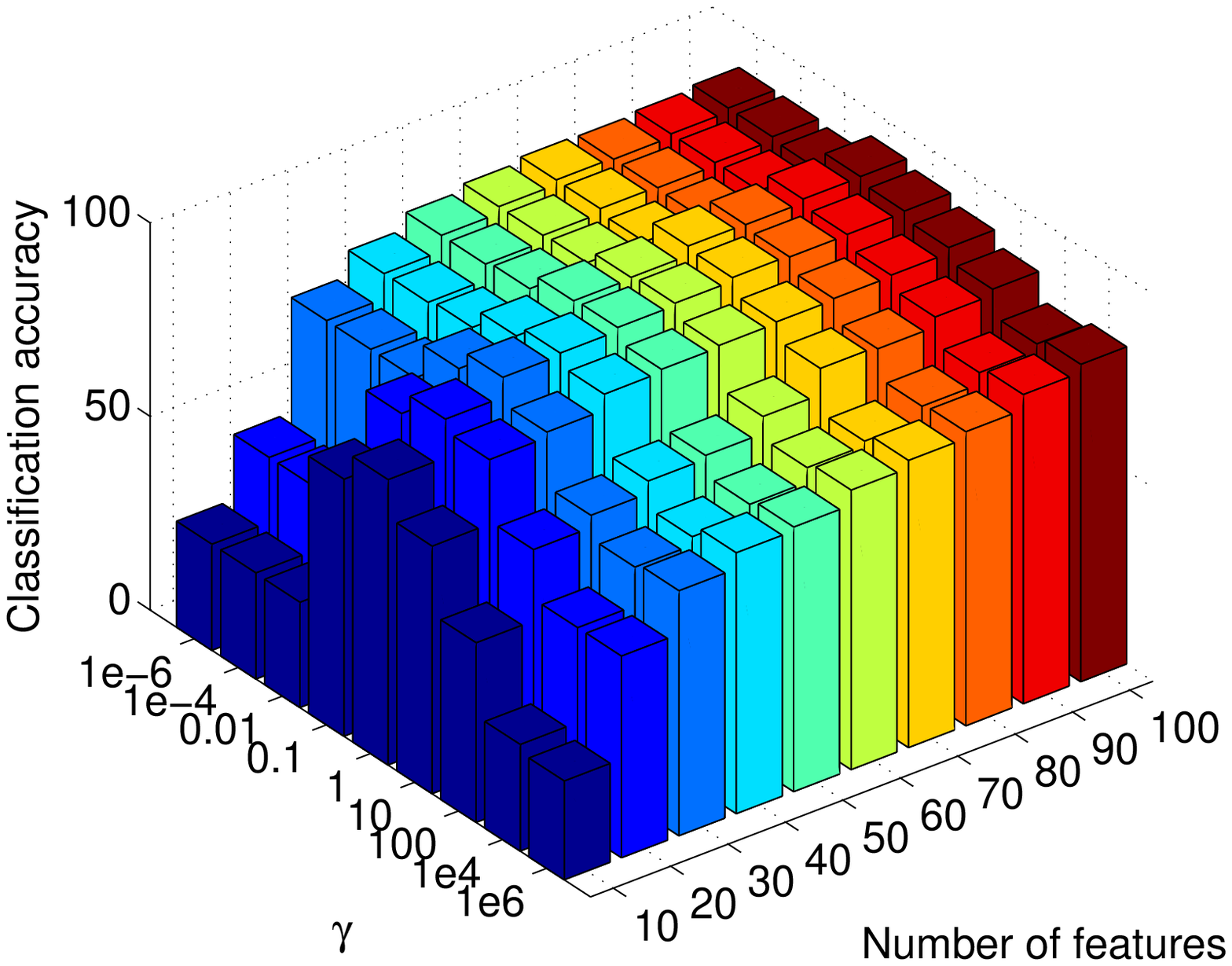}}

\subfigure[ORL, $p = 1$]{
\includegraphics[width=0.3\textwidth]{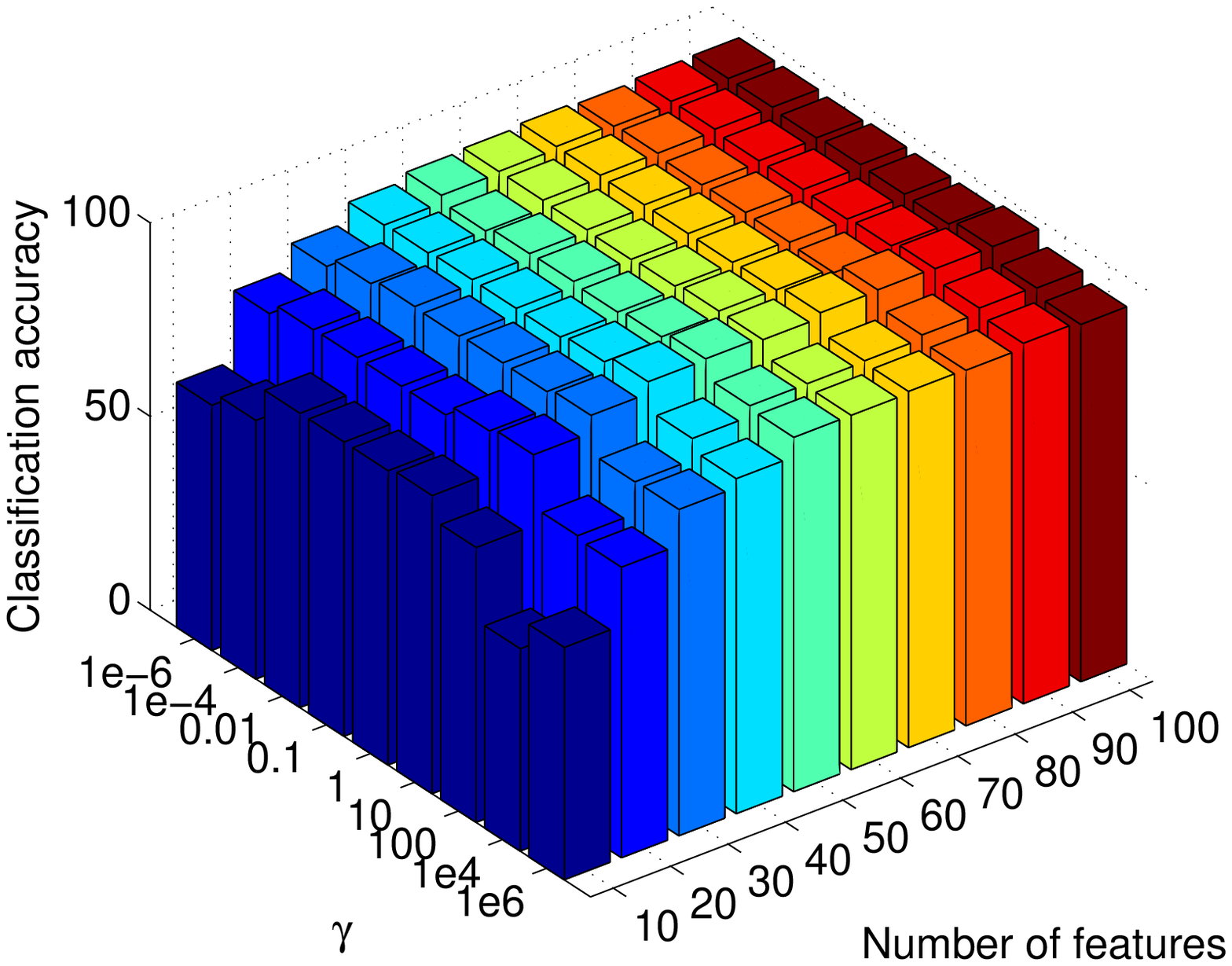}}
\subfigure[COLON, $p = 1$]{
\includegraphics[width=0.3\textwidth]{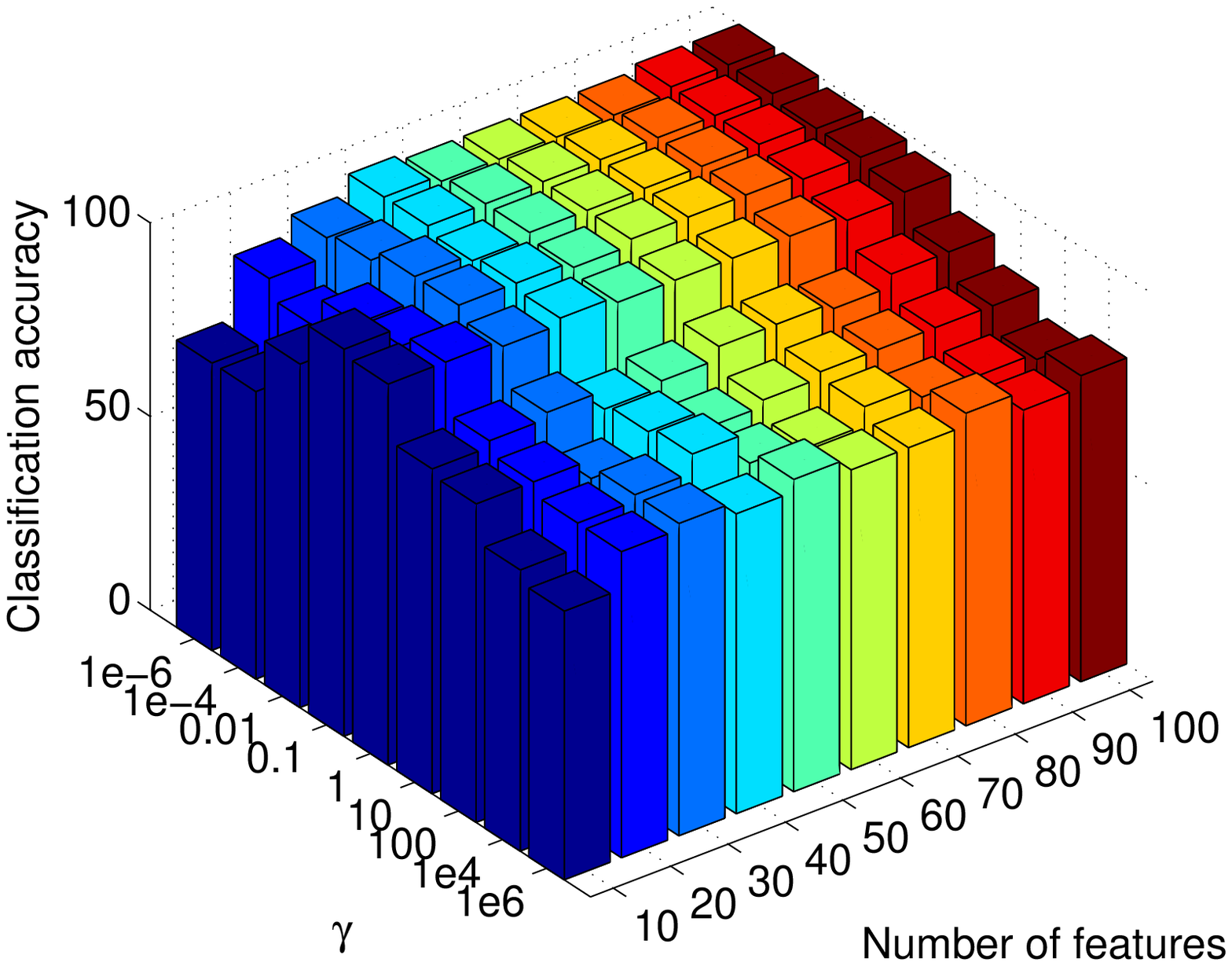}}
\subfigure[ISOLET5, $p = 1$]{
\includegraphics[width=0.3\textwidth]{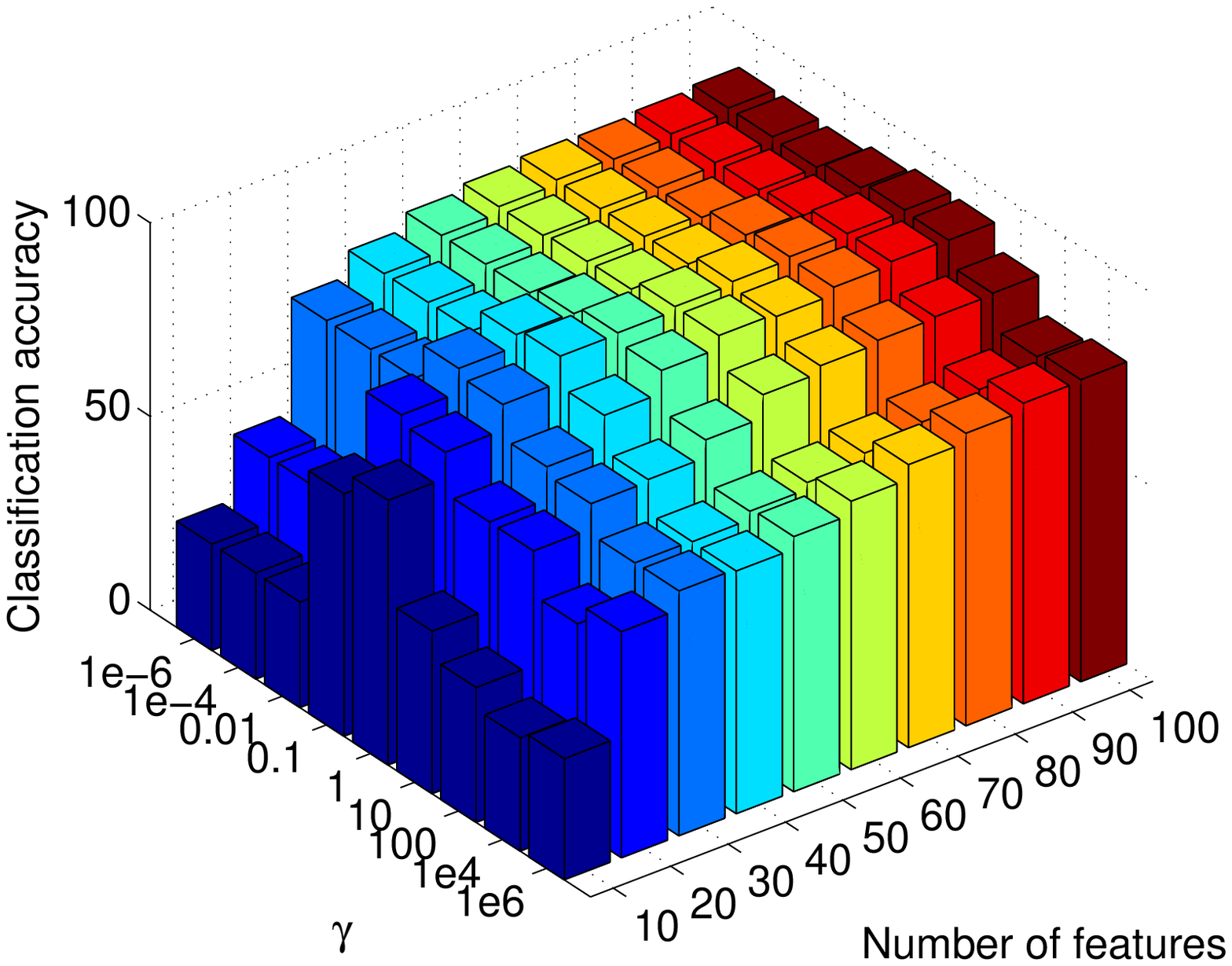}}
\caption{Performance variation of DFS with $p=0.1$ (top line) and $p=1$ (bottom line) w.r.t different values of the regularization parameter $\gamma$.}
\label{Fig.GammaEffect}
\end{figure*}

\subsection{Impact of $\gamma$ on The Performance of DFS}
The regularization parameter $\gamma $, which controls the trade-off between the criterion of LDA and the row sparsity of $\mathbf{A}$, plays an important role in DFS for feature selection. In this subsection, we study how the performance of DFS will be affected by different $\gamma$ values. Without loss of generality, we investigate the effect of $\gamma $ on DFS($p = 0.1$) and DFS($p = 1$) on data sets ORL, COLON and ISOLET5. The value of $\gamma $ is set as \{${{10}^{-6}}$, ${{10}^{-4}}$, 0.01, 0.1, 1, 10, 100, ${{10}^{4}}$, ${{10}^{6}}$\} and the number of selected features varies from 10 to 100 with an interval of 10. The performance variance w.r.t $\gamma $ and the number of selected features is showed in Fig. \ref{Fig.GammaEffect}.

As seen from Fig. \ref{Fig.GammaEffect}, DFS($p=1$) and DFS($p=0.1$) have similar performance variance trends w.r.t $\gamma$ on each data set, but different optimal $\gamma$ values.
The degree to which being affected of DFS by the value of $\gamma$ differs on these three data sets. From left to right, the performance variance w.r.t $\gamma$ grows. With the increasing of $\gamma$, the classification accuracy first ascend and then descend for both $p=0.1$ and $p=1$ on all data sets.
When the number of selected features is small, the performance of DFS is more sensitive to $\gamma$.
The performance variance created by varying $\gamma$ is comparable with that brought by different numbers of selected features.

\subsection{Convergence Analysis and Time Comparison}
To validate the efficiency of our proposed algorithm to solve DFS that involves ${\ell}_{2,p}$-norm minimization, we present the convergence behavior curves of Algorithm \ref{alg1:L2pLDFS} when $p=0.1,0.5,1$. Two kinds of results are provided. The first concerns the objective function and the other the divergence between two consecutive $\mathbf{A}$s, as shown in (\ref{DefiError}).
We show the results on data sets COIL20 and COLON since the algorithm has similar convergence behavior to the other data sets.
The convergence curves are displayed in Fig. \ref{Fig.COIL20Convergence}.

As seen from Fig. \ref{Fig.COIL20Convergence}, the objectives of DFS with $p = 0.1, 0.5, 1$ are non-increasing during the iterations, and they all converge to a fixed value. Additionally, in all cases, the divergence between two sequential $\mathbf{A}$s converges to zero, which indicates that the final results will not be changed drastically.
Furthermore, DFS converges within 20 iterations on this two data sets for the three $p$ values.
Therefore, our proposed DFS scales well in practice because of the fast convergence speed.

We report the computational time of DFS($p = 1$) and the other five baseline methods on two representative data sets COIL20 and ISOLET5. All the algorithms are tested on a laplop with 4 processors (2.27 GHz for each) and 5.87 GB available RAM memory by Matlab implementations\footnote{The code of BAHSIC offered on the author's website is written in Python. We have rewritten it in Matlab for fair comparisons.}. The results are shown in Table \ref{tabCalTime}.
As we have analyzed in Subsection B of Section V, eigen-decomposition is the most time consuming operation of DFS and it is performed in each iteration, thus DFS takes longer time. Similarly, BAHSIC involves iterations and needs to renew the data kernel matrix in each iteration, so it costs the most time in both cases.

\begin{table}[H]
    \centering
    \caption{Computational Time Comparison on Data Sets COIL20 and ISOLET5} \label{tabCalTime}
    \begin{tabular}{|l|c|c|c|c|c|c|}\hline
             & BAHSIC  &  LS  & mRMR &   RF  &   TR  &   DFS   \\ \hline
    COIL20   & 1135.93 & 0.41 & 7.25 & 51.42 & 1.23  & 96.56 \\ \hline
    ISOLET5  & 639.01  & 0.35 & 6.78 & 38.81 & 0.85  & 63.82   \\ \hline
\end{tabular}
\end{table}


\begin{figure*}
\centering
\subfigure[COIL20, $p=0.1$]{
\includegraphics[width=0.15\textwidth]{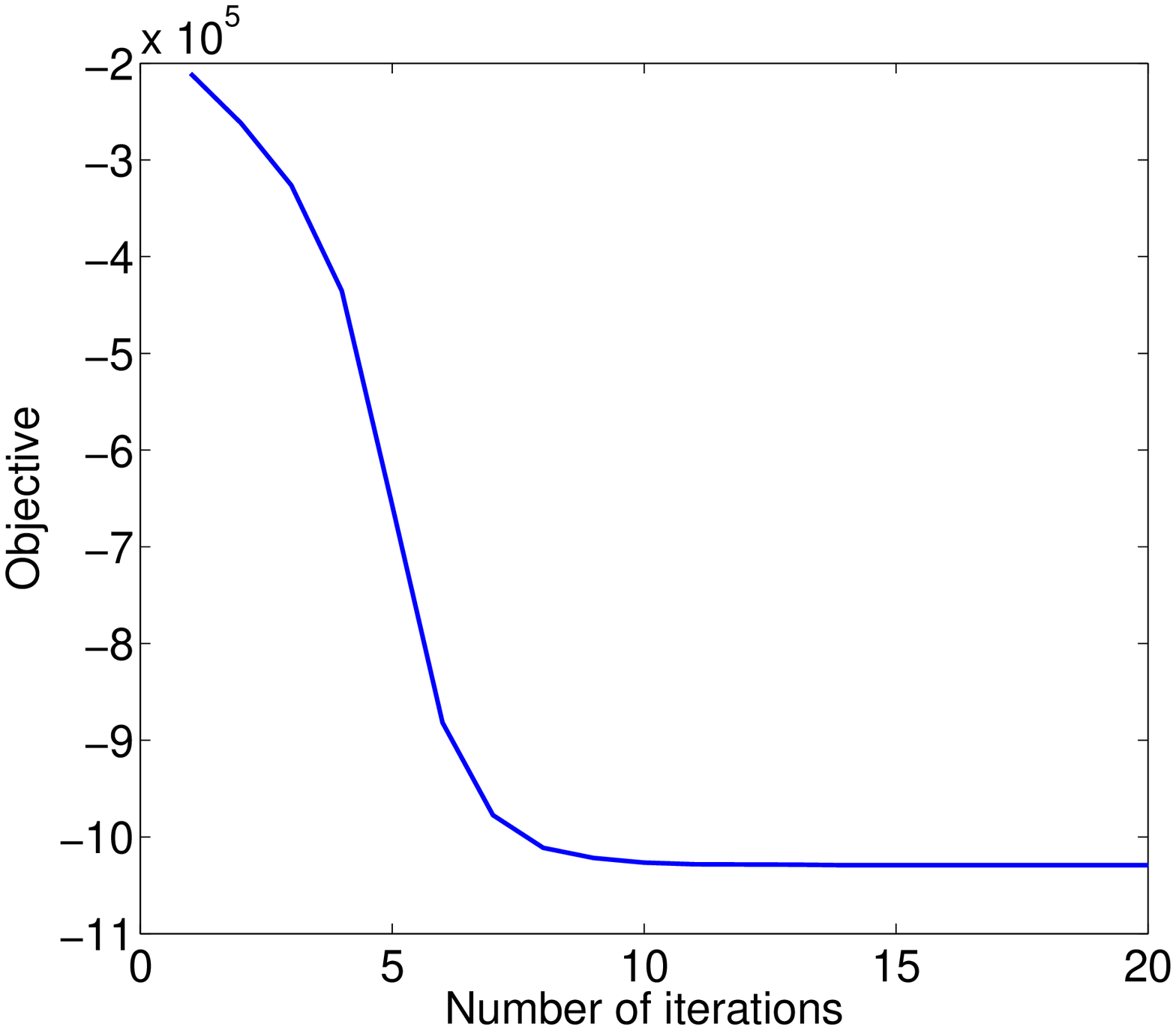}}
\subfigure[COIL20, $p=0.5$]{
\includegraphics[width=0.15\textwidth]{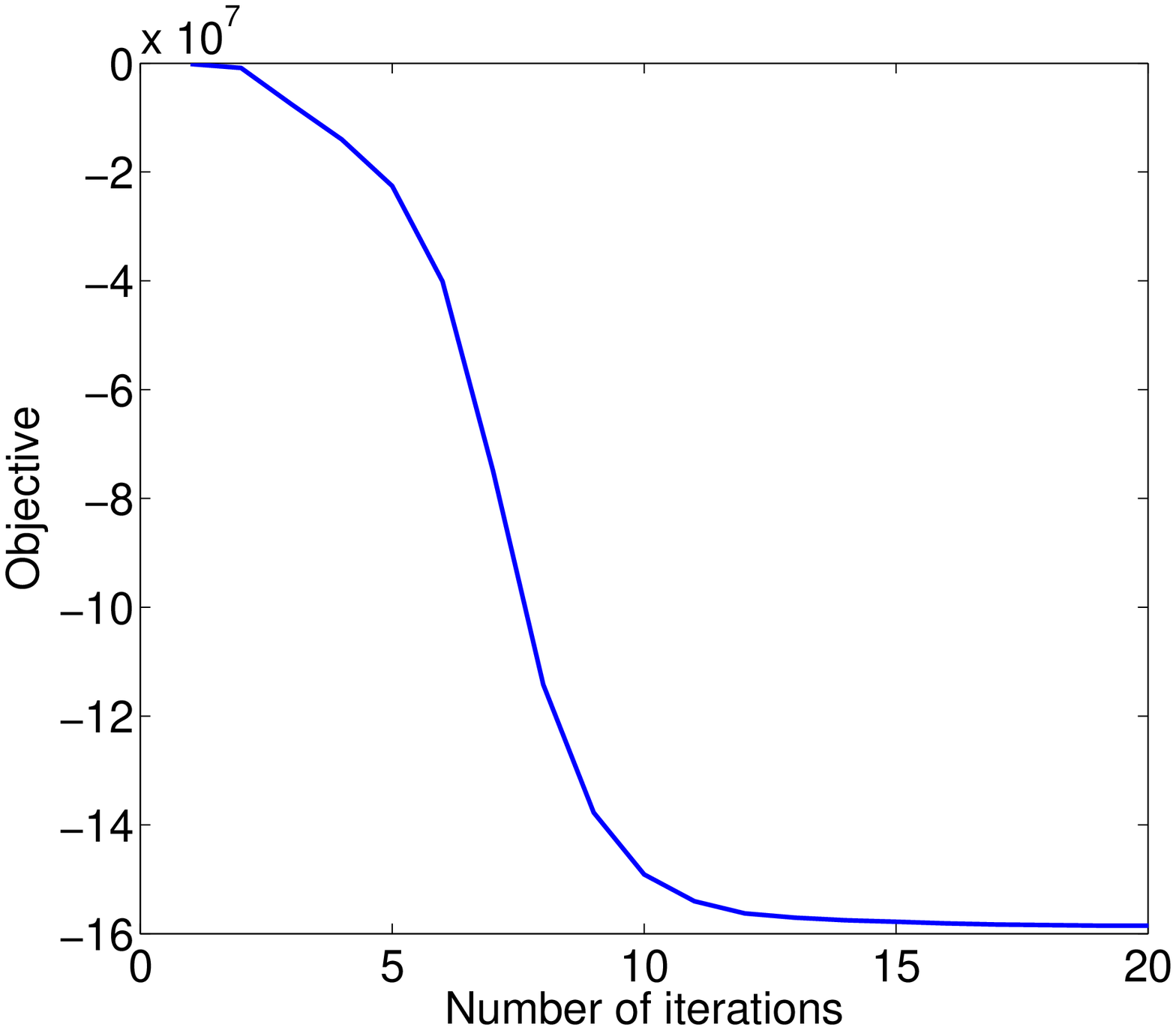}}
\subfigure[COIL20, $p=1$]{
\includegraphics[width=0.15\textwidth]{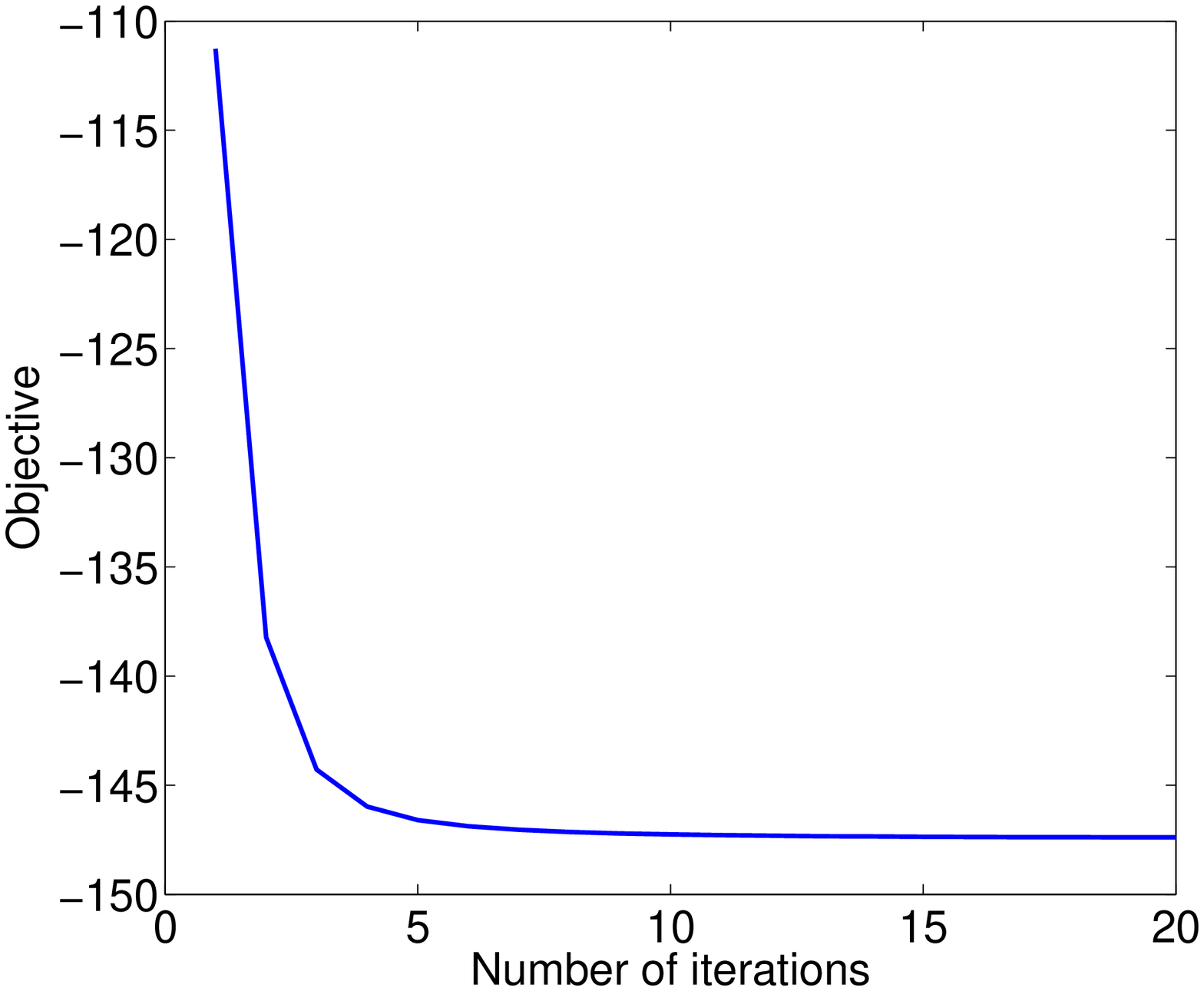}}
\subfigure[COLON, $p=0.1$]{
\includegraphics[width=0.15\textwidth]{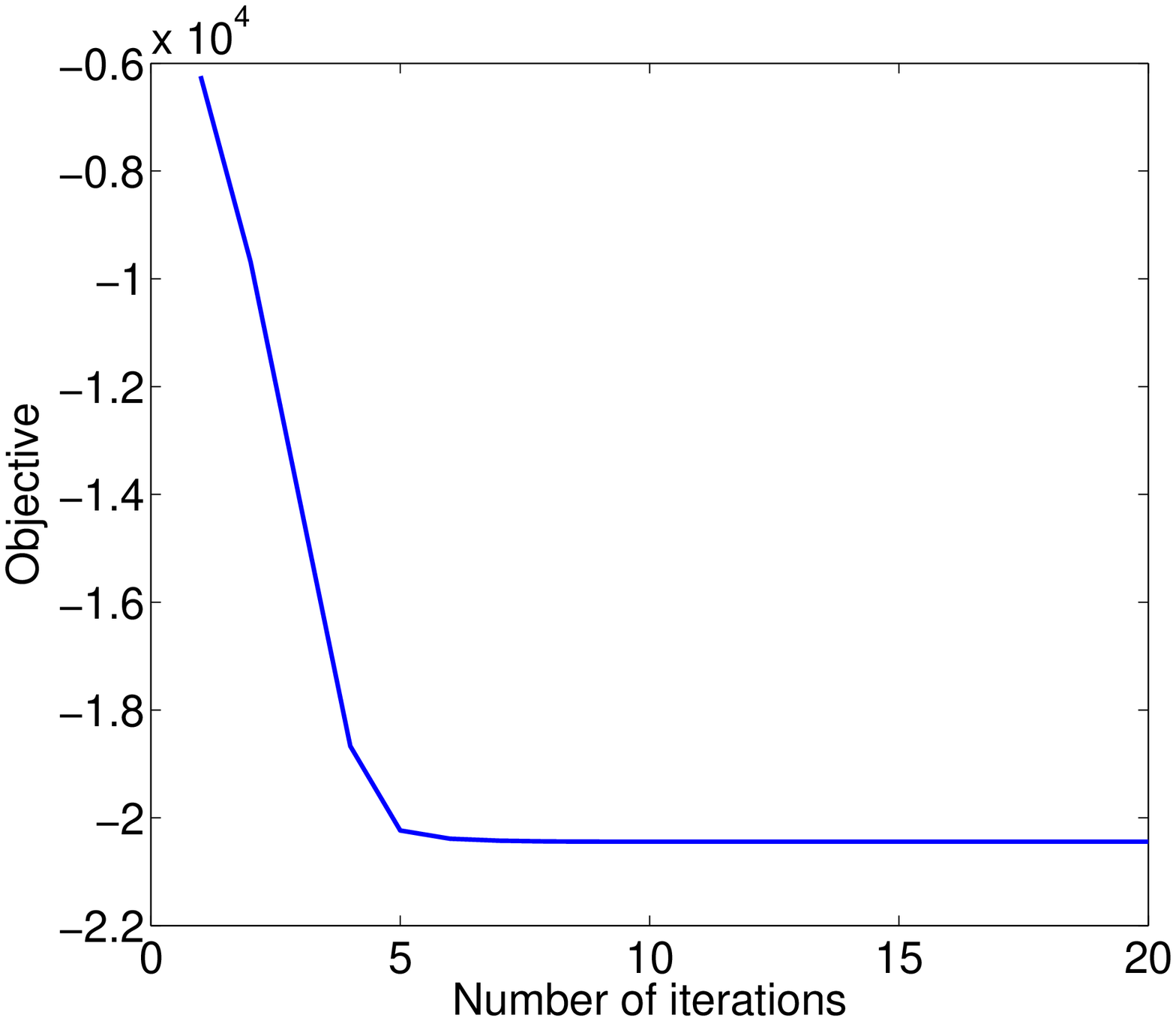}}
\subfigure[COLON, $p=0.5$]{
\includegraphics[width=0.15\textwidth]{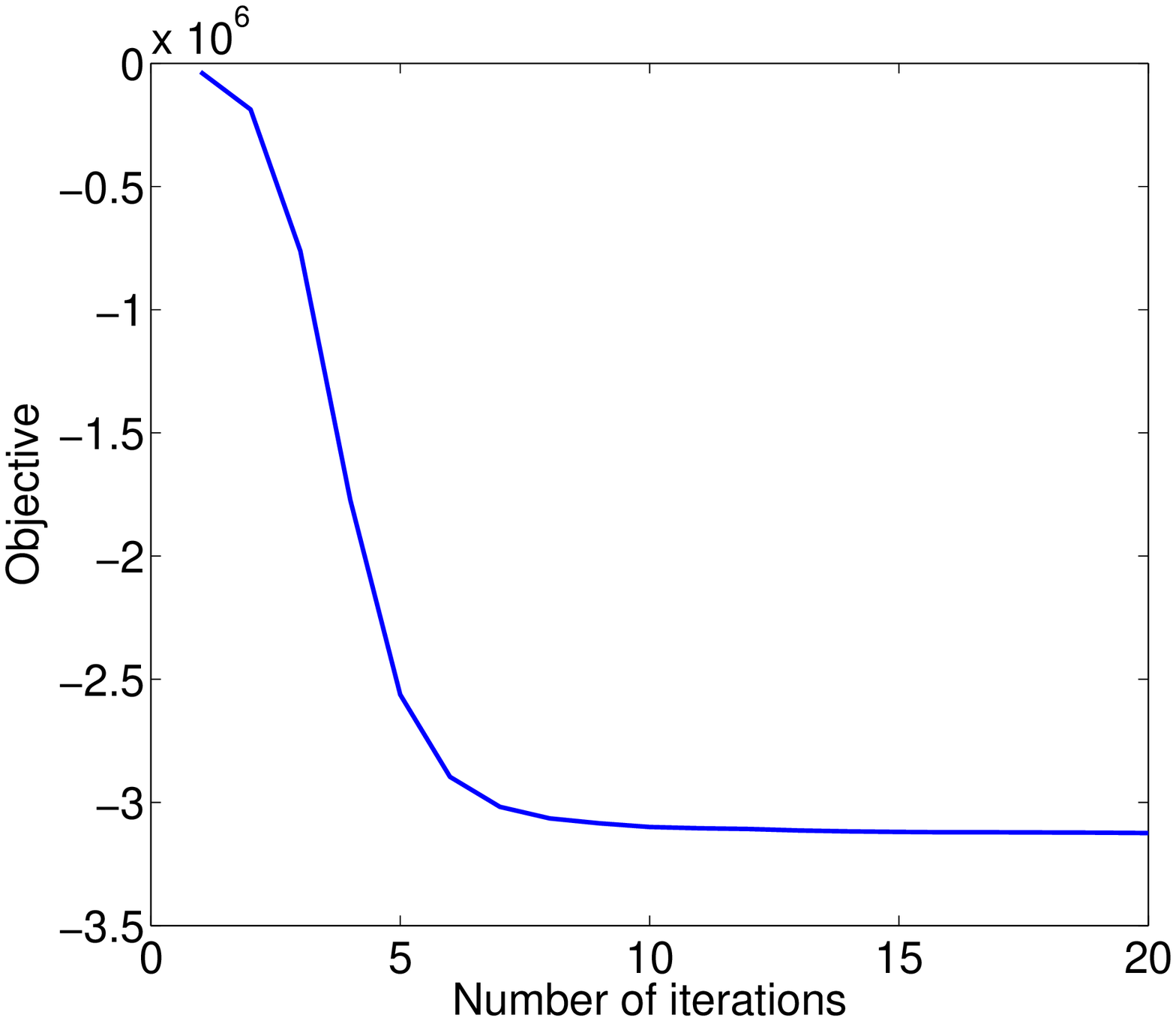}}
\subfigure[COLON, $p=1$]{
\includegraphics[width=0.15\textwidth]{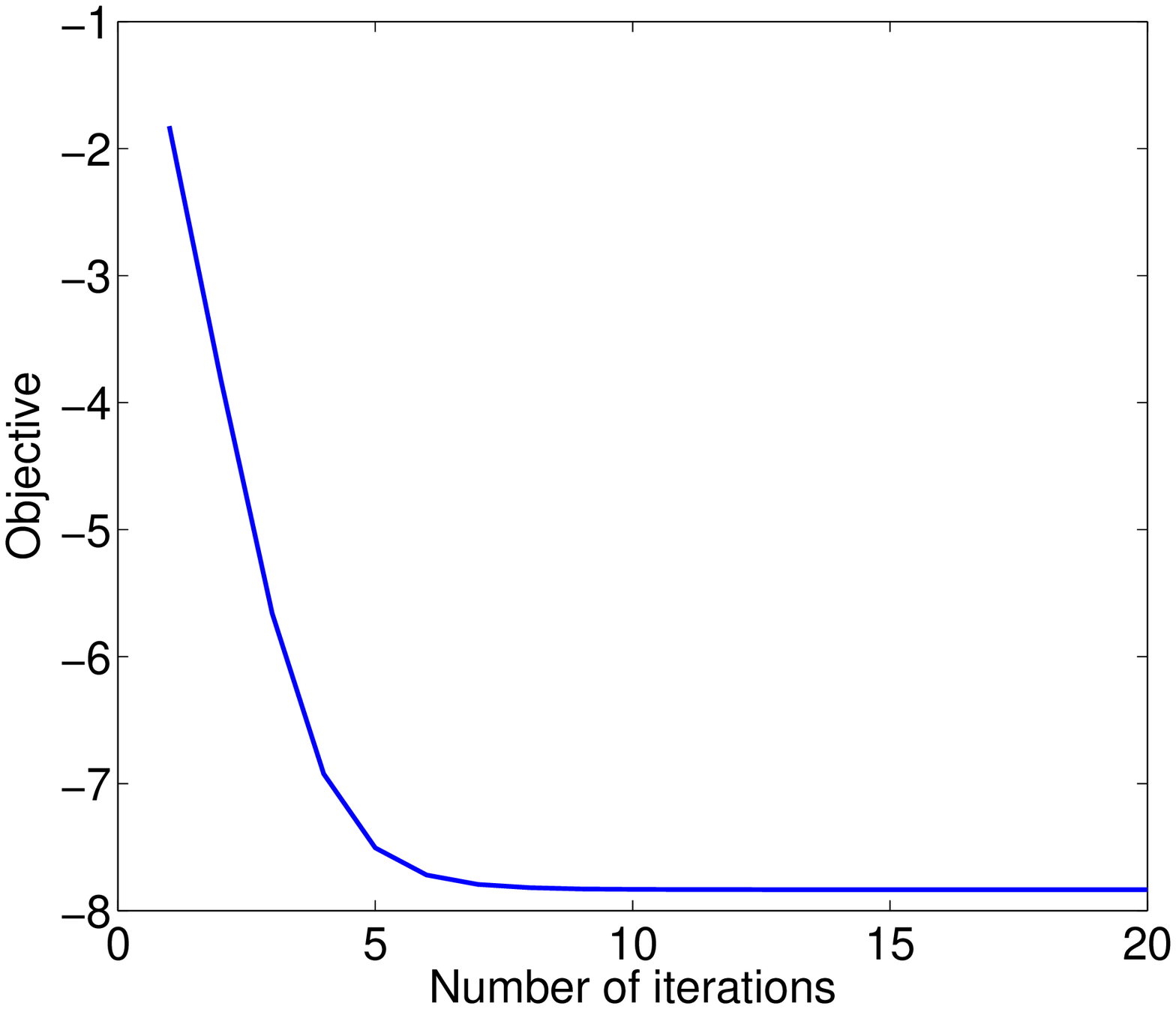}}

\subfigure[COIL20, $p=0.1$]{
\includegraphics[width=0.15\textwidth]{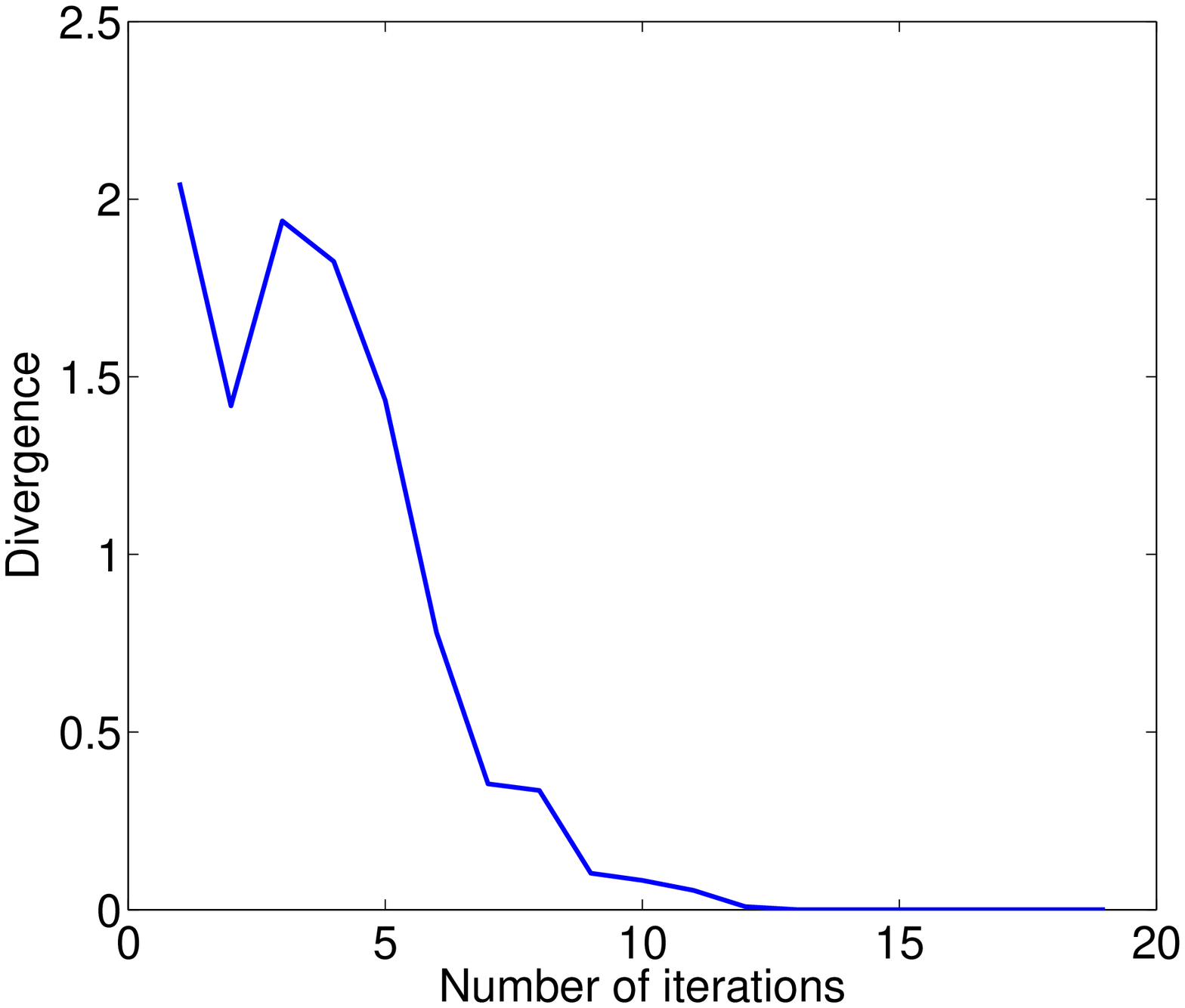}}
\subfigure[COIL20, $p=0.5$]{
\includegraphics[width=0.15\textwidth]{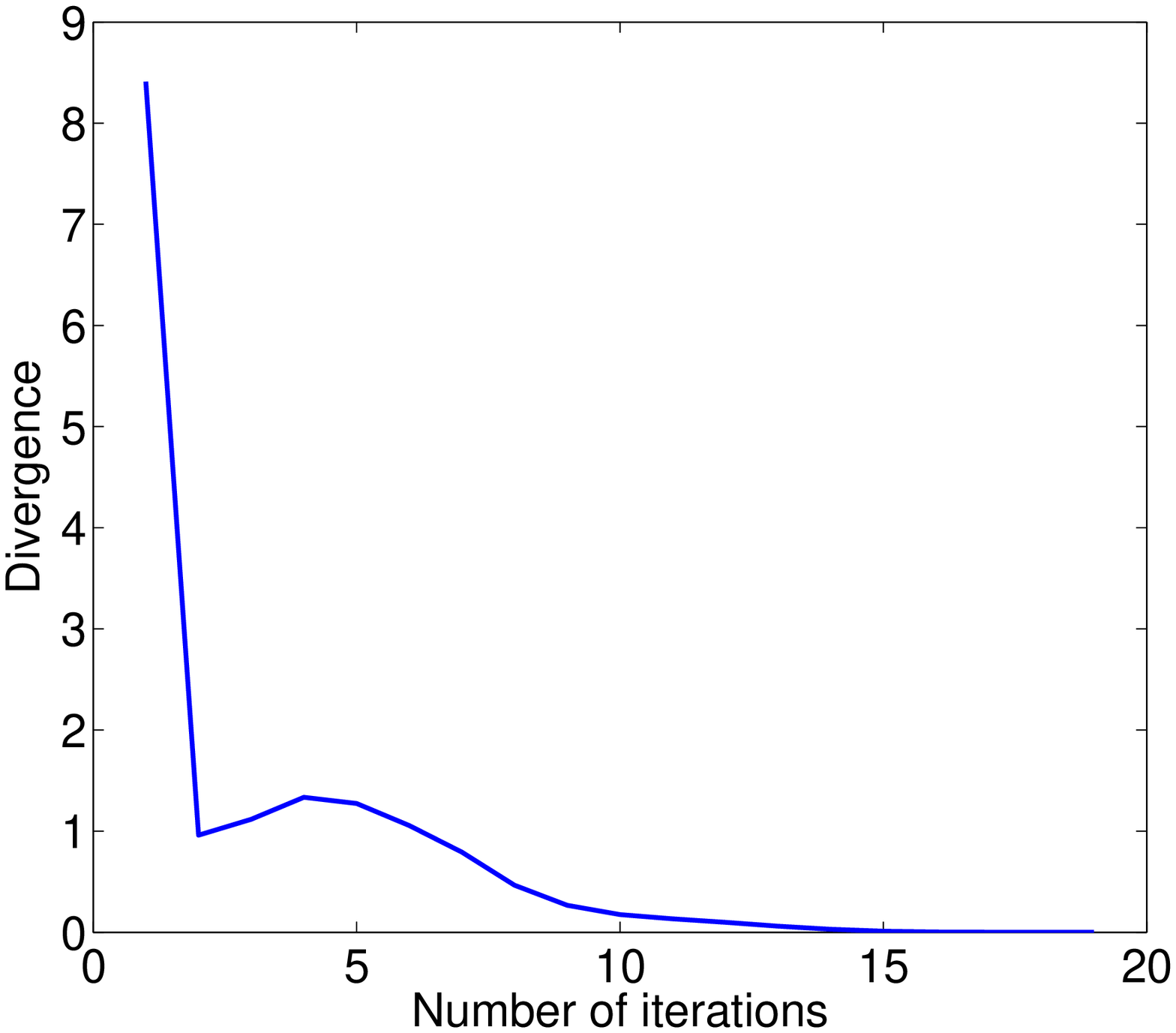}}
\subfigure[COIL20, $p=1$]{
\includegraphics[width=0.15\textwidth]{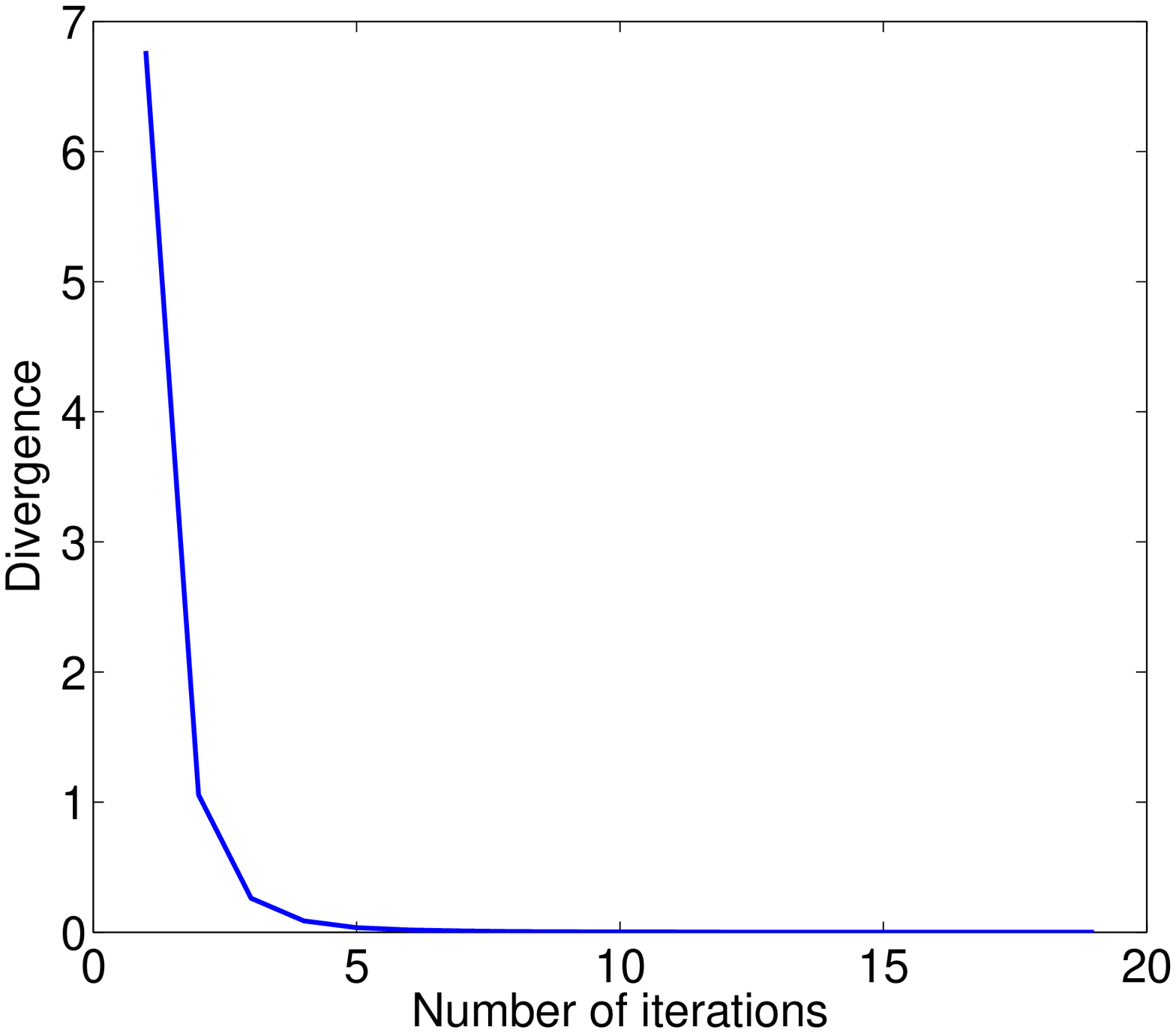}}
\subfigure[COLON, $p=0.1$]{
\includegraphics[width=0.15\textwidth]{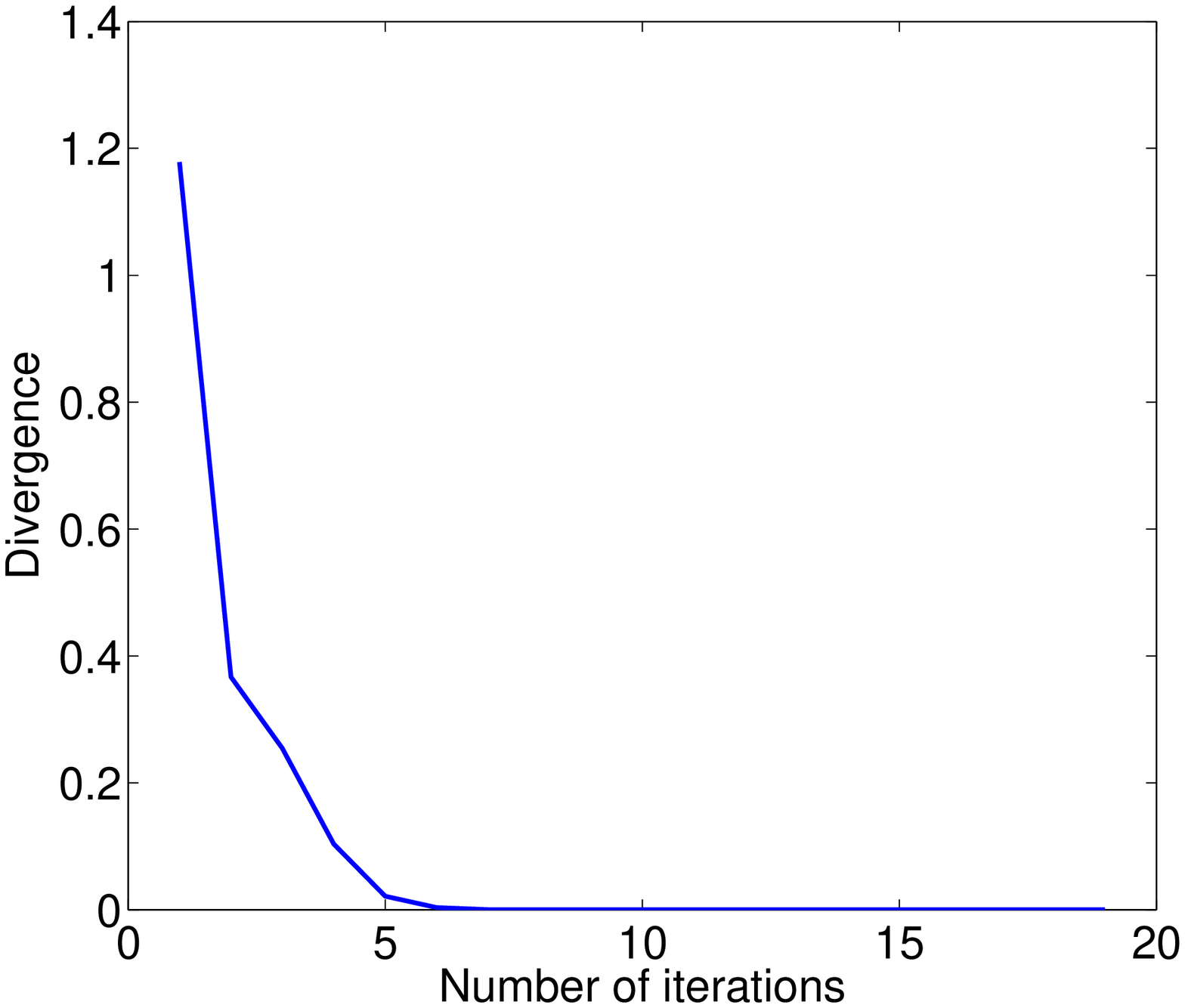}}
\subfigure[COLON, $p=0.5$]{
\includegraphics[width=0.15\textwidth]{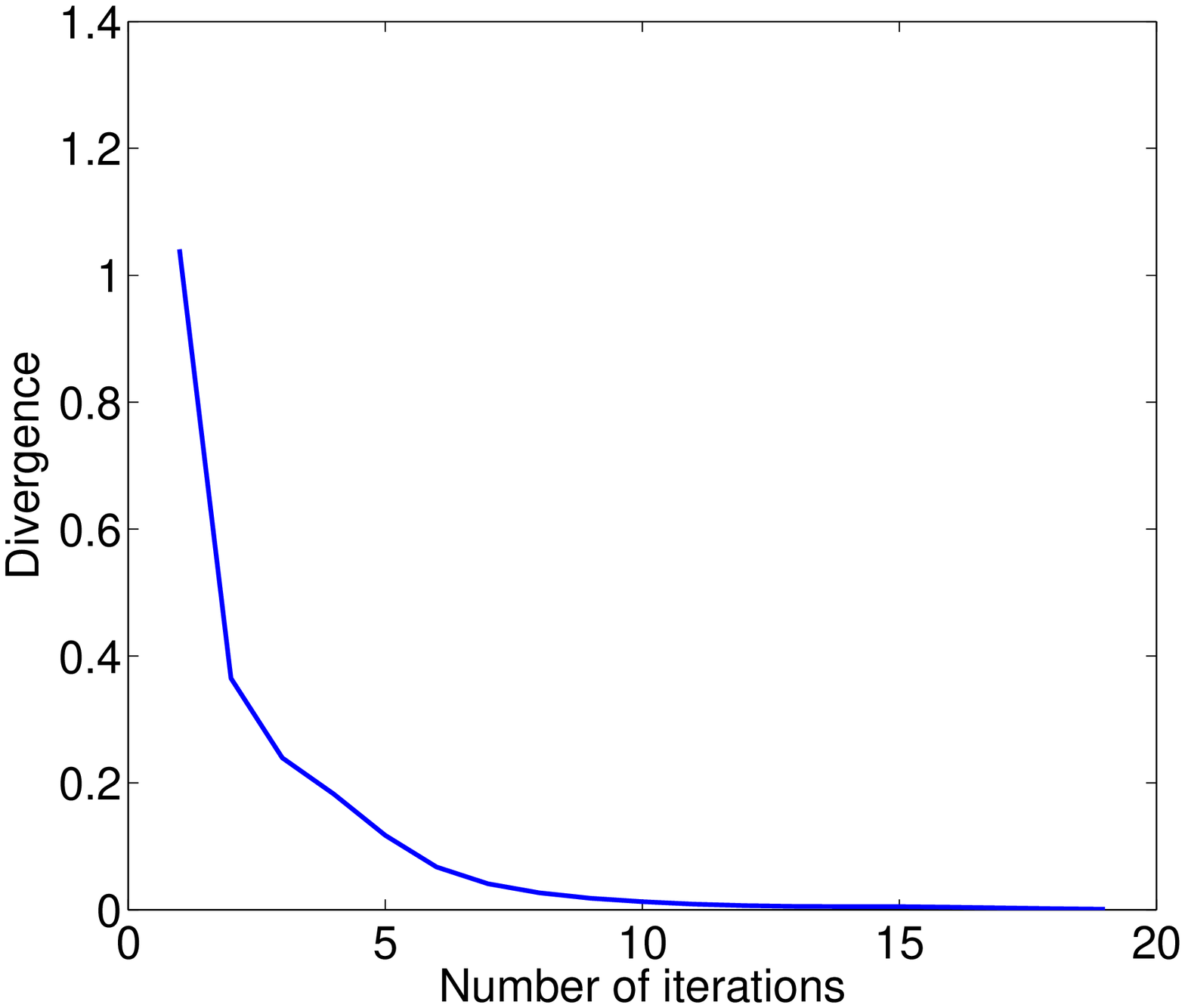}}
\subfigure[COLON, $p=1$]{
\includegraphics[width=0.15\textwidth]{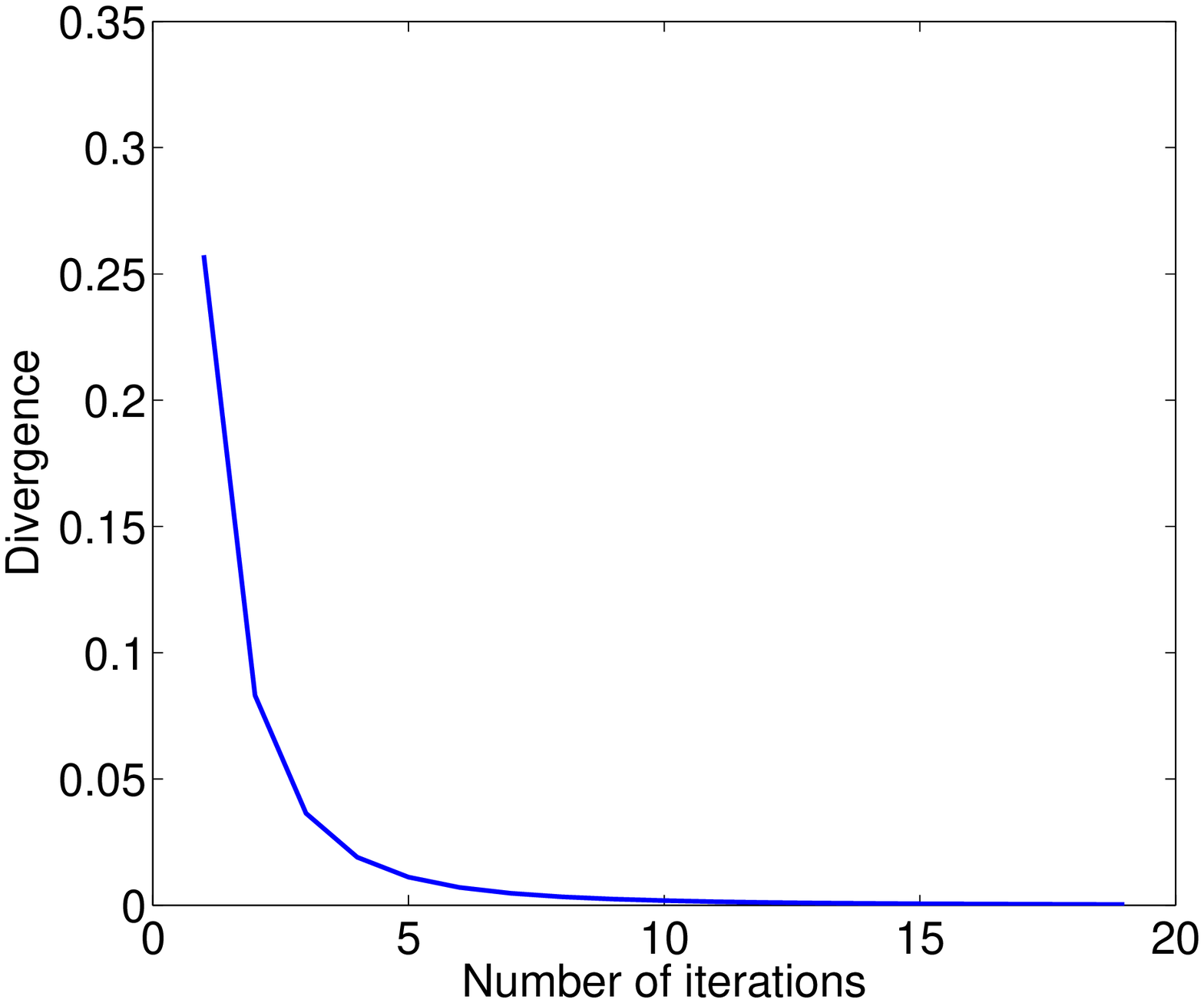}}

\caption{Convergence behavior of DFS on COIL20 (left side) and COLON (right side) respectively when $p=0.1, 0.5, 1$. Top line is the objective value of DFS. Bottom line is divergence between tow consecutive $\mathbf{A}$ measured by (\ref{DefiError}). }
\label{Fig.COIL20Convergence}
\end{figure*}

\section{Conclusion and Future Work}

In this paper, a new formulation is propounded by combining LDA and sparsity regularization for feature selection.
In particular, we manage to extend the ${\ell}_{2,1}$-norm based formulation to the ${\ell}_{2,p}$-norm regularized cases, providing more choices of $p$ to fit the variety of sparsity requirements. We derive an efficient algorithm to solve the ${\ell}_{2,p}$-norm minimization problem and prove that our algorithm will monotonically decrease the objective until convergence when $0<p\le 2$.
Moreover, our proposed DFS retains the ability to select the most discriminative features and remove the redundant ones simultaneously. This enables it to outperform competing feature selection methods. Experiments on various types of real-word data sets illustrate the advantages of our proposed method.

There are several interesting directions to investigate in the future. First, we would like to find a better way of dealing with the singularity of the total scatter matrix ${\mathbf{S}_{t}}$, which is addressed in this paper by regularization. Second, it is possible to extend this work to a kernel LDA version to deal with the nonlinear tasks. Finally, deciding the values of parameters is still an open problem, which is unsolved in many algorithms.

\bibliographystyle{IEEEtran}
\bibliography{DFSref}

\begin{thebibliography}{10}
\providecommand{\url}[1]{#1}
\csname url@samestyle\endcsname
\providecommand{\newblock}{\relax}
\providecommand{\bibinfo}[2]{#2}
\providecommand{\BIBentrySTDinterwordspacing}{\spaceskip=0pt\relax}
\providecommand{\BIBentryALTinterwordstretchfactor}{4}
\providecommand{\BIBentryALTinterwordspacing}{\spaceskip=\fontdimen2\font plus
\BIBentryALTinterwordstretchfactor\fontdimen3\font minus
  \fontdimen4\font\relax}
\providecommand{\BIBforeignlanguage}[2]{{%
\expandafter\ifx\csname l@#1\endcsname\relax
\typeout{** WARNING: IEEEtran.bst: No hyphenation pattern has been}%
\typeout{** loaded for the language `#1'. Using the pattern for}%
\typeout{** the default language instead.}%
\else
\language=\csname l@#1\endcsname
\fi
#2}}
\providecommand{\BIBdecl}{\relax}
\BIBdecl

\bibitem{FSforKDD}
H.~Liu and H.~Motoda, \emph{Feature Selection for Knowledge Discovery and Data
  Mining}.\hskip 1em plus 0.5em minus 0.4em\relax New York: Springer Press,
  1998.

\bibitem{Lipo_WrapperFS}
L.~Wang, N.~Zhou, and F.~Chu, ``A general wrapper approach to selection of
  class-dependent features,'' \emph{IEEE Trans. Neural Netw.}, vol.~19, no.~7,
  pp. 1267--1278, Jul. 2008.

\bibitem{Nie_DLSR}
S.~Xiang, F.~Nie, G.~Meng, C.~Pan, and C.~Zhang, ``Discriminative least squares
  regression for multiclass classification and feature selection,'' \emph{IEEE
  Trans. Neural Netw. Learn. Syst.}, vol.~23, no.~11, pp. 1738--1754, Nov.
  2012.

\bibitem{Guyon03}
I.~Guyon and A.~Elisseeff, ``An introduction to variable and feature
  selection,'' \emph{J. Mach. Learn. Res.}, vol.~3, pp. 1157--1182, 2003.

\bibitem{Liu_GLSP}
X.~Liu, L.~Wang, J.~Zhang, J.~Yin, and H.~Liu, ``Global and local structure
  preservation for feature selection,'' \emph{IEEE Trans. Neural Netw. Learn.
  Syst.}, vol.~25, no.~6, pp. 1083--1095, Jun. 2013.

\bibitem{SDR}
C.~Hou, J.~Wang, Y.~Wu, and D.~Yi, ``Local linear transformation embedding,''
  \emph{Neurocomputing}, vol.~72, no.~10, pp. 2368--2378, 2009.

\bibitem{MVSSDR}
C.~Hou, C.~Zhang, Y.~Wu, and F.~Nie, ``Multiple view semi-supervised
  dimensionality reduction,'' \emph{Pattern Recogn.}, vol.~43, no.~3, pp.
  720--730, 2010.

\bibitem{Chen_DR}
J.~Chen, Z.~Ma, and Y.~Liu, ``Local coordinates alignment with global
  preservation for dimensionality reduction,'' \emph{IEEE Trans. Neural Netw.
  Learn. Syst.}, vol.~24, no.~1, pp. 106--117, Jan. 2013.

\bibitem{LingShao_FelLearn}
L.~Shao, L.~Liu, and X.~Li, ``Feature learning for image classification via
  multiobjective genetic programming,'' \emph{IEEE Trans. Neural Netw. Learn.
  Syst.}, vol.~25, no.~7, pp. 1359--1371, Jul. 2014.

\bibitem{Bishop_fisher}
C.~M. Bishop, \emph{Neural Networks for Pattern Recognition}.\hskip 1em plus
  0.5em minus 0.4em\relax New York: Oxford University Press, 1996.

\bibitem{ReliefF}
M.~Robnik-{\v{S}}ikonja and I.~Kononenko, ``Theoretical and empirical analysis
  of relieff and rrelieff,'' \emph{Mach. Learn.}, vol.~53, no. 1-2, pp. 23--69,
  2003.

\bibitem{Peng_mRMR}
H.~Peng, F.~Long, and C.~Ding, ``Feature selection based on mutual information
  criteria of max-dependency, max-relevance, and min-redundancy,'' \emph{IEEE
  Trans. Pattern Anal. Mach. Intell.}, vol.~27, no.~8, pp. 1226--1238, Aug.
  2005.

\bibitem{LapalcianScore}
X.~He, D.~Cai, and P.~Niyogi, ``Laplacian score for feature selection,'' in
  \emph{Proc. 19th Annu. Conf. Neural Inf. Process. Syst.}, Vancouver, British
  Columbia, Canada, Dec. 2005, pp. 507--514.

\bibitem{L1SVM}
P.~S. Bradley and O.~L. Mangasarian, ``Feature selection via concave
  minimization and support vector machines,'' in \emph{Proc. 15th Int. Conf.
  Mach. Learn.}, Madison, WI, Jul. 1998, pp. 82--90.

\bibitem{Weston_L0norm}
J.~Weston, A.~Elisseeff, B.~Sch{\"o}lkopf, and M.~Tipping, ``Use of the zero
  norm with linear models and kernel methods,'' \emph{J. Mach. Learn. Res.},
  vol.~3, pp. 1439--1461, 2003.

\bibitem{Rel_Definition}
G.~H. John, R.~Kohavi, and K.~Pfleger, ``Irrelevant features and the subset
  selection problem,'' in \emph{Proc. 11st Int. Conf. Mach. Learn.}, New
  Brunswick, NJ, Jul. 1994, pp. 121--129.

\bibitem{ASUfspackage}
Z.~Zhao, F.~Morstatter, S.~Sharma, S.~Alelyani, A.~Anand, and H.~Liu,
  ``Advancing feature selection research-asu feature selection repository,''
  School of Computing, Informatics, and Decision Systems Engineering, Arizona
  State University, Tempe, Arizona, Tech. Rep., 2010.

\bibitem{FSMRank}
H.-J. Lai, Y.~Pan, Y.~Tang, and R.~Yu, ``Fsmrank: Feature selection algorithm
  for learning to rank,'' \emph{IEEE Trans. Neural Netw. Learn. Syst.},
  vol.~24, no.~6, pp. 940--952, Jun. 2013.

\bibitem{JELSR}
C.~Hou, F.~Nie, X.~Li, D.~Yi, and Y.~Wu, ``Joint embedding learning and sparse
  regression: A framework for unsupervised feature selection,'' \emph{IEEE
  Trans. Cybern.}, vol.~44, no.~6, pp. 793--804, Jun. 2014.

\bibitem{Optic_DisFS}
P.~Padungweang, C.~Lursinsap, and K.~Sunat, ``A discrimination analysis for
  unsupervised feature selection via optic diffraction principle,'' \emph{IEEE
  Trans. Neural Netw. Learn. Syst.}, vol.~23, no.~10, pp. 1587--1600, Oct.
  2012.

\bibitem{Semi_DFS}
Z.~Xu, I.~King, M.-T. Lyu, and R.~Jin, ``Discriminative semi-supervised feature
  selection via manifold regularization,'' \emph{IEEE Trans. Neural Netw.},
  vol.~21, no.~7, pp. 1033--1047, Jul. 2010.

\bibitem{MMC-RFE}
S.~Niijima and S.~Kuhara, ``Recursive gene selection based on maximum margin
  criterion: a comparison with svm-rfe,'' \emph{BMC Bioinformatics}, vol.~7,
  no.~1, p. 543, 2006.

\bibitem{icml10_LDFS}
M.~Masaeli, J.~G. Dy, and G.~M. Fung, ``From transformation-based
  dimensionality reduction to feature selection,'' in \emph{Proc. 27th Int.
  Conf. Mach. Learn.}, Haifa, Israel, Jun. 2010, pp. 751--758.

\bibitem{Signal_nonconvex}
R.~Chartrand, ``Exact reconstruction of sparse signals via nonconvex
  minimization,'' \emph{IEEE Signal Process. Lett.}, vol.~14, no.~10, pp.
  707--710, Oct. 2007.

\bibitem{Chartrand_CS2008}
R.~Chartrand and W.~Yin, ``Iteratively reweighted algorithms for compressive
  sensing,'' in \emph{Proc. IEEE Int. Conf. Acoust., Speech, Signal Process.},
  Les Vegas, NV, Apr. 2008, pp. 3869--3872.

\bibitem{mixedL2p}
L.~Wang, S.~Chen, and Y.~Wang, ``A unified algorithm for mixed
  ${{\ell}_{2,p}}$-minimizations and its application in feature selection,''
  \emph{Comput. Optim. Appl.}, vol.~58, no.~2, pp. 409--421, 2014.

\bibitem{Nie_RFSL21}
F.~Nie, H.~Huang, X.~Cai, and C.~H. Ding, ``Efficient and robust feature
  selection via joint ${{\ell}_{2,1}}$-norms minimization,'' in \emph{Proc.
  24th Annu. Conf. Neural Inf. Process. Syst.}, Vancouver, British Columbia,
  Canada, Dec. 2010, pp. 1813--1821.

\bibitem{IntrStaPR}
K.~Fukunaga, \emph{Introduction to Statistical Pattern Recognition}.\hskip 1em
  plus 0.5em minus 0.4em\relax San Diego, CA: Academic Press, 1990.

\bibitem{Argyriou_multFL}
A.~Evgeniou and M.~Pontil, ``Multi-task feature learning,'' in \emph{Proc. 21st
  Annu. Conf. Neural Inf. Process. Syst.}, Vancouver, British Columbia, Canada,
  Dec. 2007, pp. 41--48.

\bibitem{MultFeaSel}
G.~Obozinski, B.~Taskar, and M.~Jordan, ``Multi-task feature selection,''
  Statistics Department, UC Berkeley, Tech. Rep., 2006.

\bibitem{Schatten_p}
F.~Nie, H.~Huang, and C.~H. Ding, ``Low-rank matrix recovery via efficient
  schatten p-norm minimization,'' in \emph{Proc. 26th AAAI Conf. Artif.
  Intell.}, Toronto, Ontario, Canada, Jun. 2012, pp. 655--661.

\bibitem{SongLe_HISC}
L.~Song, A.~Smola, A.~Gretton, K.~M. Borgwardt, and J.~Bedo, ``Supervised
  feature selection via dependence estimation,'' in \emph{Proc. 24th Int. Conf.
  Mach. Learn.}, Corvallis, OR, Jun. 2007, pp. 823--830.

\bibitem{LIBSVM}
C.-C. Chang and C.-J. Lin, ``Libsvm: a library for support vector machines,''
  \emph{ACM T. Intel. Syst. Tec.}, vol.~2, no.~3, p.~27, 2011.

\bibitem{DisUFS}
J.~Tang, X.~Hu, H.~Gao, and H.~Liu, ``Discriminant analysis for unsupervised
  feature selection,'' in \emph{Proc. 2014 {SIAM} Int. Conf. Data Min.},
  Philadelphia, PA, Apr. 2014, pp. 9--17.

\bibitem{TraceRatio}
F.~Nie, S.~Xiang, Y.~Jia, C.~Zhang, and S.~Yan, ``Trace ratio criterion for
  feature selection,'' in \emph{Proc. 23rd AAAI Conf. Artif. Intell.}, Chicago,
  IL, Jul. 2008, pp. 671--676.

\end{thebibliography}

\textbf{  }

\footnotesize{ \textbf{Hong TAO} is a PhD candidate with the College of Science at the National University of Defense Technology, Changsha, China. She earned her B.S. degree from the same university in 2012. Her research interests include machine learning, systems science and data mining.}

\footnotesize{ \textbf{Chenping HOU} (M'12) received his B.S. degree and the Ph.D. degree both in Applied Mathematics from the National University of Defense Technology, Changsha, China in 2004 and 2009, respectively. He is now a associate professor of College of Science in National University of Defense Technology. He has published several papers in the following journals and conferences: TNNLS/TNN, TSMCB, TIP, Pattern Recognition, IJCAI. He is also a member of IEEE and ACM. His research interests include pattern recognition, machine learning, data mining, and computer vision.}

\footnotesize{ \textbf{Feiping NIE} received the Ph.D. degree in Computer Science from Tsinghua University, China in 2009. His research interests are machine learning and its application fields, such as pattern recognition, data mining, computer vision, image processing and information retrieval. He has published more than 100 papers in the following top journals and conferences: TPAMI, IJCV, TIP, TNNLS/TNN, TKDE, TKDD, TVCG, TCSVT, TMM, TSMCB/TC, Machine Learning, Pattern Recognition, Medical Image Analysis, Bioinformatics, ICML, NIPS, KDD, IJCAI, AAAI, ICCV, CVPR, SIGIR, ACM MM, ICDE, ECML/PKDD, ICDM, MICCAI, IPMI, RECOMB. According to the Google scholar, his papers have been cited more than 2000 times. He is now serving as Associate Editor or PC member for
several prestigious journals and conferences in the related fields.}

\footnotesize{ \textbf{Yuanyuan JIAO} received the Ph.D. degree in Computer Science from in Applied Mathematics from the National University of Defense Technology, Changsha, China in 2012. Her research interests include data mining and its applications.}

\footnotesize{ \textbf{Dongyun YI} is Professor with the College of Science at the National University of Defense Technology, Changsha, China. He earned his B.S. degree from Nankai University, Tianjin, China and the M.S. and Ph.D. degrees from National University of Defense Technology in Changsha, China, respectively. He has worked as a visiting researcher at the University of Warwick in 2008. His research interests include statistics, systems science and data mining.}

\end{document}